\newif\ifAMAI \AMAIfalse
\newif\ifklptcompat\klptcompatfalse
\newif\ifGAIacycl\GAIacyclfalse
\title{An extended Knowledge Compilation Map for Conditional Preference Statements-based and Generalized Additive Utilities-based Languages}
\title{An extended Knowledge Compilation Map for Conditional Preference Statements-based and Generalized Additive Utilities-based Languages
\\[1em]Research Report IRIT/RR--2023--03--FR 
}
\date{December 1st, 2023}
\def\fnm#1{#1}\def\sur#1{#1}%
\def\email#1{#1}\def\affil#1{#1}
\newcommand{\istefan}[1]{\todo[inline,color=yellow]{Stefan: #1}}
\newcommand{\stefan}[1]{\todo[color=yellow]{Stefan: #1}}
\newcommand{\jerome}[1]{\todo[color=yellow]{J: #1}}
\newcommand{\ihel}[1]{\todo[inline,color=lime]{Hel: #1}}
\newcommand{\hel}[1]{\todo[color=lime]{Hel: #1}}
\newcommand{\modifLPTantisym}[1]{#1}%{{\color{magenta} #1}}
\newenvironment{acks}{\paragraph{Acknowledgements}}{\par}
\let\citep\cite %\newcommand{\doi}[1]{\href{#1}}
	\newtheorem{lemma}{Lemma}
	\newtheorem{definition}{Definition}
	\newtheorem{example}{Example}
\newcommand\MyDeclareUnicodeCharacter[3][]{\DeclareUnicodeCharacter{#2}{#3}}
\newcommand\mathnamestyle{\sffamily}
\gdef\makesoulignesouligne{\catcode`\_=13\def_{\_}}%
\gdef\makesoulignesubscript{\catcode`\_=8}}
\let\jmtempa\relax
\protected\def\mathnamedelim{\ifx\jmtempa\relax\mathnameinit
	\else\mathnameend\fi}
\def\mathnameinit{\ifmmode\mbox\bgroup\else\bgroup\fi
		\makesoulignesouligne\let\jmtempa\mathnamestyle\jmtempa}
\def\mathnameend{\egroup\let\jmtempa\relax
	\makesoulignesubscript}
\newcommand{\myalgocaption}{}
\newenvironment{algo}[2][Algorithm]
		{\par\vskip 1ex plus.2ex minus.2ex\noindent{\bf #1\ : #2}\enumerate}
		{\endenumerate}
\newlist{langlist}{description}{1}
\setlist[langlist]{labelindent=0pt,leftmargin=1.5em,labelsep=1ex,align=left}
\newcommand{\acycl}{\ensuremath{\mathord{\not \mathrel{\circlearrowright}}}}
\newcommand{\acycllex}{\ensuremath{\mathord{\not \mathrel{\circlearrowright}^{§lex§ }}}}
\newcommand{\acycllexk}{\ensuremath{\mathord{\not \mathrel{\circlearrowright}_k^{§lex§ }}}}
\newcommand{\acycllexkun}{\ensuremath{\mathord{\not \mathrel{\circlearrowright}_{k-1}^{§lex§ }}}}
\newcommand{\acycllexun}{\ensuremath{\mathord{\not \mathrel{\circlearrowright}_1^{§lex§ }}}}
\newcommand{\acyclpoly}{\ensuremath{\mathord{\not \mathrel{\circlearrowright}^{§poly§}}}}
\let\cal\mathcal
\newcommand\card[1]{\mathord{\vert #1\vert}}
\newcommand{\cclass}[1]{\ensuremath{\mbox{\textsf{#1}}}}
\newcommand\coNP{\ensuremath{\cclass{coNP}}}
\newcommand{\defquery}\relax \def \defquery#1#2!{\trivlist \item \pb{#1} \ #2 \endtrivlist}
\newcommand\dom{\underline}
\newcommand\domX{\dom{\cal X}}
\newcommand{\mexpr}{\mathrel{\sqsupset}}
\newcommand{\mexpreq}{\mathrel{\sqsupseteq}}
\newcommand{\mexprm}{\mathrel{\sqsubseteq\!\!\!\sqsupseteq}}
\newcommand{\msucc}{\ll}
\newcommand{\msucceq}{\leqq}%{<\mskip-7mu=}%{\leqslant}
\newcommand\N{\ensuremath{\mathbf N}}
\newcommand\nouv[1]{#1}%{{\color{purple}#1}}
\newcommand\NP{\ensuremath{\cclass{NP}}}
\newcommand\NPc{\ensuremath{\cclass{NPc}}}
\newcommand\NPh{\ensuremath{\cclass{NPh}}}
\newcommand{\NPSPACE}{\ensuremath{\cclass{NPSPACE}}}
\newcommand\Out{\tikz [x=1em,y=1em,line width=.1ex] \draw (0,0) -- (1,1) (0,1) -- (1,0);}
\newcommand{\sP}{\ensuremath{\cclass{\#P}}}
\renewcommand\P{\ensuremath{\cclass P}}
\renewcommand\phi{\varphi}
\newcommand{\pb}{\textsc}
\newcommand{\PSPACE}{\ensuremath{\cclass{PSPACE}}}
\newcommand{\PSPc}{\ensuremath{\cclass{PSPc}}}
\newcommand\vh[1]{\multicolumn1c{\rotatebox{\vhangle}{{\parbox{\vhheight}{#1}}}}} \newcommand\vhangle{90}
	\newlength{\vhheight}\setlength{\vhheight}{5em}
\tikzset{graphe/.style={x=4em,y=-4em,node distance=4em}}
\tikzstyle{pref}=[line width=1pt,>=latex,->]
\tikzstyle{pref}=[line width=1pt,>=latex,->]
\tikzset{var/.style={rectangle,rounded corners,draw,inner sep=1pt,outer sep=0pt,minimum size=1.4em}}
\DeclareMathSymbol{:}{\mathop}{operators}{"3A}
\DeclareMathDelimiter{|}{\mathop}{symbols}{"6A}{largesymbols}{"0C}
\DeclareMathSymbol{>}{\mathop}{letters}{"3E}
\DeclareMathSymbol{\leq}{\mathop}{symbols}{"14}
\DeclareMathSymbol{\geq}{\mathop}{symbols}{"15}
\DeclareMathSymbol{\ntriangleright} {\mathord}{AMSb}{"37}
  \renewcommand{\abstract}[1]{\begin{abstractenv}#1\end{abstractenv}}
  \newcommand\keywords[1]{}
\begin{document}

\ifAMAI

\abstract{Conditional preference statements have been used to compactly represent preferences over combinatorial domains. They are at the core of CP-nets and their generalizations, and lexicographic preference trees.  Several works have addressed the complexity of some queries (optimization, dominance in particular). We extend in this paper some of these results, and study other queries which have not been addressed so far, like equivalence, and transformations, like conditioning and variable elimination, thereby contributing to a knowledge compilation map for languages based on conditional preference statements. We also study the expressiveness and complexity of queries and transformations for generalized additive utilities.
%, and introduce a new parameterized family of languages, which enables to balance expressiveness against the complexity of some queries.
\stefan{il faut aussi parler des GAI déjà dans l'abstract, non ?}}

\keywords{Preferences; Knowledge Compilation; CP-nets}%, Lexicographic Preferences}

\author*[1]{\fnm{Hélène} \sur{Fargier}}\email{Helene.Fargier@irit.fr}

\author*[2]{\fnm{Stefan} \sur{Mengel}}\email{mengel@cril-lab.fr}
%\equalcont{These authors contributed equally to this work.}

\author*[1]{\fnm{Jérôme} \sur{Mengin}}\email{Jerome.Mengin@irit.fr}
%\equalcont{These authors contributed equally to this work.}

\affil[1]{IRIT, Université Paul Sabatier, CNRS, 118 route de Narbonne, 31062 Toulouse Cedex 9, France}

\affil[2]{Université d’Artois, CNRS, Centre de Recherche en Informatique de Lens (CRIL), Lens, France}

\maketitle

\else

\author{Hélène Fargier,\\\url{Helene.Fargier@irit.fr},
  \\IRIT, Université Paul Sabatier, CNRS, 118 route de Narbonne, 31062 Toulouse Cedex 9, France
\and Stefan Mengel,\\\url{mengel@cril-lab.fr},
  \\Université d’Artois, CNRS, Centre de Recherche en Informatique de Lens (CRIL), Lens, France
\and Jérôme Mengin,\\\url{Jerome.Mengin@irit.fr},
  \\IRIT, Université Paul Sabatier, CNRS, 118 route de Narbonne, 31062 Toulouse Cedex 9, France
}

\maketitle

\abstract{Conditional preference statements have been used to compactly represent preferences over combinatorial domains. They are at the core of CP-nets and their generalizations, and lexicographic preference trees.  Several works have addressed the complexity of some queries (optimization, dominance in particular). We extend in this paper some of these results, and study other queries which have not been addressed so far, like equivalence, and transformations, like conditioning and variable elimination, thereby contributing to a knowledge compilation map for languages based on conditional preference statements. We also study the expressiveness and complexity of queries and transformations for generalized additive utilities, and introduce a new parameterized family of languages, which enables to balance expressiveness against the complexity of some queries.\stefan{il faut aussi parler des GAI déjà dans l'abstract, non ?} This paper is an extended version of \citep{FargierMengin:AAMAS21} -- in addition to the results of \citep{FargierMengin:AAMAS21},  it contains a study of several  transformations (Section~\ref{sec:transfo}). We have also added the GAI language to the map.
}

\fi

%keywords{}

\begin{TODO}
\begin{itemize}
\item Check complexity results by \cite{LukasiewiczMalizia:art-int19}
\item \pb{equivalence}: is it hard for CP 1 conj no free vars k-lexico comp ?
\item Lien avec \textit{CI-statements} de \cite{BouveretEndrissLang:ijcai09}
\item \nouv{Communication de Stefan : $>$-cut-counting is NP-hard for acyclic CP-nets.}
\item Langages supplémentaires :
  \begin{itemize}
  \item §GAI dec.§\ = Generalized additive utility;
  \item $ §GAI dec.§_k = §GAU§ + $ borne $ k $ sur la taille des cliques
  \item §GAI§-net ?? (mais c'est quoi exactement ?)
  \end{itemize}
\end{itemize}
\end{TODO}

\begin{TODO}
\begin{itemize}
\item expressivité moyennant introduction de nouvelles attributes ?
\item langages logiques, cf \cite{CosteMarquisLangLiberatoreMarquis:kr04}
%\item mettre les GAI-nets sur la carte
\item succinctness entre 2 langages pas aussi expressifs
\item approx. sup. et/ou inf. d'un langage par un autre
%\item développer lien entre LPT et GAI, qui semblent être les deux langages intéressants ?
\item pour les requêtes, $ ≻ $ ou $ ≽ $ ?
	\begin{itemize}
	\item pour dominance : $ ≽ $, à partir de laquelle on recalcule les autres pour GAI, et qui est équivalente à $ ≻ $ pour les CP-statements;
	\item pour MX, ME: peut-on aussir calculer plus efficacement $ §CUT§^{≻} $ sans passer par $ §CUT§^{≽} $ ?
	\end{itemize}
\end{itemize}
\end{TODO}

\ifAMAI
\paragraph{Data availability statement} Data sharing not applicable to this article as no datasets were generated or analysed during the current study.
\fi

\section{Introduction}

Preference handling is a key component in several areas of Artificial Intelligence, notably for decision-aid systems. Research in Artificial Intelligence has led to the development of several languages that enable compact representation of preferences over complex, combinatorial domains. Some preference models rank alternatives according to their values given by some multivariate function; this is the case for instance with valued constraints \citep{SchiexFargierVerfaillie:ijcai95}, additive utilities and their generalizations \citep{GonzalesPerny:kr04,BraziunasBoutilier:uai05}. Ordinal models like CP nets and their generalizations~\citep{Boutilieretal:jair04,Wilson:aaai04,Brafmanetal:jair06}, or lexicographic preferences and their generalizations \citep{GigerenzerG:psych-reviewG96,SchmittMartignon:jmlr06,Wilson:ecai06,Boothetal:ecai10,BrauningHullermeier:pl12,FargierGimenezMengin:aaai18} use sets of conditional preference statements to represent a pre-order over the set of alternatives.

Many problems of interest, like comparing alternatives or finding optimal alternatives,  are NP-hard for many of these models, and in fact even PSPACE-hard for some of them, which makes these representations difficult to use in some decision-aid systems like configurators, where real-time interaction with a decision maker is needed. One approach to tackling this problem is Knowledge Compilation, which is a general approach in which a model, or a part of it, is \textit{compiled}, off-line, into another representation which enables fast query answering, even if the compiled representation has a much bigger size. This approach has first been studied in propositional logic: \cite{Darwiche:ijcai99,DarwicheMarquis:jair02} compare how various subsets of propositional logic can succinctly, or not, express propositional knowledge bases, and the complexity of queries of interest. \cite{CosteMarquisLangLiberatoreMarquis:kr04} follow a similar approach to compare extensions of propositional logic which associate real values to models of a knowledge base; \cite{FargierMarquisNiveauSchmidt:aaai14} consider value function-based models.

The aim of this paper is to initiate a \emph{compilation map} for representations on preferences. To this end, we systematically study and compare different languages of conditional preference statements and models based on Generalized Additive Utilities (called GAIs). In particular, we analyze the expressiveness and succinctness of various languages based on these conditional preference statements and on GAIs, and the complexity of several queries and transformations of interest.
%Some results about complexity are drawn from \citep{Goldsmithetal:jair08,Wilson:aij11}, some are generalized and reformulated in the context of this map XXXX REVOIR ÇA.

Section~\ref{sec:prelim} recalls some basic definitions about combinatorial domains and pre-orders, and introduces notation that we will use throughout.  Section~\ref{sec:language} gives an overview of various languages based on conditional preference statements that have been studied in the literature. It introduces first a general language of conditional preference statements, and recalls the language of Generalized Additive Utilities. The remainder of this section then presents various language restrictions that have been studied in the literature and offer interesting compromises between expressiveness and querying complexity.

%We also introduce a new parameterized family of languages, which allows a tradeoff between expressiveness and the complexity of some queries.

Section~\ref{sec:expressiveness} and~\ref{sec:succinctness} respectively study expressiveness and succinctness for the languages we study. Sections~\ref{sec:queries} and~\ref{sec:transfo} study the complexity of, respectively,
queries and transformations for these languages.

This paper is an extended version of \citep{FargierMengin:AAMAS21} -- in addition to the results of \citep{FargierMengin:AAMAS21},  it contains a study of several  transformations (Section~\ref{sec:transfo}). We have also added the GAI language to the map. (Unpublished) proofs are provided in the appendix.

\section{Preliminaries}\label{sec:prelim}

\subsection{Combinatorial Domains}

We consider languages that can be used to represent the preferences of a decision maker over a combinatorial space $ \dom {\cal X } $: here $ \cal X   $ is a set of attributes that characterize the possible alternatives, each attribute $ X ∈ \cal X  $ having a finite set of possible values $ \dom X $; we assume $ | \dom X | ≥ 2 $ for every $ X ∈ \cal X $; then $ \dom{\cal X } $ denotes the Cartesian product of the domains of the attributes in $ \cal X  $, its elements are called alternatives. For a binary attribute $ X $, we will often denote by $ x, \bar  x $ its two possible values. In the sequel, $ n $ is the number of attributes in $ \cal X $.

For a subset $ U $ of $ \cal X  $, we will denote by $ \dom U $ the Cartesian product of the domains of the attributes in $ U $. The elements of $\dom U$  are called called  instantiations of $ U $, or partial instantiations (of $ \cal X  $). If $ v $ is an instantiation of some $ V ⊆ \cal X  $, $ v[U] $ denotes the restriction of $ v $ to the attributes in $ V \cap U $; we say that instantiation $ u ∈ \dom U $ and $ v $ are compatible if $ v[U∩V] = u[U∩V] $; if $ U ⊆ V $ and $ v[U] = u $, we say that $ v $ extends $ u $.

Sets of partial instantiations can often be conveniently, and compactly, specified with propositional formulas: the atoms are $ X=x $ for every $ X ∈ \cal X  $ and $ x ∈ \dom X $, and we use the standard connectives $ ∧ $ (conjunction), $ ∨ $ (disjunction), $ → $ (implication), $ ↔ $ (equivalence) and $ ¬ $ (negation); we denote by $ ⊤ $ (resp. $ ⊥ $) the formula always true (resp. false). Implicitly, this propositional logic is equipped with a theory that enforces that every attribute has precisely one value from its domain; so, for two distinct values $ x, x' $ of attribute $ X $, the formula $ X=x ∧ X= x' $ is a contradiction; also, the interpretations are thus in one-to-one correspondence with $ \dom{\cal X } $. If $ α $ is such a propositional formula over $ \cal X  $ and $ o ∈ \dom{\cal X } $, we will write $ o ⊧ α $ when $ o $ satisfies $ α $, that is when, assigning to every literal $ X = x $ that appears in $ α $ the value $ §true§ $ if $ o[X] = x $, and the value $ §false§ $ otherwise, makes $ α $ true.

%We assume that the domains of the attributes in $ \cal X  $ are disjoint, so that,
Given a formula $ α $, or a partial instantiation $ u $,
%we can unambiguously define
$ §Var§(α) $ and $ §Var§(u) $ denote the set of attributes, the values of which appear in $ α $ and $ u $ respectively.

When it is not ambiguous, we will use $ x $ as a shorthand for the literal $ X=x $; also, for a conjunction of such literals, we will omit the $ ∧ $ symbol, thus $ X=x ∧ Y=\bar y $ for instance will be denoted $ x\bar y $.
\begin{notAAMAS}
Conversely, given a partial instantiation $ u ∈ ͟U $ for some $ U ⊆ \cal X $, $ u $ can also denote the formula $ ⋀_{X ∈ U} X=u[X] $; in particular, two partial instantiations $ u $ and $ v $ are compatible if and only if the formula $ u ∧ v $ is consistent, that is, if and only if $ u ∧ v ⊭ ⊥ $.
\end{notAAMAS}

\begin{commentaire}\mdfsubtitle{Si attributs binaires :}
For ease of presentation, we consider binary attributes only: for attribute $ X $, we denote by $ x $ and $ \bar x $ its two possible values. Sets of partial instantiations can often be conveniently, and compactly, specified with propositional formulas: the propositions are $ X=x $ for every $ X ∈ \cal X  $, % and $ x ∈ \dom X $,
and we use the standard connectives $ ∧ $ (conjunction), $ ∨ $ (disjunction), $ → $ (implication), $ ↔ $ (equivalence) and $ ¬ $ (negation). When it is not ambiguous, we will use $ x $ (respectively $ \bar x $) as a shorthand for the literal $ X=x $ (respectively for the literal $ ¬(X=x) $ that constrains $ X $ to take value $ \bar x $). When used in the context of a formula, a partial instantiation $ u ∈ \dom U $ is used as a shorthand for the formula $ ⋀_{X∈U} X=u[X] $. Given a formula $ α $, or a partial instantiation $ u $, $ §Var§(α) $ and $ §Var§(u) $ denote the set of attributes, the values of which appear in $ α $, or $ u $, respectively. 
\end{commentaire}

\subsection{Preference Relations}

%\subsubsection{Preorders}

Depending on the knowledge that we have about a decision maker's preferences, given any pair of distinct alternatives $ o,  o' ∈  \dom{\cal X } $, one of the following situations must hold: one may be strictly preferred over the other, or $ o $ and $ o' $ may be equally preferred, or $ o $ and $ o' $ may be incomparable.

Assuming that preferences are transitive, such a state of knowledge about the decision maker's preferences can be characterized by a preorder $ ≽ $ over $ \dom{\cal X } $, that is $ ≽ $ is a binary, reflexive and transitive relation. For alternatives $ o, o' $, we write
{\let\item\relax%\begin{itemize}
\item $ o ≽ o' $ when $ (o,o') ∈ \mathord{≽} $;
\item $ o ≻ o' $ when $ (o,o') ∈ \mathord{≽} $ and $ (o',o) ∉ \mathord{≽} $;
\item $ o ∼ o' $ when $ (o,o') ∈ \mathord{≽} $ and $ (o',o) ∈ \mathord{≽} $;
\item $ o ⋈ o' $ when $ (o,o') ∉ \mathord{≽} $ and $ (o',o) ∉ \mathord{≽} $.
}%\end{itemize}
Note that for any pair of alternatives $ o, o' ∈ \dom{\cal X } $ either $ o ≻ o' $, or $ o' ≻ o $, or $ o ∼ o' $ or $ o ⋈ o' $ holds.

The relation $ ∼ $ defined in this way is called the \emph{symmetric part} of $ ≽ $; it is symmetric, reflexive and transitive. The relation $ ⋈ $ is symmetric and irreflexive. The relation $ ≻ $ is called the \emph{asymmetric part} of $ ≽ $, and is what is usually called a strict partial order, i.e., it is irreflexive, transitive and asymmetric.

When the preorder $ ≽ $ is complete, that is, when it is the case that $ o ≽ o' $ or $ o' ≽ o $ for every pair of alternatives $ (o,o') $, it is called a \emph{weak order}. A strict partial order that is complete is called a \emph{linear order}.

When the preorder $ ≽ $ is antisymmetric, that is when $ o ∼ o' $ only when $ o=o' $, then it is called a \emph{partial order}.

\paragraph{Terminology and notation}
We say that an alternative $ o $ \emph{dominates} an alternative $ o ' $ (w.r.t.~$ ≽ $) if and only if $ o ≽ o' $. If $ o ≻ o' $, then we say that $ o $ \emph{strictly dominates} $ o' $.
We use standard notation for the complements of $ ≻ $ and $ ≽ $: we write $ o ⋡ o' $ when it is not the case that $ o ≽ o' $,  and $ o ⊁ o' $ when it is not the case that $ o ≻ o'$. Given two preorders $ ≽ $ and $ ≽' $, we say that $ ≽ $ \emph{extends} $ ≽' $ when $ o ≽' o $ implies $ o ≽ o'$, for every pair of alternatives $ o,o' $.
%We will denote by $ ≼ $ (respectively $ ≺ $) the dual of $ ≽ $ (resp. of $ ≻ $), also called its converse or transpose: $ o ≼ o' $ if and only if $ o' ≽ o $, and $ o ≺ o' $ iff $ o' ≻ o $. Note that since $ ⋈ $ and $ ∼ $ are symmetric, they are equal to their dual.

%\subsubsection{Total preorders}
%
%Other languages designed to represent preferences (like real-valued VCSPs) naturally define preorders that are connected: for every pair of distinct alternatives $ o, o' $ it is the case that at least one of $ o ≽ o' $ or $ o' ≽ o $ holds; in other words, the $ ⋈ $ relation is empty. Connected preorders are also called total preorders, or weak orders. Similarly, if a strict partial order $ ≻ $ is connected, then it is called a strict total order, or linear order.

\section{Languages}
\label{sec:language}

\subsection{Conditional Preference Statements}

A \emph{conditional preference statement} (short \emph{CP statement}) over $ \cal X  $ is an expression of the form $ α | V : w ≥ w' $, where $ α $ is a propositional formula over $ U ⊆ \cal X  $, $ w, w' ∈ \dom W $ are such that $ w[X] ≠ w'[X] $ for every $ X ∈ W $, and $ U, V, W $ are disjoint subsets of $ \cal X  $, not necessarily forming a partition of $ \cal X $.  Informally, such a statement represents the piece of knowledge that, when comparing alternatives $ o, o' $ that both satisfy $ α $, the one that has values $ w $ for $ W $ is preferred to the one that has values $ w' $ for $ W $, irrespective of the values of the attributes in $ V $, every attribute in $ \cal X ∖ (V ∪ W) $ being fixed. We call $ α $ the conditioning part of the statement; we call $ W $ the swapped attributes, and $ V $ the free part.

\begin{example}[({Example A in \cite{Wilson:aij11},} slightly extended)]\label{exple:CP-th}
Consider planning a holiday, with three choices / attributes: wait until next month ($W=w$) or leave now ($W=\bar w$), going to city 1, 2 or 3 ($ C=c₁ $, $ C=c₂ $ or $ C=c₃ $), travelling by plane ($ P=p $) or by car ($P=\bar p$). I would rather go now, irrespective of the other attributes: $ ⊤|\{CP\}:\bar w≥w $. All else being equal, I prefer to go to city 3, city 1 being my second best choice: $ ⊤|∅:c₃ ≥ c₁ ≥ c₂ $. Also, if I go now, I prefer to fly: $ \bar w|∅:p ≥ \bar p $. Together, the last two statements imply that if I go now, I prefer to go to city 3 by plane than go to city 1 by car; however these statements do not say what I prefer between flying to city 1 or driving to city 3. In fact, I prefer the former, this \textit{tradeoff} can be expressed with the statement $ \bar w|∅:c₁p ≥ c₃\bar p $. Finally, if I go later, I prefer to drive, irrespective of the city: $ w|\{C\}:\bar p ≥p $.
\end{example}

Conditional preference statements have been studied in many works, under various language restrictions.
% that we will make explicit in the sequel.
They are the basis for CP-nets \citep{BoutilierBrafmanHoosPoole:uai99,Boutilieretal:jair04} and their extensions, and have been studied in a more logic-based fashion by e.g. \cite{Goldsmithetal:jair08,Wilson:aaai04,Wilson:ecai04,Wilson:aij11}.
%In all these works, a syntactic restriction is put on $ W $: it must be the case that $ | W | = 1 $. Also, \cite{Goldsmithetal:jair08} do not consider any free part ($ V = ∅ $), and \cite{Wilson:aaai04,Wilson:ecai04,Wilson:aij11} only considers statements with a conjunctive conditioning part ($ α $ must be a consistent conjunction of literals).
\footnote{The formula $ u | V : x≥x' $ is written $ u:x>x' [V] $ by \cite{Wilson:aij11}.}
Closely related to them are the \textit{Conditional Importance statements} studied in \cite{BouveretEndrissLang:ijcai09}.
%In the language of CP-theories, a formula is a set of such statements.

For the semantics of sets of CP statements, we use the definitions of \cite{Wilson:aij11}. Given a statement $ α|V:w≥w' $, let $ U = §Var§(α) $ and $ W = §Var§(w) = §Var§(w') $: a \emph{worsening swap} is any pair of alternatives $ (o, o') $ such that $ o[U] = o'[U] ⊧ α $, $ o[W] = w $ and $ o'[W] = w' $, and such that for every attribute $ Y ∉ U ∪ V ∪ W  $ it holds that $ o[Y] = o'[Y] $; we say that $ α|V:w≥w' $ \emph{sanctions} $ (o,o') $.
%
%The semantics of a set $ φ $ of conditional preference statements can be defined as follows: consider a pair of alternatives $ (o, o') $ such that there is a statement $ α|V:w≥w' ∈ φ $ with $ o[U] = o'[U] ⊧ α $, $ o[W] = w $ and $ o'[W] = w' $, and such that for every attribute $ Y ∉ U ∪ V ∪ W  $ it holds that $ o[Y] = o'[Y] $; following \cite{Wilson:aij11}  we say that  $ (o,o') $ is a \emph{worsening swap}.
%%, and we write $ o ↪_φ o' $.
%We also say that the statement $ α|V:w≥w' ∈ φ $ \emph{sanctions} $ (o,o') $.
%
For a set of CP-statements $ φ $, let $ φ^* $ be the set of all worsening swaps sanctioned by statements of φ, and define $ ≽_φ $ to be the reflexive and transitive closure of $ φ^* $.  \cite{Wilson:aij11} proves that $ o ≽_φ o' $ holds if and only if $ o = o' $ holds or $ φ^* $ contains a finite sequence of worsening swaps $ (oᵢ, o_{i+1})_{0≤i≤k-1} $ with $ o₀ = o $ and $ oₖ = o' $.\footnote{ Actually, \cite{Wilson:aij11} proves that $ (o,o') $ is in the transitive closure of $ φ^* $ if and only there is such a worsening sequence from $ o $ to $ o' $, but adding the reflexive closure to this transitive closure does not change the result, since we can add any pair $ (o,o) $ to, or remove it from, any sequence of worsening swaps without changing the validity of the sequence.}

\begin{example}[Example~\ref{exple:CP-th}, continued] \label{exple:CP-thbis}
Let
$$ φ = \left\{ \begin{array}{c}
  ⊤|\{CP\}:\bar w≥w \ , \ ⊤|∅:c₃ ≥ c₁ ≥ c₂ , \\
  \bar{w}|∅:p ≥ \bar p \ , \ \bar w|∅:c₁p ≥ c₃\bar p \ ,\ w|\{C\}:\bar p ≥p \} 
\end{array} \right\}. $$
Then   $⊤|\{CP\}:\bar w≥w $ sanctions for instance $ (\bar wc₂p, wc₃\bar p) $, so $ \bar wc₂p ≽_φ wc₃\bar p $. Also, $ ⊤|∅:c₃ ≥ c₁ ≥ c₂ $ sanctions $ (\bar wc₁p,\bar wc₂p) $, $ \bar w|∅:p ≥ \bar p$ sanctions $ (\bar wc₂p,\bar wc₂\bar p) $, so, by transitivity, $ \bar wc₁p ≽_φ \bar wc₂\bar p $. It is not difficult to check that $ \bar wc₂p ⋈_φ\bar wc₁\bar p $.
\end{example}

Let us call §CP§\ the language where formulas are sets of statements of the general form $ α | V : w ≥ w' $. This language is very expressive: it is possible to represent any preorder ``in extension'' with preference statements of the form $ o ≥ o' $ -- they have $ W = \{ X | o[X] ≠ o'[X] \} $ as set of swapped attributes, $ α = o[U] = o'[U] $ as condition where $ U = \{ X | o[X] = o'[X] \}  $, and no free attribute. 

This expressiveness has a cost: we will see that many queries about pre-orders represented by §CP§-statements are \PSPACE-hard for the language §CP§. Several restrictions / sublanguages have been studied in the literature, we review them below.

%Note that formulas in §CP§\ are not required to verify any form of consistency or completeness; such conditions, as well other restrictions, will be imposed for some sublanguages defined below.
% In the remainder of this section, we introduce various restrictions on formulas.
%Table~\ref{table:queries} gives an overview of the restrictions that we introduce in the remainder of this section, as well as some complexity results that will be detailed in section~\ref{sec:queries}.

\paragraph{(Strict) Consistency} Although the original definition of CP-nets by \cite{BoutilierBrafmanHoosPoole:uai99} does not impose it, many works on CP-nets, especially following \cite{Boutilieretal:jair04}, consider that they are intended to represent a strict partial order, that is, that $ ≽_φ $ should be antisymmetric.
% WRONG : ; equivalently, this means that the asymmetric part $ ≻_φ $ of $ ≽_φ $ can be extended to a linear order.
We say that a set $ φ $ of CP-statements is \emph{consistent} in this case. Note that in this case, for two different alternatives $ o $ and $o' $, $ o ≽_φ o'$ implies that $ o ≻_φ o' $.

%
%\footnote{Such sets of CP-statements are often called \emph{consistent} in the standard terminology on CP-nets, but we prefer to depart from this definition which only makes sense when one asserts that $ φ $ should indeed represent a strict partial order.}

\paragraph{Notation}

\begin{obsolete}
For statement $ s = α | V : w ≥ w' $, we denote by $ s^d $ the statement where $ w $ and $ w' $ are exchanged, we call it the \emph{dual} of $ s $: $ s^d =  α | V : w' ≥ w $; and, if $ φ $ is a set of CP statements, $ φ^d $ will denote the set of the duals of the statements of $ φ $, we call it the dual of $ φ $. It can be checked that the relation $ ≽_{φ^d} $ is the dual of $ ≽_φ $, that is, $ ≽_{φ^d} = ≼_φ $; and similarly, $ ≻_{φ^d} = ≺_φ $. Note that $ φ^d $ can be computed in linear time from $ φ $.
\end{obsolete}

We write $ α : w ≥ w' $ when $ V $ is empty, and $ w ≥ w' $ when $ V $ is empty and $ α = ⊤ $. Note that we reserve the symbol $ ≥ $ for conditional preference statements, whereas ``curly'' symbols $ ≻ $, $ ⊁ $, $ ≽ $, $ ⋡ $ are used to represent relations over the set of alternatives.
%Also, given two sets of CP statements $ φ $ and $ ψ $, we will write $ φ ⊧ ψ $ whenever it is the case that $ ψ^* ⊆ φ^* $.

%Given a set of CP statements $ φ $, for any $ R ∈ \{≻,≽,≺,≼,∼,⋈\} $, we use $ §CUT§_φ^{R,μ} $ as a shorthand for  $ §CUT§^{R,μ}(≽_φ) $ for $ μ = $ any alternative $ o $ or $ μ = \max,\min $.

%In the remainder of this section, we introduce various restrictions on formulas. Table~\ref{table:queries} gives an overview of these restrictions, as well as some complexity results that will be detailed in section~\ref{sec:queries}.

In the remainder of this section, we present various sublanguages of §CP§. Some are defined by imposing various simple syntactical restrictions on the formulas, two are languages which have been well studied (CP-nets and lexicographic preference trees).
% SUPPRIMÉ car on ne parle plus des k-lexico-compat : we close the section by introducing a new, parameterized class of sublanguages of §CP§\ which have interesting properties, as will be shown in subsequent subsections.

\subsection{Statement-wise Restrictions}

Some restrictions are on the syntactical form of statements allowed; they bear on the size of the set of free attributes, or on the size of the set of swapped attributes, or on the type of conditioning formulas allowed. Given some language $ \cal L  ⊆ §CP§ $, we define the following restrictions:
\begin{langlist}

\item[$ \cal L ⋫ $ =] only formulas with empty free parts ($ V  = ∅ $) for every statement;\footnote{In the literature, the symbol $ ⊳ $ is sometimes used to represent an \textit{importance} relation between attributes; and, as explained by \cite{Wilson:aij11}, statement $ α | V : w ≥ w' $ is a way to express that attributes in $ §Var§(w) $ are more important than those in $ V $ (when $ α $ is true).}
\item[$ \cal L ∧ $ =] only formulas where the condition $ α $ of every statement is a conjunction of literals;

% Most works on this kind of preference statements however put a very strong restriction on $ | W | $. Let us call, for any fixed $ k $:

\item[$ §k-§\cal L $ =] only formulas where the set of swapped attributes contains no more than $ k $ attributes ($ |W| ≤ k $) for every statement; in particular, we call elements of §1-CP§\ \textit{unary} statements.

\end{langlist}
In particular, $ §1-CP∧§ $ is the language studied by \cite{Wilson:aij11}, and $ §1-CP⋫§ $ is the language of generalized CP-nets as defined by \cite{Goldsmithetal:jair08}.

\subsection{Graphical Restrictions} \label{sec:graph-restr}

Given $ φ ∈ §CP§ $ over set of attributes $ \cal X  $, we define $ D_φ $ as the graph with sets of vertices $ \cal X  $, and such that there is an edge $ (X,Y) $ if there is $ α | V : w≥w' ∈ φ $ such that $ X ∈ §Var§(α) $ and $ Y ∈ §Var§(w) $, or $ X ∈ §Var§(w) $ and $ Y ∈ V $. We call $ D_φ $ the \emph{dependency graph} of $ φ $. Note that $ D_φ $ can be computed in polynomial time. This definition,  inspired by \citep[Def. 15]{Wilson:aij11}, generalizes that of \cite{Boutilieretal:jair04}, which is restricted to the case where all CP statements are unary and have no free attributes, and that of  \cite{Brafmanetal:jair06}, who study statements with free attributes. 
Many tractability results on sets of CP statements have been obtained when $ D_φ $ has good properties. Given some language $ \cal L  ⊆ §CP§ $, we define:
\begin{langlist}

\item[$\cal L  \acycl $ =] the restriction of $ \cal L  $ to \textit{acyclic} formulas, which are those $ φ $ such that $ D_φ $ is acyclic;\footnote{This is \textit{full acyclicity} in \citep{Wilson:aij11}.}

\item[$\cal L  \acyclpoly $ =] the restriction of $ \cal L  $ to formulas where the dependency graph is a polytree.

%\item $\cal L  \acyclcuc $ the restriction of $ \cal L  $ to cuc-acyclic formulas, which are those $ φ $ such that for every alternative $ o $, $ D_φ^{§uncond§} ∪ D_φ^{§cond§}(o) $ is acyclic.\footnote{This definition is weaker than the one given by \cite{Wilson:aij11}, who also imposes local consistency as will be defined shortly; it corresponds to the definition of \textit{conditional acyclicity} as given by \cite{Brafmanetal:jair06}.} 

\end{langlist}

\ifklptcompat
\cite{Wilson:aij11} also defines a weaker graphical restriction, called  ``context-uniform conditional acyclicity'', but it turns out that it does gives rise to the same complexities as another, weaker restriction called ``conditional acyclicity'' by \cite{Wilson:aij11}, which we generalize in section~\ref{sec:lex-comp}.
\fi

%Moreover, in the case of $ §1-CP⋫§ $, the $ D_φ^{§cond§}(o) $'s are all empty, so cuc-acyclicity reduces to $ D_φ $ being acyclic, and $ D_φ $ is the set of all $ (X,Y) $ such $ φ $ contains some statement $ u:y>y' $ with $ X ∈ §Var§(u) $ -- which is the definition used by \cite{Boutilieretal:jair04}.

%In the more general case of $ §1-CP§ $, checking cuc-acyclicity can be hard \citep[Th. 3]{Brafmanetal:jair06}, \citep[Prop. 24]{Wilson:aij11}.

\begin{TODO}
Check what is the weakest condition that guarantees that dominance can be checked in polynomial time for CP nets.
% * $ \cal L  \tree $ the set of formulas $ φ $ for which $ D_{φ} ∪ D_φ^{§cond§}(o) $ is a tree for every alternative $ o $.
\end{TODO}

\subsection{§CP§-nets}

In their seminal work,  \cite{Boutilieretal:jair04} define a CP-net over a set of attributes $ \cal X  $ to be composed of two elements:
\begin{enumerate}

\item a directed graph over $ \cal X  $, which should represent \textit{preferential dependencies} between attributes;\footnote{Given some pre-order $ ≽ $ over $ \cal X  $, attribute $ X $ is said to be preferentially dependent on attribute $ Y $ if there exist $ x, x' ∈ \dom X $, $ y, y' ∈ \dom Y $, $ z  ∈ \dom {\cal X  ∖ (\{X,Y\})} $ such that $ xyz ≽_φ x'yz $ but  $ xy'z ⋡_φ x'y'z $.}

\item a set of conditional preference tables, one for every attribute $ X $: if $ U $ is the set of parents of $ X $ in the graph, the conditional preference table for $ X $ contains exactly $ |\dom U| $ rules $ u:≥ $, for every $ u ∈ \dom U $, where the $ ≥ $'s are linear orders over $ \dom X $.

\end{enumerate}
Therefore, as shown by \cite{Wilson:aij11}, CP-nets can be seen as sets of unary §CP§\ statements in conjunctive form with no free attribute. Specifically, given a CP-net $ \cal N  $ over $ \cal X  $, define $ φ_{\cal N}  $ to be the set of all CP statements $ u:x ≥ x' $, for every attribute $ X $, every $ u \in \dom U $ where $ U $ is the set of parents of $ X $ in the graph, every $ x, x' \in \dom X $ such that $ x,x' $ are consecutive values in the linear order $ ≥ $ specified by the rule $ u:≥ $ of $ \cal N  $. Then the dependency graph of $ φ_{\cal N} $, as defined in Section~\ref{sec:graph-restr}, coincides with the graph of $ \cal N $. We call

\begin{langlist}
\item[$ §CPnet§ $ =] the language that contains all $ φ_{\cal N}  $, for every CP-net $ \cal N  $.
\end{langlist}

Note that $ §CPnet§ ⊆ §1-CP§\mathord{∧}⋫ $. For a given $ φ ∈ §1-CP§\mathord{∧}⋫ $, being a CP-net necessitates a very strong form of local consistency and completeness: %it must the case that,
for every attribute $ X $ with parents $ U $ in $ D_φ $, for every $ u ∈ \dom{U} $, for every $ x, x' ∈ \dom X $, φ must explicitly, and uniquely, order $ ux $ and $ ux' $.

%, is a locally complete and locally consistent set of §1-CP∧⋫§\ statements. An important contribution of  \cite{Boutilieretal:jair04} is to consider a directed graph over $ \cal X  $ that shows some independences between attributes: when the graph is sparse, preferences have a compact representation; and when the graph has some good  is  can be used to facilitate inference. Specifically,
%In the sequel, we call:
%* §CPnet§\  the restriction of §1-CP⋫§\ to theories with this

%Note that $ §CPnet§ ⊆ §1-CP⋫∧§\LCC $.
\cite{Brafmanetal:jair06} define TCP-nets as an extension of CP-nets where it is possible to represent tradeoffs, by stating that, under some conditions, some attributes are more important than other ones. \cite{Wilson:aij11} describes how TCP-nets can be transformed, in polynomial time, into equivalent sets of §1-CP∧§\ statements.

\subsection{Lexicographic Preference Trees}

LP-trees generalize lexicographic orders, which have been widely studied in decision making -- see e.g.~\cite{Fishburn:managsc74}. As an inference mechanism, they are equivalent to search trees used by \cite{Boutilieretal:compint04}, and formalized by \cite{Wilson:ecai04,Wilson:aij11}. As a preference representation, and elicitation, language, slightly different definitions for LP-trees have been proposed by \cite{Boothetal:ecai10,BrauningHullermeier:pl12,FargierGimenezMengin:aaai18}. We use here a definition which subsumes the others.

\tikzset{LPTnode/.style={draw,ellipse,inner sep=2pt,anchor=base,outer sep=0pt}}
\tikzset{LPTtable/.style={draw,rectangle,rounded corners,inner sep=2pt,anchor=base,outer xsep=5pt}}
\begin{figure}
\begin{tikzpicture}[x=2em,y=-2em]
\node[LPTnode] (W) at (0,0) {$W$}; \node[LPTtable,anchor=west] at (W.east) {$\bar w≥w$};
\node[LPTnode] (CP) at (-1,1) {$CP$}; \node[LPTtable,anchor=east] at (CP.west)
  {$\array c c₃p ≥ c₁p ≥ c₃\bar p ≥ c₁\bar p ≥ c₂\bar p\\c₁p ≥ c₂p ≥ c₂\bar p \endarray$};
\node[LPTnode] (P) at (1,1) {$P$}; \node[LPTtable,anchor=west] at (P.east) {$\bar p ≥ p$};
\node[LPTnode] (C) at (1,2) {$C$}; \node[LPTtable,anchor=west] at (C.east) {$c₃ ≥ c₁ ≥ c₂$};
\draw (CP) to node[below,pos=.9] {$\bar w$} (W)to node[below,pos=.15] {$w$}(P)--(C);
\end{tikzpicture}
\caption{An LP-tree equivalent to the set of CP-statements of Example~\ref{exple:CP-thbis}.}
\label{fig:LPtree}
\end{figure}

An LP-tree that is equivalent to the set of CP-statements of Example~\ref{exple:CP-thbis} is depicted on Figure~\ref{fig:LPtree}. More generally, an LP-tree over $ \cal X  $ is a rooted tree with labelled nodes and edges, and a set of preference tables; specifically
\begin{itemize}

\item every node $ N $ is labelled with a set of attributes, denoted $ §Var§(N) $;

\item if $ N $ is not a leaf, it can have one child, or $ | \dom {§Var§(N)} | $ children;

\item in the latter case, the edges that connect $ N $ to its children are labelled with the instantiations in $ \dom {§Var§(N)} $;

\item if $ N $ has one child only, the edge that connects $ N $ to its child is not labelled: all instantiations in $ \dom {§Var§(N)} $ lead to the same subtree;

\item we denote by $ §Anc§(N) $ the set of attributes that appear in the nodes between the root and $ N $ (excluding those at $ N $), and by $ §Inst§(N) $ (resp. $ §NonInst§(N) $) the set of attributes that appear in the nodes above $ N $ that have more than one child (resp. only one child);

%* a conditional preference table $ §CPT§(N) $ is associated with $ N $: it contains local preference rules of the form $ u: ≥ $, where $ ≥ $ is a linear order over $ \dom W $, and $ u $ is an instantiation of some set of attributes $ U ⊆ §NonInst§(N) $, where $ §NonInst§(N) $ is the set of attributes that appear in the nodes above $ N $ that have only one child.

\item a conditional preference table $ §CPT§(N) $ is associated with $ N $: it contains local preference rules of the form $ α: ≥ $, where $ ≥ $ is a %preorder
% changé - Nov 29th, 2023
\modifLPTantisym{partial order} over $ \dom {§Var§(N)} $, and $ α $ is a propositional formula over some attributes in $ §NonInst§(N) $.
\end{itemize}

We assume that the rules in $ §CPT§(N) $ define their preorder over $ \dom{§Var§(N)} $ in extension. Additionally, two constraints guarantee that an LP-tree $ φ $ defines a unique preorder over $ \dom{\cal X } $:

\begin{itemize}
\item no attribute can appear at more than one node on any branch of $ φ $; and,

%* at every node $ N $ of $ φ $, for every $ u' ∈ \dom{§NonInst§(N)} $,  $ §CPT§(N) $ must contain exactly one rule $ u:≥ $ such that $ u' $ extends $ u $.

\item at every node $ N $ of $ φ $, for every $ u ∈ \dom{§NonInst§(N)} $,  $ §CPT§(N) $ must contain exactly one rule $ α:≥ $ such that $ u ⊧ α $.

\end{itemize}
Given an LP-tree $ φ $ and an alternative $ o ∈ \dom{\cal X } $, there is a unique way to traverse the tree, starting at the root, and along edges that are either not labelled, or labelled with instantiations that agree with $ o $, until a leaf is reached. Now, given two distinct alternatives $ o, o' $, it is possible to traverse the tree along the same edges as long as $ o $ and $ o' $ agree, until 
%
% définition complétée pour le cas où on n'atteint pas de nœud qui décide {o,o'} (nov. 2023).
%
either a leaf node is reached, or
a node $ N $ is reached which is labelled with some $ W $ such that $ o[W] ≠ o'[W] $: in the latter case, we say that $ N $ decides $ \{o,o'\} $.

In order to define $ ≽_φ $ for some LP-tree $ φ $, let $ φ^* $ be the set of all pairs of distinct alternatives $ (o,o') $ such that there is a node $ N $ that decides $ \{o,o'\} $ and the only rule $ α:≥ ∈ §CPT§(N) $ with $ o[§NonInst§(N)] \linebreak[2]= o'[§NonInst§(N)] ⊧ α $ is such that $ o[W] ≥ o'[W] $. Then $ ≽_φ $ is the reflexive closure of $ φ^* $. Note that if there is no node that decides $ \{o,o'\} $, or if the node that decides that pair is labelled with some $ W $ and if the local preference table is such that $ o[W] $ and $ o'[W] $ are incomparable, then $ o ⋈_φ o' $.

%... XXXX .... so that $ o $ and $ o' $ will be ordered in $ ≽_φ $ as specified at $ N $. Note that, if we let $ §Anc§(N) $ be the set of attributes that label nodes between the root and $ N $ (excluding $ N $), then $ o[§Anc§(N)] = o'[§Anc§(N)] $. Therefore, since $ §NonInst§(N) ⊆ §Anc§(N) $, $ §CPT§(N) $ contains exactly one rule $ α:≥ $ such that $ o[§NonInst§(N)] = o'[§NonInst§(N)] ⊧ α $: we define $ o ≻_φ o' $ if $ o[W] ≥ o'[W] $, $ o' ≻_φ o $ if $ o'[W] ≥ o[W] $, $ o' ∼_φ o $ if $ o[W] ≥ o'[W] $ and $ o'[W] ≥ o[W] $, and $ o ⋈_φ o' $ otherwise (recall that $ ≥ $ is a preorder over $ \dom W $).

%Note that there can be only one node that decides a given pair of alternatives $ \{o,o'\} $. If $ o ≠ o' $ and no node decides $ \{o,o'\} $, then $ o ⋈_φ o' $. Also, we set $ o ∼_φ o'  $ for every alternative $ o ∈ \dom{\cal X } $.

\begin{propositionrep} 
\label{prop:LPT=preorder}
Let $ φ $ be an LP-tree over $ \cal X  $, then $ ≽_φ $ as defined above is a %preorder.
%
% changé nov 2023
\modifLPTantisym{partial order}.
Furthermore, $ ≽_φ $ is a linear order if and only if 1) every attribute appears on every branch and 2) every preference rule specifies a linear order.
\end{propositionrep}

\begin{proofsketch}
This property is a generalisation of Proposition~1 by \cite{boothetal:rep-irit09}, which was restricted to unary LP-trees with conjunctive conditions and indexed tables. However, it is not difficult to see that their proof still works in this more general settings.
\end{proofsketch}

\begin{appendixproof}
By definition, $ ≽_φ $ is reflexive.
\modifLPTantisym{The fact that it is antisymmetric follows from the antisymmetry of the local preference relations in the conditional preference tables.}
For transitivity, the proof given by \cite{boothetal:rep-irit09} is for a restricted family of LP-trees, so we recast it here for our more general family of LP-trees. Suppose that $ o ≽_φ o'  ≽_φ o'' $ and $ o $, $ o' $, $ o'' $ are distinct. There must be a node $ N $ at which $ \{o,o'\} $ is decided, let $ W $ be the set of attributes that labels $ N $, then $ o[§Anc§(N)] =o'[§Anc§(N)] $, and there is one rule $ α : ≥ $ such that $ o[§NonInst§(N)] = o'[§NonInst§(N)] ⊧ α $ and $ o[W] ≥ o'[W] $. Similarly, let $ N' $ be the node at which $ \{o',o''\} $ is decided, let $ W' $ be the set of attributes that labels $ N' $, then $ o[§Anc§(N')] =o'[§Anc§(N')] $, and there is one rule $ α' : ≥' $ s.t. $ o'[§NonInst§(N')] = o''[§NonInst§(N')] ⊧ α' $ and $ o'[W'] ≥' o''[W'] $.
If $ N = N' $, then $ o[§Anc§(N)] =o'[§Anc§(N)] \linebreak[2]= o''[§Anc§(N)] $, and $ o[W] > o'[W] > o''[W] $ since $ ≥ $ is antisymmetric, thus $ o[W] > o''[W] $ because $ ≥ = ≥' $ is also transitive, hence $ N $ decides $ \{o,o''\} $ and $ o ≽_φ o'' $.
If $ N ≠ N' $, note that both nodes are in the unique branch in $ φ $ that corresponds to $ o' $, so one of $ N $, $ N' $ must be above the other. Suppose that $ N  $ is above $ N' $, then, it must be the case that $ o'[W] = o''[W] $,  and $ o[W] ≠ o'[W] $, thus $ N $ decides $ \{o,o''\} $; moreover, since $ §NonInst§(N) ⊆ §NonInst§(N') $,  $ o[§NonInst§(N)] = o'[§NonInst§(N)] = o''[§NonInst§(N)] ⊧ α $, and $ o[W] ≥ o'[W]  = o''[W]  $; hence $ o ≽_φ o'' $. The case where $ N' $ is above $ N $ is similar.

For the second part of the proposition, suppose first that every attribute appears on every branch and that every preference rule specifies a linear order: we will show that $ ≽_φ $ is antisymmetric and connex. For antisymmetry, consider distinct alternatives $ o, o' ∈ \dom{\cal X} $: because every attribute appears on every branch, there must be a node $ N $, labelled with some $ W ⊆ \cal X $,  that decides $ \{o,o'\} $,  and a unique rule $ α : ≥ $ at $ N $ such that $ o[§NonInst§(N)] = o'[§NonInst§(N)] ⊧ α $; $ ≥ $ must be a linear order over $ \dom W $, so either $ o[W] > o'[W] $ and $ o ≻_φ o' $, or  $ o'[W] > o[W] $ and $ o' ≻_φ o $: $ ≽_φ $ is connex and antisymmetric. For the converse, assuming that either there is some branch where some attribute does not appear, or that there is some rule at some node that does not define a linear order, it is not difficult to define two distinct alternatives that cannot be compared with $ ≽_φ $.
\end{appendixproof}

An LP-tree $ φ $ is said to be \emph{complete} if the two conditions in Proposition~\ref{prop:LPT=preorder} hold, that is, if $ ≽_φ $ is a linear order.

From a semantic point of view, an LP-tree $ φ $ is equivalent to the set that contains, for every node $ N $ of $ φ $ labelled with $ W = §Var§(N) $, and every rule $ α:≥^α_N $ in $ §CPT§(N) $, all CP statements of the form $ α ∧ u ∧ w[W \setminus W^{≠}] | V : w[W^{≠}] ≥ w'[W^{≠}] $, where
\begin{itemize}

\item $ u $ is the combination of values given to the attributes in $ §Inst§(N) $ along the edges between the root and $ N $, and

\item $ w,w' ∈ \dom W $ such that $ w ≥^α_N w' $, and $ W^{≠} $ is the set of attributes on which $ w $ and $ w' $ have distinct values;
%, and $ w^\sharp = w[W^\sharp] $, and $ w'^\sharp = w'[W^\sharp] $;
and

\item $ V = [\cal X  - (§Anc§(N) ∪ W)] $.

\end{itemize}
This set of statements indicate that alternatives that agree on $ §Anc§(N) $ and satisfy $ u ∧ α $, but have different values for $ §Var§(N) $, should be ordered according to $ ≥^α_N $, whatever their values for attributes in $ V $.
%Note that these statements are not preference statements in the sense of \cite{Wilson:aij11} because the order in the rules can be over a combinatorial domain $ \dom W $, not necessarily the domain of a single attribute.

%We define the following languages:

\begin{langlist}

\item[§LPT§] = the language of LP-trees as defined above; we consider that §LPT§\ is a subset of §CP§.\footnote{Strictly speaking, for $ §LPT§ ⊆ §CP§ $ to hold, we can add the possibility to augment every formula in $ §CP§ $ with a tree structure.} 

\end{langlist}

Note that, using the notation defined above, $ §k-LPT§ = §LPT§ ∩ k§-CP§ $ is the restriction of §LPT§\ where every node has at most $ k $ attributes, for every $ k ∈ \N $; in particular, $ §1-LPT§ $ is the language of LP-trees with one attribute at each node; and $ §LPT∧§ = §LPT§ ∩ §CP∧§ $ is the restriction of §LPT§\ where the condition $ α $ in every rule at every node is a conjunction of literals.
Search trees of  \cite{Wilson:ecai04,Wilson:aij11} and LP-trees as defined by \cite{Boothetal:ecai10,LangMenginXia:artint18} are sublanguages of §1-LPT∧§; LP-trees of \cite{FargierGimenezMengin:aaai18} and \cite{BrauningHullermeier:pl12} are sublanguages of §LPT∧§.

\begin{notAAMAS}
More precisely, the table below shows how other definitions of LP-trees are more restrictive than the one given above. Note that all these definitions enforce that preference rules must specify linear orders.
\begin{center}
\renewcommand\vhangle{70}
\begin{tabular}{c|c|c|c|c|}
	              \vh{}                & \vh{loc. partial preorder} & \vh{mult. var.} & \vh{incompl. branches} & \vh{uninst. edge} \\ \hline
	       \cite{Wilson:ecai04}        & ✘                     &        ✘        & ✘                      &         ✘         \\ \hline
	     \cite{Boothetal:ecai10}       & ✘                     &        ✘        & ✘                      &         ✓         \\ \hline
	 \cite{BrauningHullermeier:pl12}   & ✘                     &        ✓        & ✓                      &         ✘         \\ \hline
	  \cite{LangMenginXia:artint18}    & ✘                     &        ✘        & ✘                      &         ✓         \\ \hline
	\cite{FargierGimenezMengin:aaai18} & ✘                     &        ✓        & ✓                      &         ✓         \\ \hline
\end{tabular}
%\caption{\label{table:LP-tree-restr}Restrictions put by some definitions of LP-trees}
\end{center}
\end{notAAMAS}

We also introduce a very restrictive class of LP-trees, which will turn out to have interesting properties when we look at transformations.

\begin{langlist}
\item[$ §k-LPT§^{§lin§} $ =] the language that contains all \emph{linear} $k$-LP-trees, that is, LP-trees where every node has at most $k$ variables, at most one child, and where all conditional preference rules are \emph{un}conditional.
\end{langlist}

Complete, linear 1-LP trees represent the usual lexicographic orderings.

\ifklptcompat

%\subsection{Lexico-compatible Formulas}
%\label{sec:lex-comp}

\ihel{ lexico compatible ne definis pas un langage ... du coup c'est pas une entree tres interessante pour le carte de compilation : comment choisir ce langage si on ne sait pas comment ecrire des formules de ce langage  }

\hel{j ai mis tout ce qui concerne l'algo dans le fichier Algo1, au cas où vous vouliez le reintroduire quelquepart}

Many graphical restrictions that have been proposed in order to enable polytime answers to some queries are in fact particular cases of a more general property which we introduce now. We define a new, parameterized family of languages. Given some language $ \cal L   ⊆ §CP§ $ and $ k ∈ ℕ $, we define:

\ihel{j'ai modifié pour "extends" qui n'etait pas defini (en fait il y a une utre def de extends plus haut, qui n'a rien à voir - j espere ne pas m'etre trompée}
\begin{langlist}
\item[$\cal L  \acycllexk = $] the restriction of $ \cal L  $ to formulas $ φ $ such that there exists some complete LP-tree $ ψ ∈ k§-LPT§$ such that $ o ≽_φ o' $ implies $ o ≽_ψ o' $ ($\psi$ extends $\phi$  to a complete preorder). We say that formulas of $  §CP§\acycllexk $ are \emph{$k$-lexico-compatible}.\footnote{This definition generalizes \textit{conditionally acyclic}  formulas of \cite{Wilson:aij11}, which are the formulas of $ §CP§ \acycllexun $.}

\end{langlist}
\cite{Wilson:aij11} proves that acyclic formulas of §1-CP§\ are 1-lexico-compa\-ti\-ble when they enjoy some local consistency property; it illustrates that $k$-lexico-compatibility is indeed a weak form of acyclicity. We will see that k-lexico-compatibility makes some queries tractable. 

%\subsection{Membership} How difficult is it to recognise that a given set of CP-statements belongs to a given sublanguage? syntactical properties, and graphical properties like acyclicity can be recognised in polynomial time (the dependency graph is easy to build). It is more difficult to recognise $k$-lexico-compatibility :

The next result shows that proving that some $ φ ∈ §CP§ $ is $ k $-lexico-compatible, for a fixed $ k $, is not always easy, it generalizes a result by \cite{Wilson:aij11}:

\begin{propositionrep}
For a fixed $ k ∈ ℕ $, checking if a formula $ φ ∈ §CP§ $ is $k$-lexico-compatible is \coNP-complete.
\end{propositionrep}

\begin{proofsketch}
For membership in \coNP: a certificate that some given $ φ $ is not lexico-compatible is a branch of a tree built using the algorithm below where failure occurs; \coNP-completeness can be proved using the same reduction of \pb{3sat} as that used by \cite[Prop. 24] {Wilson:aij11} to prove that checking cuc-acyclicity is \coNP-hard.
\end{proofsketch}

\begin{appendixproof}
For membership in \coNP: a certificate that some given $ φ $ is not lexico-compatible is a branch of a tree built using the algorithm above where failure occurs. \coNP-completeness can be proved using the same reduction of \pb{3sat} used by \cite[Prop. 24] {Wilson:aij11} to prove that checking cuc-acyclicity is \coNP-hard. Consider $ m $ clauses $ C₁, …, Cₘ $ over $ n $ binary attributes $ X₁, …, Xₙ $. Let $ \cal X = \{ X₁, …, Xₙ, Y₀, Y₁, …, Yₘ\} $, where the $ Yᵢ $s are new binary attributes. Define
$$
  φ = \{ l|Yₖ : y_{k-1} ≥ \bar y_{k-1} \mid l ∈ Cₖ, 1 ≥ k ≥ m \} ∪ \{ |Y₀ : yₘ ≥ \bar yₘ \}.
$$
and consider some complete LP tree $ ψ $ over set of attributes $ \cal X $.
Every attribute $ Yₖ $ appears in the free part of at least one rule of $ φ $, thus cannot be at the root of any complete LP tree $ ψ $ that is compatible with $ φ $. On the other hand, any of the $ Xᵢ $'s can be at the root, or at any level, in any branch of $ ψ $. Suppose now that $ C₁ ∧ … ∧ Cₘ $ is satisfiable: let $ u $ be an instantiation of $  X₁, …, Xₙ $ that satisfies this CNF, and consider a branch of $ ψ $ where all the $ Xᵢ $s have the same value as in $ u $: at every node $ N $ in such a branch, for every $ Yₖ $, $ §inst§(N) $  is consistent with the condition of at least one rule which has $ Yₖ $ as free part; therefore, no ordering of the $ Yₖ $s in such a branch can be compatible with condition \ref{cond:LP:comp:Anc} in Proposition \ref{prop:cond:lex-compat}. On the other hand, if  $ C₁ ∧ … ∧ Cₘ $ is unsatisfiable, then it is not difficult to see that it is possible to build $ψ $ in such a way that the conditions of Proposition \ref{prop:cond:lex-compat} are satisfied w.r.t. $ φ $, by taking, for instance, the $ Xᵢ $s for the nodes at the $ n $ first levels of $ ψ $: then, since $ C₁ ∧ … ∧ Cₘ $ is unsatisfiable, in every branch of the tree there must be one clause $ Cₖ $ that is not satisfied by the corresponding instantiation of the $ Xᵢ $s, so none of the conditions of the corresponding rules $ l|Yₖ:  y_{k-1} ≥ \bar y_{k-1} $ is satisfied; then attribute $ Y_{k-1} $ can be chosen for the label of the node at the next level, then $ Y_{k-1} $, and so on…
\end{appendixproof}

\fi

\subsection{GAI decompositions}

We also consider \emph{GAI decompostions} \cite{BacchusG95,GonzalesP04}. 
% GAI et VCSP non equivalent car  pas d'element absorbant dans les GAI (or equivalently, \emph{cost function networks} \cite{} or \emph{(additive) valued constraint satisfaction problem} \cite{} )
This framework allows the representation of complete and transitive preference relations by a utility function, additively decomposed as a sum of local utility functions bearing on smaller subsets of attributes. Each local utility function can for instance represent a criterion, the global preference deriving from the additive aggregation of the satisfaction degrees provided by the different criteria.

A GAI decomposition  over a set $ \cal X  $  of finite attributes is defined by a set $\varphi = \{g_{Z_1}, \dots,g_{Z_m}\} $ of  functions bearing on subsets $Z_i$ of $ \cal X  $ and taking their values in $R \cup \{-\infty\}$; for any alternative $ o $, let  $g_φ(o) = \Sigma_{i=1}^m  g_{z_i} (o[Z_i])$. The set~$\varphi$ represents the complete and transitive relation $\mathord{≽_φ}$ in which $o \mathord{≽_φ} o'$ if and only if $g_φ(o) \geq g_φ(o')$. Thus $ ≽_φ $ is a weak order.

The  questions related to the succinctness of  GAI representations depend on the way the local functions are represented  -- and so do all the questions related to the complexity of the operations on such representations. It is generally assumed that each $g_{Z_i}$ is represented by a table that associates to each tuple of the domain of $Z_i$ a  real valued utility and the tuples not present in the table receive the utility $0$.

The most common restriction on the language of GAIs consists in bounding by some integer $ k >0 $ the maximum number of attributes in a same subutility; we denote by $ §GAI§_k $ the corresponding language. In particular, $ §GAI§_1 $ is the language of \emph{Additive Utilities}.

\ifGAIacycl {\color{magenta}
Another restriction that can lead to interesting properties in terms of complexity for §GAI§s is that of acyclicity. A GAI decomposition is in fact an hypergraph over $ \cal X  $; a common definition of acyclicity for hypergraphs is as follows:
\begin{definition}[e.g. \cite{GottlobGrecoScarcello:BordeauxHamadiKohli:book14}]
A jointree for a GAI decomposition $\varphi = \{g_{Z_1}, \dots,g_{Z_m}\} $ is
a tree $ τ $ whose vertices are the $ Z_i$s, and such that, whenever the same attribute $ X ∈ \cal X $ occurs in two subutilities $ g_{Z} $ and $ g_{Z'} $, then $ X $ occurs in every $ Z'' $ on the unique path linking $ Z $ and $ Z' $ in $ τ $
(connectedness condition for X). GAI decomposition $ φ$ is said to be acyclic if it admits a jointree.
\end{definition}
We denote by §GAI\acycl§\ the language of acyclic GAI decompositions.

\paragraph{Expressiveness}
$ §GAI\acycl§ \mexprm §GAI§ $: il suffit de merger des cliques par exples.

\paragraph{Succinctness} À préciser. J'imagine que quand on fusionne des cliques pour passer d'une GAI non acyclique à une GAI acyclique, la taille des tables explose..??... Reste à trouver un exemple de tables pour par exemple $ n $ attributs $ X₁, … X_n $, et $ n $ cliques $ \{X_i,X_{i+1}\} $ pour qu'on ne puisse pas en faire un GAI acyclique sans tout merger...

\begin{proposition}
§GAI§\ is exponentially more succinct than §GAI\acycl§.
\end{proposition}
\begin{proof}
We consider the following §GAI§\ with attributes $\cal X:= \{X_1, \ldots, X_n\}$: the domain $U_i$ of every variable is $\{0,1\}$ and for every $X_i, X_j$ with $i\ne j$ there is a utility function $g_{ij}(X_i, X_j)$ that is $1$ if and only if both attributes $X_i$ and $X_j$ take the value $1$. Call the resulting §GAI§\ $g$ and let $≽$ be its preorder. Clearly, $g$ has size quadratic in $n$. We will show that all acyclic §GAI§s\ representing the same preorder $≽$ as $g$ have exponential size, which will show the proposition.

In a first step, we show that every §GAI§\ $g'$ representing the same preorder $≽$ as $g$ must have for every $i,j\in \{1, \ldots, n\}$ a utility function containing both the attributes $X_i, X_j$. By way of contradition, assume this were not true. Then there are $i,j\in \{1, \ldots, n\} $ such that for $g'= \{g_{Z_1}, \dots,g_{Z_m}\}$ does not have a $Z_\ell$ with $i,j\in Z_\ell$. Then we can partition $\{1, \ldots, m\}$ into sets $I_i, I_j, I_r$ in such a way that all $Z_\ell$ with $\ell \in I_i$ contain $X_i$, all $Z_\ell$ with $\ell \in I_j$ contain $X_j$, and all $Z_\ell$ with $\ell \in I_r$ contain neither $X_i$ nor $X_j$. Define for every alternative $o$ the function $\bar g_i(o)= \sum_{\ell\in I_i} g'_\ell(o[Z_\ell])$, $\bar g_i(o)= \sum_{\ell\in I_i} g'_\ell(o[Z_\ell])$, $\bar g_r(o)= \sum_{\ell\in I_r} g'_\ell(o[Z_\ell])$. Then for all alternatives $o$ we have $g'(o)= \bar g_i(o)+\bar g_j(o) + \bar g_r(o)$. Now consider the alternative $o_{ab}$ that assign the value $a$ to $X_i$, $b$ to $X_j$, and $0$ to all other attributes. Then, $o_{11}≽ o_{00} \sim o_{01}\sim o_{10}$, so up to some scaling, we may assume that $g'(o_{11}) =1 $ and $g'(o_{11}) =g'(o_{01})=g'(o_{10})=0$. We thus get
\begin{align*}
   0=g'(o_{00}) &= \bar g_i(o_{00}) + \bar g_r(o_{00})+\bar g_r(o_{00})\\
   0=g'(o_{01}) &= \bar g_i(o_{01}) + \bar g_r(o_{01})+\bar g_r(o_{01})\\
   0=g'(o_{10}) &= \bar g_i(o_{10}) + \bar g_r(o_{10})+\bar g_r(o_{10})\\
   1=g'(o_{11}) &= \bar g_i(o_{11}) + \bar g_r(o_{11})+\bar g_r(o_{11}).
\end{align*}
Now observe that $\bar g_i$ does not depend on $X_j$, so $\bar g_i(o_{01}) = \bar g_i(o_{00}) $ and $\bar g_i(o_{11}) = \bar g_i(o_{10}) $. Analogously, $\bar g_j(o_{10}) = \bar g_j(o_{00}) $ and $\bar g_j(o_{11}) = \bar g_j(o_{01}) $ and $\bar g_r(o_{01}) = \bar g_r(o_{00}) = \bar g_r(o_{11}) = \bar g_r(o_{10})$. Thus we get 
\begin{align}
   0=g'(o_{00}) &= \bar g_i(o_{00}) + \bar g_r(o_{00})+\bar g_r(o_{00})\label{eq:a}\\
   0=g'(o_{01}) &= \bar g_i(o_{00}) + \bar g_r(o_{01})+\bar g_r(o_{00})\label{eq:b}\\
   0=g'(o_{10}) &= \bar g_i(o_{10}) + \bar g_r(o_{00})+\bar g_r(o_{00})\label{eq:c}\\
   1=g'(o_{11}) &= \bar g_i(o_{10}) + \bar g_r(o_{01})+\bar g_r(o_{00})\label{eq:d}.
\end{align}
Adding (\ref{eq:b}) and (\ref{eq:c}) and subtracting (\ref{eq:d}), we get 
\begin{align*}
   -1=g'(o_{00}) &= \bar g_i(o_{00}) + \bar g_r(o_{00})+\bar g_r(o_{00})
   \end{align*}
   which contradicts (\ref{eq:a}).
   It follows that, as claimed, for every $i,j\in \{1, \ldots, n\}$ the §GAI§\ $g'$ must have a domain $Z_\ell$ containing both $X_i$ and $X_j$.
   
   Now assume that $g'$ is acyclic. The following is well known from graph theory: if in a join tree of $g'$ there is for every pair of attributes $X_i, X_j$ a vertex $Z_\ell$ of the join tree containing both $X_i$ and $X_j$ (see e.g.~\cite[Chapter 12.3]{Diestel}, then the join tree must in fact contain a vertex $Z_\ell$ that contains \emph{all} attributes $Z_\ell$. So in particular, $g'$ must have a utility function $g'_{Z_\ell}$ such that $Z_\ell = \cal X$. But then the table for $g'{Z_\ell}$ must have exponential size which proves the proposition.
\end{proof}

\paragraph{Queries}
\begin{itemize}
\item tractable pour §GAI§\ ⇒ tractable pour §GAI\acycl§ 
\item hard pour $§GAI§₁$\ ⇒ hard pour §GAI\acycl§
\item tractable : undom. extract = algos classique par marginalisation / message passing, ⇒ undom. check et ≻-cut-extract aussi tractable (étant donné $ o $, calculer $ u(o) $, et $ u^\ast / o^\ast $).
\item ?? ≽-cut-extract
\end{itemize}

\paragraph{Transfo}
\begin{itemize}
\item conditioning tractable
\item $\Out$ pour $§GAI§₁$\ ⇒ $\Out$ pour §GAI\acycl§ (conj. / disj.)
\item  lower projection outside language : je pense que l'exple 15 le montre
\item ?? weak / strong opt. project. ⇒  je pense que c'est ok, que les deux projection sont équivalentes pour les GAIs, et qu'on a, si on projette sur $ V $ et si $ U = \cal X - V $:
$$
  v ≽^{↓V}_{§w.opt.§} v' \ ⇔\  v ≽^{↓V}_{§s.opt.§} v'
  \ ⇔\  \max_{u ∈ \dom U} g_φ(uv) ≥ \max_{u ∈ \dom U} g_φ(uv')
$$
Lorsqu'on a un GAI acyclique, il me semble qu'on peut définir en temps polynomial ψ définie sur $ V $ telle que $ g_ψ(v) = \max_{u ∈ \dom U} g_φ(uv) $ - cf par exemple \cite{CooperdeGivrySchiex:stacs20}
\end{itemize}
}\fi 

\section{Expressiveness}
\label{sec:expressiveness} %\nosectionappendix

This section presents our results on the expressiveness of the various languages introduced above. To this end, let us introduce the way in which we compare different languages.
%, the results about succinctness are in the next section.

%\istefan{GAI are missing in the figure. Is there a reasonable way of adding them? Or should we add another figure for them? I think the relations for them are not completely clear.}

\tikzset{lang/.style={draw,rectangle,rounded corners,inner sep=3pt,anchor=base,outer sep=0pt}}
\tikzset{langsub/.style={rectangle,rounded corners,inner sep=0pt,anchor=base,outer sep=0pt}}
\tikzset{mexpr/.style={double equal sign distance,->,>={Latex[width=1.5ex,length=1ex]}}}
\newcommand{\drawmsucco}[2]{\drawmsucc#1--#2; }
%\def\drawmsucc#1--#2; {\relax
%  \draw[-]  #1 --node[sloped,allow upside down,pos=0.6,anchor=center] {\raisebox{-4pt}{$\msucc$}} #2;}
%  \draw[>=Straight Barb,-{>>}]  #1 --node[pos=0.6,anchor=center] {} #2;}
\def\drawmsucc#1--#2; {%\relax
%  \draw[-]  #1 --node[sloped,allow upside down,pos=0.6,anchor=center] {\raisebox{-4pt}{$\msucc$}} #2;
  }
%  \draw[>=Straight Barb,-{>>}]  #1 --node[pos=0.6,anchor=center] {} #2;}
\def\drawmsuccb#1:#2:#3; {\relax
  \draw[dotted,thick,-]  #1 #2node[sloped,allow upside down,pos=0.6,anchor=center] {\raisebox{-4pt}{$\msucc$}} #3;}

\begin{figure}%[tbp]
%\fboxsep0pt\fbox
%{\begin{minipage}[b]{20.5em}
\begin{center}
\begin{tikzpicture}[x=4em,y=-3.5em]

\node[langsub] (CP) at (-.5,-.8) {§CP§};
\node[langsub] (CPconj) at (-.5,-0.3) {§CP∧§};
\node[langsub] (CPnoimp) at (.5,-.8) {§CP⋫§};
\node[langsub] (CPconjnoimp) at (.5,-0.3) {§CP∧⋫§};
%\node[langsub] (LPT) at (1.5,-.8) {§LPT§} ;
%\node[langsub] (LPTconj) at (1.5,0) {§LPT∧§} ;
  \drawmsucc(CP)--(CPconj); \drawmsucc(CP)--(CPnoimp);
  \drawmsucc(CPconj)--(CPconjnoimp); \drawmsucc(CPnoimp)--(CPconjnoimp);
\node [lang, fit={(CP) (CPconj) (CPnoimp) (CPconjnoimp)}] (CPclass) {};

\node[lang] (GAI) at (4,0.5) {$ §GAI§ $};
  \draw[mexpr] (CPclass)--(GAI);
%%%%%%%%%%%%%%%%%%% k-CP %%%%%%%%%%%%%%%%

\node[langsub,anchor=east] (kCP) at (-1.4,1) {§k-CP§};
\node[langsub,anchor=west] (kCPconj) at (-1.1,1)  {§k-CP∧§};
  \drawmsucc (kCP)--(kCPconj);
\node [lang, fit={(kCP) (kCPconj)}] (kCPclass) {};

\node[langsub,anchor=east] (LPT) at (1,0.5) {$ §LPT§ $};
\node[langsub,anchor=west] (LPTconj) at (1.3,0.5) {$ §LPT∧§ $};
  \drawmsucc (LPT)--(LPTconj);
\node[lang,fit={(LPT) (LPTconj)}] (LPTclass) {};
  \draw[mexpr] (CPclass)--(LPTclass);

\node[langsub,anchor=east] (kLPT) at (-0.2,2) {$ §k-LPT§ $};
\node[langsub,anchor=west] (kLPTconj) at (0.1,2) {$ §k-LPT∧§ $};
  \drawmsucc (kLPT)--(kLPTconj);
\node[lang,fit={(kLPT) (kLPTconj)}] (kLPTclass) {};

  \draw[mexpr] (CPclass) -- (kCPclass); \draw[mexpr] (kCPclass) -- (kLPTclass);
  \draw[mexpr] (LPTclass)--(kLPTclass);

\node[langsub,anchor=east] (kCPnoimp) at (-2.6,2) {§k-CP⋫§};
\node[langsub,anchor=west] (kCPconjnoimp) at (-2.3,2) {§k-CP∧⋫§};
  \drawmsucc(kCPnoimp)--(kCPconjnoimp);
\node[lang, fit={(kCPnoimp)(kCPconjnoimp) }] (kCPnoimpclass) {};

  \draw[mexpr] (kCPclass) -- (kCPnoimpclass);
  \drawmsucc(kCP)--(kCPnoimp); \drawmsucc(kCPconj)--(kCPconjnoimp);

\ifklptcompat
\node[lang] (kLPTcomp) at (1,1) {$ §CP§\acycllexk $ };
	\draw[mexpr] (CPclass) -- (kLPTcomp);
	\draw[mexpr] (kLPTcomp) -- (kLPTclass);
\fi

\node[lang] (kGAI) at (4,1.5) {§k-GAI§};
  \draw[mexpr] (GAI)--(kGAI);

\node[lang] (kLPTlin) at (2.5,2.8) {$ §k-LPT§^{§lin§} $};
  %\draw[mexpr] (kGAI)--(kLPTlin);
  \draw[mexpr] (kLPTclass)--(kLPTlin);
  
%%%%%%%%%%%%%%%% (k-1)-CP %%%%%%%%%%%%%%%%%%%

\node[langsub,anchor=east] (k-1CP) at (-1.3,3) {$ §(k-\!1\!)-CP§ $};
\node[langsub,anchor=west] (k-1CPconj) at (-1.2,3) {$  §(k-\!1\!)-CP\!∧§$};
  \drawmsucc(k-1CP)--(k-1CPconj);
\node[lang, fit={(k-1CP)(k-1CPconj) }] (k-1CPclass) {};
	\draw[mexpr] (kCPclass) -- (k-1CPclass);

\node[langsub,anchor=east] (k-1CPnoimp) at (-2.5,4) {§(k-\!1\!)-CP⋫§};
\node[langsub,anchor=west] (k-1CPconjnoimp) at (-2.38,4) {§(k-\!1\!)-CP\!∧\!⋫§};
  \drawmsucc(k-1CPnoimp)--(k-1CPconjnoimp);
\node[lang, fit={(k-1CPnoimp)(k-1CPconjnoimp) }] (k-1CPnoimpclass) {};

  \draw[mexpr] (kCPnoimpclass) -- (k-1CPnoimpclass);
  \draw[mexpr] (k-1CPclass) -- (k-1CPnoimpclass);
  \drawmsucc(k-1CP)--(k-1CPnoimp); \drawmsucc(k-1CPconj)--(k-1CPconjnoimp);

\node[langsub,anchor=east] (k-1LPT) at (-0.1,4) {$ §(k-\!1\!)-\!L\!P\!T§ $};
\node[langsub,anchor=west] (k-1LPTconj) at (0.1,4) {$ §(k-\!1\!)-L\!P\!T\!∧§ $};
  \drawmsucc (k-1LPT)--(k-1LPTconj);
\node[lang,fit={(k-1LPT) (k-1LPTconj)}] (k-1LPTclass) {};

	\draw[mexpr] (k-1CPclass) -- (k-1LPTclass); \draw[mexpr] (kLPTclass) -- (k-1LPTclass);
\ifklptcompat
\node[lang] (k-1LPTcomp) at (1,3) {$ §CP§\acycllexkun $ };
	\draw[mexpr] (kLPTcomp) .. controls (1.3,2) .. (k-1LPTcomp); 
    \draw[mexpr] (k-1LPTcomp) -- (k-1LPTclass);
\fi
\node[lang] (k-1GAI) at (4,3.5) {§(k-1)-GAI§};
  \draw[mexpr] (kGAI)--(k-1GAI);
\node[lang] (k-1LPTlin) at (2.5,4.5) {$ §(k-1)-LPT§^{§lin§} $};
%  \draw[mexpr] (k-1GAI)--(k-1LPTlin);
  \draw[mexpr] (k-1LPTclass)--(k-1LPTlin);
  \draw[mexpr] (kLPTlin)--(k-1LPTlin);

%%%%%%%%% 1-CP %%%%%%%%%%%%%%%%%%%%%

%\node[langsub,anchor=east] (1CP) at (-1.4,5) {$ §1-CP§ $};
%\node[langsub,anchor=west] (1CPconj) at (-1.1,5) {$  §1-CP\!∧§$};
%  \drawmsucc(1CP)--(1CPconj);
%\node[lang, fit={(1CP)(1CPconj) }] (1CPclass) {};
%
%	\draw[mexpr] (k-1CPclass) .. controls (-1.05,4) .. (1CPclass);

%\node[langsub,anchor=east] (1CPnoimp) at (-2.6,6) {§1-CP⋫§};
%\node[langsub,anchor=west] (1CPconjnoimp) at (-2.3,6) {§1-CP\!∧\!⋫§};
%  \drawmsucc(1CPnoimp)--(1CPconjnoimp);
%\node[lang, fit={(1CPnoimp)(1CPconjnoimp) }] (1CPnoimpclass) {};
%
%  \draw[mexpr] (k-1CPnoimpclass) -- (1CPnoimpclass);
%  \draw[mexpr] (1CPclass) -- (1CPnoimpclass);
%  \drawmsucc (1CP.240)--(1CPnoimp.140);
%  \drawmsucc(1CPconj)--(1CPconjnoimp);

%\node[langsub,anchor=east] (1LPT) at (-0.2,6) {$ §1-\!L\!P\!T§ $};
%\node[langsub,anchor=west] (1LPTconj) at (0.1,6) {$ §1-L\!P\!T\!∧§ $};
%  \drawmsucc (1LPT)--(1LPTconj);
%\node[lang,fit={(1LPT) (1LPTconj)}] (1LPTclass) {};

%\node[lang] (1LPTcomp) at (1,5) {$ §CP§\acycllexun $ };

%	\draw[mexpr] (k-1LPTcomp) .. controls (1.3,4) .. (1LPTcomp); 
%	\draw[mexpr] (1CPclass) -- (1LPTclass); \draw[mexpr] (k-1LPTclass) -- (1LPTclass);
%	\draw[mexpr] (1LPTcomp) -- (1LPTclass);

\node[lang] (CPnet) at (-1,5) {§CPnet§};
	\draw[mexpr,rounded corners] (k-1CPnoimpclass) |- (CPnet); 
	\drawmsucc(k-1CPconjnoimp)--(CPnet);
\node[lang] (CPnetacycl) at (1,5) {$§CPnet§\acycl$};
	\draw[mexpr] (CPnet) -- (CPnetacycl); 
\ifklptcompat
    \draw[mexpr] (k-1LPTcomp) .. controls (1.3,4) .. (CPnetacycl); 
\fi
%%% legend %%%%%%%%%
%\node[langsub,anchor=west](Lp1) at (-2.5,-.3) {$\cal L '$: $\cal L  \mexpr \cal L '$};
%  \node[langsub,anchor=east] (L1) at (-3.1,-.3) {$\cal L $} edge[mexpr] (Lp1); 
%\node[langsub,outer sep=3pt,anchor=north](Lp2) at (Lp1.south) {$\cal L '$: $\cal L  \msucc \cal L '$};
%  \node[langsub,outer sep=3pt,anchor=north] (L2) at (L1.south) {$\cal L $};
%  \drawmsucc(L2)--(Lp2); 
%\node[anchor=south west,inner sep=3pt,align=left] (legend) at (L1.north) {Legend:};
%\node[draw,rectangle,thick, fit={(L1)(Lp2) (legend)}]  {};

\end{tikzpicture}

%\vskip 1ex
%\raggedright
$ \cal L  $\tikz\draw[mexpr](0,0)--(2em,0);$\cal L '$: $\cal L  $ is strictly more expressive than $ \cal L ' $% $\cal L  \mexpr \cal L ' $
%\\$ \cal L  $\raisebox{-1ex}{\tikz{\drawmsucco{(0,0)}{(2em,0)};}} $\cal L ''$: $\cal L  $ is strictly more succinct than $ \cal L ' $
\\Boxes contain languages that are equally expressive.
\\For $ k > 2 $.
\caption{Relative expressiveness.}% and succinctness}
\label{figure:exp}
\end{center}%\end{minipage}}
\stefan{je pense qu'il serait bien d'expliquer un peu comment il faut lire Fig 2.}\end{figure}

%A first criterion for comparing languages is inclusion: in many cases, a language can have poor computational properties with respect to a given query whereas a sublanguage of it will have better properties. A different, but related, criterion, is expressiveness:
%two languages can deal with very different formulas for representing the same kind of objects; it can then happen that one is more expressive than another one, without containing it.
%In general, \textit{everything else being equal}, the more preorders a language can represent, the better. definition

\begin{definition}
Let $ \cal L  $ and $ \cal L ' $ be two languages for representing preorders. We say that %:
%\begin{itemize}
%\item $ \cal L ' $ is a sublanguage of $ \cal L  $ if $ \cal L  ⊇ \cal L ' $; it is a proper sublanguage if $ \cal L  ⊃ \cal L ' $, that is, if $ \cal L  ⊇ \cal L ' $ and $ \cal L ' ⊉ \cal L  $;
%\item
$ \cal L  $ is \emph{at least as expressive as} $ \cal L ' $, written $ \cal L  \mexpreq \cal L ' $, if every preorder that can be represented with a formula of $ \cal L ' $ can also be represented  with a formula of $ \cal L  $; we write $ \cal L  \mexpr \cal L ' $ if $ \cal L  \mexpreq \cal L ' $ but it is not the case that $ \cal L ' \mexpreq \cal L  $, and say in this case that $ \cal L  $ is strictly more expressive than $ \cal L ' $. We write $ \cal L \mexprm \cal L' $ when the two languages are equally expressive.
%\end{itemize}
\end{definition} 

We reserve the usual ``rounded'' symbols $ ⊂ $ and $ ⊆ $ for (strict) set inclusion, and $ ⊃ $ and $ ⊇ $ for the reverse inclusions. Note that $ \mexpreq $ is a preorder, and obviously $ \cal L  ⊇ \cal L ' $ implies $ \cal L  \mexpreq \cal L ' $.
%, but the converse does not hold in general.

Figure~\ref{figure:exp} gives a summary of the expressiveness results we show in this section. Note that the fact that acyclicity restricts the expressiveness of CP-nets has been shown in e.g. \cite{Boutilieretal:jair04}.

Let us start exploring the relative expressiveness of different languages.
Clearly, $ §CP⋫§ ⊂ §CP§ $ and $ §CP∧§ ⊂ §CP§ $; however, these three languages have the same expressiveness, because of the following:

\begin{property} \label{prop-mexp-CP}
Given some preorder $ ≽ $, define
$$ φ =
  \{ o[\cal X - Δ(o,o')]:o[Δ(o,o')] ≥ o'[Δ(o,o')] | o ≽ o', o ≠ o'\} 
,$$
where $ Δ(o,o') $ is the set of attributes that have different values in $ o $ and $ o' $, then $ φ ∈ §CP⋫∧§ $, and $ \mathord{≽_φ} = \mathord{≽} $.
\end{property}

%In particular, CP-nets  \citep{BoutilierBrafmanHoosPoole:uai99,Boutilieretal:jair04} are sets of §1-CP∧⋫§\ statements, equipped with a graph and verifying some completeness properties, but that does not extend their expressiveness; similarly, the generalised CP-nets of \cite{Goldsmithetal:jair08} are sets of §1-CP⋫§\ statements, whereas TCP-nets \citep{Brafmanetal:jair06} are sets of §1-CP∧§\ statements. The language §1-CP∧§\ is exactly the language studied by \cite{Wilson:aaai04,Wilson:ecai04,Wilson:aij11}.

A large body of works on CP-statements since the seminal paper by \cite{Boutilieretal:compint04} concentrate on various subsets of §1-CP§. With this strong restriction on the number of swapped attributes, CP-statements have a reduced expressiveness.

\begin{notAAMAS}
\begin{example}%[$ §1-CP§ \mathrel{\not{\protect\mexpreq}} §CP§ $] 
\label{exple-CP-CP1}
Consider two binary attributes $ A $ and $ B $, with respective domains $ \{ a, \bar a\} $ and $ \{b, \bar b\} $. Define preorder $ ≽ $ such that $ ab ≻ \bar a\bar b $, with the two remaining alternatives being incomparable to the former and to each other. This can be represented in §CP§\ with $ φ = \{ ab ≥ \bar a\bar b\} $. But it cannot be represented in §1-CP§, because this would require at least two rules: one to flip the value of $ A $, the other one to flip the value of $ B $; but then there must be one intermediate alternative comparable with $ ab $ and $ \bar a \bar b $.
\end{example}
\end{notAAMAS}

\begin{example}%[$ §1-CP§ \mathrel{\not{\protect\mexpreq}} §CP§ $ even if restricted to linear orders]
\label{exple-CP-CP1-tot}
Consider two binary attributes $ A $ and $ B $, with respective domains $ \{ a, \bar a\} $ and $ \{b, \bar b\} $. Define preorder $ ≽ $ such that $ ab ≻ \bar a\bar b ≻ a\bar b ≻ \bar a b $. This can be represented in §CP§\ with $ φ = \{ ab ≥ \bar a\bar b, \allowbreak \bar b:\bar a ≥ a, \allowbreak a\bar b ≥ \bar a b \} $. But it cannot be represented in §1-CP§: $ \{b: a ≥ \bar a, \allowbreak \bar b: \bar a ≥ a, \allowbreak a: b ≥ \bar b, \bar a: \bar b ≥ b\}^* ⊆ φ^* $, but this is not sufficient to compare $ a\bar b $ with $ \bar a b $. The four remaining formulas of §1-CP§\ over these two attributes are $ B:a≥ \bar a $, $ B:\bar a≥a $, $ A:b ≥ \bar b $, $ A:\bar b ≥ b $, adding any of them to $ φ $ yields a preorder which would not be antisymmetric.
\end{example}

Forbidding free parts incurs an additional loss in expressiveness:

\begin{example}%[$ §1-CP⋫§ \mathrel{\not{\protect\mexpreq}} §1-CP§ $] 
\label{exple-CP1noimp-CP1}
Consider two binary attributes $ A $ and $ B $, with respective domains $ \{ a, \bar a\} $ and $ \{b, \bar b\} $. Define preorder $ ≽ $ such that $ ab ≻ a\bar b ≻ \bar a b ≻ \bar a\bar b $. This can be represented in §1-CP§\ with $ φ = \{ B:a ≥ \bar a, b ≥ \bar b\} $. But the ``tradeoff'' $ a\bar b ≻ \bar a b $ cannot be represented in §1-CP⋫§, any formula of §1-CP⋫§\ that implies it will put some \textit{intermediate} alternative between $ a\bar b $ and $ \bar a b $
%, and it cannot be obtained by transitivity from the comparisons that can be expressed in §1-CP⋫§.
\end{example}

However, restricting to conjunctive statements does not incur a loss in expressiveness.

\begin{propositionrep} $ §CP§ = ⋃_{k∈ℕ} §k-CP§ $ and, for every $ k ∈ ℕ $, $ k ≥ 2 $:
\begin{gather*}
  §CP∧§ \:\mexprm\:§CP⋫§ \:\mexprm\:§CP∧⋫§ \: \mexprm\: §CP§ \:\mexpr\: §k-CP§ \:\mexprm\:  §k-CP∧§ \:\mexpr\: §k-CP⋫§ \:\mexprm\: §k-CP∧⋫§
\\
  §k-CP§ \:\mexpr\: §(k-1)-CP§ \ \ \text{ and } \ \ §k-CP∧⋫§ \:\mexpr\: §(k-1)-CP∧⋫§ \:\mexpr\: §CPnet§.
\end{gather*}

\end{propositionrep}

\begin{appendixproof}
That $ §CP§ \mexprm §CP∧§ \mexprm §CP⋫§ \mexprm §CP∧⋫§ $ follows from property~\ref{prop-mexp-CP}.

By definition $ §CP§ ⊃ §1-CP§ ⊃ §1-CP⋫§ ⊃ §1-CP∧⋫§ $ and $ §1-CP§ ⊃ §1-CP∧§ ⊃ §1-CP∧⋫§$, thus $ §CP§ \mexpreq §1-CP§ \mexpreq §1-CP⋫§ \mexpreq §1-CP∧⋫§ $ and $ §1-CP§ \mexpreq §1-CP∧§ \mexpreq §1-CP∧⋫§$. Restricting to conjunction of literals does not induce a loss in expressiveness because, given a statement $ α | V : x ≥ x' $, it is possible to compute a DNF logically equivalent to $ α $, and then consider a set of statements, each statement having one disjunct of the DNF as conditioning part. Example  \ref{exple-CP-CP1-tot} prove that  $ §CP§ ⊐ §1-CP§ $. Example \ref{exple-CP1noimp-CP1} proves that $ §1-CP§ \mexpr §1-CP⋫§ $, it can be generalized to prove that $ §k-CP∧§ \mexpr §k-CP⋫§ $ by considering $ k $ binary attributes $ A₁, …, Aₖ $ instead of $ A $, and the preorder $ a₁…aₖb ≻ a₁…aₖ\bar b ≻ \bar a₁…\bar aₖ b ≻ \bar a₁…\bar aₖ\bar b $, which can be represented in $ §k-CP∧§ $ but not in $ §k-CP⋫§ $.

To prove that $ §k-CP§ \:\mexpr\: §(k-1)-CP§ $, simply consider $ k $ binary attributes $ A₁, …, Aₖ $ and the preorder that contains a single pair: $ a₁…aₖ ≻ \bar a₁…\bar aₖ $, it can be represented in $ §k-CP§ $ with the single statement $ a₁…aₖ ≥ \bar a₁…\bar aₖ $, but not in $ §(k-1)-CP§ $. Note that this statement is in $ §k-CP∧⋫§ $, so it proves that $ §k-CP∧⋫§ \:\mexpr\: §(k-1)-CP∧⋫§ $. The fact that $ §CP-net§ \not\mexpreq §(k-1)-CP§ $ follows from the ``completeness'' condition in the definition of CP-nets: in a CP-net, every attribute must have some local preference rules associated to it, whereas a formula in $§1-CP∧⋫§ $ may consist of one rule only.
\end{appendixproof}

%\paragraph{LP trees}
Because an LP-tree can be a single node labelled with $ \cal X  $, and a single preference rule $ ⊤ : ≥ $ where $ ≥ $ can be any %preorder,
\modifLPTantisym{partial order}, §LPT§\ can represent any %preorder.
\modifLPTantisym{partial order}.
Limiting to conjunctive conditions in the rules is not restrictive.
%, this is again because every propositional formula has an equivalent DNF.
However, restricting to $ §1-LPT§ $ reduces expressiveness, even if one considers formulas of $ §1-CP§ $ that represent total, linear orders:

\begin{example}
Let
$
  φ = \{ a ≥ \bar a, \bar c|A:\bar b≥b,  \bar ac:\bar b≥b,  ac : b≥\bar b,  a:c≥\bar c, \allowbreak \bar a|B:\bar c≥c \}.
$
This yields the following linear order:
$
  abc ≽_φ a\bar bc ≽_φ a\bar b\bar c ≽_φ \bar a\bar b\bar c ≽_φ ab\bar c ≽_φ \bar ab\bar c ≽_φ \bar a\bar bc ≽_φ \bar abc. 
$
No $ ψ ∈ §1-LPT§  $ can represent it: $ A $ could not be at the root of such a tree because for instance $  a\bar b\bar c ≽_φ \bar a\bar b\bar c  $ and $ \bar a\bar b\bar c ≽_φ ab\bar c $; neither could $ C $, since $ a\bar bc ≽_φ a\bar b\bar c $ and $ \bar ab\bar c ≽_φ \bar a\bar bc $; and finally $ B $ could not be at the root either, because $ abc ≽_φ a\bar bc $ and $ \bar a\bar b\bar c ≽_φ ab\bar c $.
\end{example}

\begin{propositionrep} $ §LPT§ = ⋃_{k∈ℕ} §k-LPT§ $ and, for every $ k ∈ ℕ $:
\begin{gather*}
  §CP§ \;\modifLPTantisym{\mexpr}\; §LPT§ \;\mexprm\; §LPT∧§ \;\mexpr\; §k-LPT§  \;\mexprm\; §k-LPT∧§  \;\mexpr\;  §(k-1)-LPT§\\
  §k-CP§ \;\mexpr\; §k-LPT§  \;\mexpr\; §k-LPT§^{§lin§}  \;\mexpr\;  §(k-1)-LPT§^{§lin§}.
\end{gather*}
\end{propositionrep}

\begin{appendixproof}
%We have already seen that every preorder can be represented with an LP-tree with conjunctive rules.
\modifLPTantisym{LP trees can only represent antisymmetric preorders, so $ §LPT§ $ is strictly less expressive than $ §CP§ $.}
That $ §k-LPT∧§ \mexpreq §k-LPT§$ follows from the fact that the condition of every CP-statement in the set of CP-statements that correspond to some $ §k-LPT§ $ can be represented with a DNF. To see that $ §(k-1)-LPT§ \not\mexpreq §k-LPT§$ for every $ k ≥ 1 $, consider some §k-LP§-tree $ φ $ with $ k $ attributes $ X₁, …, Xₖ $ at the root, and a linear order with $ x₁…xₖ $ as top element, and $ \bar x₁…\bar xₖ $ as second best element: then $ o ≽_φ o' $ for every pair of alternatives $ (o,o') $ such that $ o[X₁…Xₖ] = x₁…xₖ $ and $ o'[X₁…Xₖ] = \bar x₁…\bar xₖ $; no LP-tree in $ §(k-1)-LPT§ $ can represent that. Note that we can choose φ to be linear, so that proves that $ §(k-1)-LPT§^{§lin§} \not\mexpreq §k-LPT§^{§lin§}$ too.

To show that $ §k-LPT§^{§lin§} \not\mexpreq §k-LPT§ $, consider an LP-tree with attributes $ X₁ … Xₖ $ at the root, with $ \card{\dom{\{X₁…Xₖ\}}} $ children, where every child is labelled with binary attribute $ Y $, and at least two children order $ y $ and $ \bar y $ differently: no linear $k$-LP tree can represent the same order.

That $ §k-CP§ \;\mexpreq\; §k-LPT§ $ follows from the remark below Proposition~\ref{prop:LPT=preorder} that describes a set of CP-statements equivalent to a given LP-tree $ φ$: it is not difficult to check that if every node in $ φ $ has at most $ k $ attributes, then the corresponding CP-statements are all in $ §k-CP§ $. To prove that $ §k-LPT§ \not\mexpreq §k-CP§ $, consider a CP-net φ over $ k+1 $ binary attributes $ X₁, …, X_{k+1}$, with $ x_i ≥ \bar x_i $ for every $ 1 ≤ i ≤ k+1 $ (thus the CP-net has no edge): clearly $ φ ∈ §1-CP§ ⊆ §k-CP§ $. Consider now some LP-tree ψ with $ j ≤ k $ attributes at the root; w.l.o.g. we can assume that these attributes are $ X₁, …, X_j $; then the CPT at the root of ψ must contain the preorder over $ X₁, …, X_j $ defined by the set of CP-statements $ \{ x_i ≥ \bar x_i | 1 ≤ i ≤ j \} $. But then $ x₁…x_j\bar x_{j+1}…\bar x_{k+1} ≻_ψ \bar x₁…\bar x_j x_{j+1}… x_{k+1} $, whereas $ x₁…x_j\bar x_{j+1}…\bar x_{k+1} ⊁_φ \bar x₁…\bar x_j x_{j+1}… x_{k+1} $. Since $§LPT§ \mexpreq §CP§ \mexpr §k-CP§ $, we can conclude too that $§LPT§ \mexpr §k-LPT§ $.

\end{appendixproof}

\ifklptcompat
Finally, note that k-lexico-compatibility is a weaker restriction than being a $ k $-LP-tree. 

\begin{proposition}
For every $ k ∈ ℕ $: $ §CP§\mathord{\acycllexk} \mexpr §CP§\mathord{\acycllexkun} $, and $ §CP§\mathord{\acycllexk} \mexpr §k-LPT§ $.
\end{proposition}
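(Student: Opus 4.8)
The plan is, for each of the two strict comparisons, to prove the $\mexpreq$ direction by a structural embedding and then exhibit a preorder that separates the two languages.

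\textbf{The $\mexpreq$ directions.} Every $(k-1)$-LP-tree is also a $k$-LP-tree, so every $(k-1)$-lexico-compatible formula is $k$-lexico-compatible; hence $§CP§\acycllexkun \subseteq §CP§\acycllexk$ as sets of formulas, which already gives $§CP§\acycllexk \mexpreq §CP§\acycllexkun$. For $§CP§\acycllexk \mexpreq §k-LPT§$, I take any $ψ ∈ §k-LPT§$ and let $φ$ be the set of CP-statements equivalent to $ψ$ described just after the definition of §LPT§, so that $≽_φ = ≽_ψ$; then $φ$ is $k$-lexico-compatible because $ψ$ can be turned into a \emph{complete} $k$-LP-tree $ψ^+$ with $≽_{ψ^+} ⊇ ≽_ψ = ≽_φ$, by appending below every leaf that does not already mention all of $\cal X$ a chain of one-attribute nodes, and by refining every preference rule that is not a linear order into one — neither operation raises the number of attributes at a node above $k$.

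\textbf{Separating $§CP§\acycllexk$ from $§CP§\acycllexkun$.} Let $\cal X$ consist of $k$ binary attributes, and let $≽$ be a linear order on $\dom{\cal X}$ whose two greatest elements $o_1 ≻ o_2$ disagree on every attribute (e.g.\ $o_1$ with every attribute at its first value, $o_2$ with every attribute at its second value, the remaining $2^k-2$ alternatives ordered arbitrarily below). The extensional formula of Property~\ref{prop-mexp-CP} represents $≽$ and belongs to $§CP§\acycllexk$, as witnessed by the single-node LP-tree labelled with $\cal X$ and carrying the rule $≽$. If $≽$ were representable in $§CP§\acycllexkun$, there would be a complete $ψ ∈ §(k-1)-LPT§$ with $≽_ψ ⊇ ≽$; both being linear orders on the same set, $≽_ψ = ≽$. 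Let $N$ be the topmost node of $ψ$ with a non-empty label $T$ (all nodes above carry the empty label, so $§Anc§(N) = ∅$, $§NonInst§(N) = ∅$, and the only rule at $N$ is $⊤{:}≥$ with $≥$ linear over $\dom T$); the alternatives sharing the $≥$-greatest value of $T$ form an initial segment of $≽_ψ = ≽$ of size $2^{k-|T|} ≥ 2$, hence it contains both $o_1$ and $o_2$, so $o_1[T] = o_2[T]$ — contradicting that $o_1$ and $o_2$ disagree everywhere and $T ≠ ∅$. Thus $§CP§\acycllexk$ is strictly more expressive than $§CP§\acycllexkun$.

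\textbf{Separating $§CP§\acycllexk$ from $§k-LPT§$.} Let $\cal X$ consist of $k+1$ binary attributes, write $o$ for the alternative with every attribute at its first value and $\bar o$ for the one with every attribute at its second value, and let $φ = \{o ≥ \bar o\}$ (a single statement with swapped set $\cal X$). Then $o ≻_φ \bar o$ is the only strict comparison of $≽_φ$, and $φ ∈ §CP§\acycllexk$: the complete $k$-LP-tree whose root is labelled with $k$ of the attributes (rule: the values of $o$ on top, those of $\bar o$ at the bottom), followed at every branch by a node for the remaining attribute, puts $o$ first and $\bar o$ last. Suppose some $ψ' ∈ §k-LPT§$ had $≽_{ψ'} = ≽_φ$, and let $N$ be the node deciding $\{o,\bar o\}$, with $T = §Var§(N)$. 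As $o$ and $\bar o$ disagree on every attribute, $§Anc§(N) = ∅$, so the applicable rule at $N$ has the form $⊤{:}≥$, with $1 ≤ |T| ≤ k$ and $o[T]$ strictly above $\bar o[T]$. Since $|T| < |\cal X|$ there is an attribute outside $T$; let $o''$ agree with $\bar o$ on $T$ and with $o$ elsewhere. Then $o'' ∉ \{o,\bar o\}$, and $\{o'',o\}$ is also decided at $N$, so $o ≻_{ψ'} o''$, hence $o ≻_φ o''$ with $o'' ≠ \bar o$ — a contradiction. So $§k-LPT§$ is not at least as expressive as $§CP§\acycllexk$, and both strict comparisons follow. (For $k=1$, the comparison with $§CP§\acycllexkun$ is trivial, since a complete $0$-LP-tree over a non-empty $\cal X$ does not exist, so $§CP§\acycllexkun$ represents essentially nothing then.)

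\textbf{Expected main obstacle.} The inclusions and the LP-tree completion are routine; the delicate part is the two separation arguments, which both rely on locating precisely which node of a given LP-tree decides a chosen pair of alternatives, and on the fact that the set of alternatives agreeing with a greatest alternative on the label of the first informative node of the tree is an initial segment of the induced order.
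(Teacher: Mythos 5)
The paper states this proposition without any proof (it is one of the bare propositions in Section~\ref{sec:expressiveness}), so there is no official argument to compare yours against; judged on its own, most of what you write is correct. The inclusion $ §CP§\acycllexkun ⊆ §CP§\acycllexk $ is immediate, as you say, and both separation constructions are sound: the linear order over $ k $ binary attributes whose two top elements disagree on every attribute cannot be extended by any complete tree of $ §(k-1)-LPT§ $ (your initial-segment argument at the first node carrying a non-empty label is exactly what is needed), and the single statement $ o ≥ \bar o $ over $ k+1 $ attributes is $k$-lexico-compatible while its preorder is represented by no tree of $ §k-LPT§ $, by your analysis of the node deciding $ \{o,\bar o\} $. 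So the first strict comparison is fully proved, and the ``not $ \mexpreq $'' half of the second one as well.

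The gap is in the direction $ §CP§\acycllexk \mexpreq §k-LPT§ $. Your completion step ``refine every preference rule that is not a linear order into one'' silently assumes the local rules are antisymmetric; but the paper's definition lets $ §CPT§(N) $ contain rules whose relation is an arbitrary preorder over $ \dom{§Var§(N)} $, hence possibly with ties, and a tie between two distinct values admits no linear refinement, so no complete tree $ ψ^+ $ can satisfy $ ≽_{ψ^+} ⊇ ≽_ψ $ in that case. This is not only a defect of your construction: a single-node tree of $ §1-LPT§ $ whose rule makes two values indifferent represents a preorder with $ o ∼ o' $ for distinct alternatives, and no formula of $ §CP§\acycllexk $ can represent such a preorder, since every $k$-lexico-compatible formula is linearisable (the paper says so when discussing \pb{linearisability}, and Table~\ref{table:queries} marks \pb{$∼$-comparison} as always false for $ §CP§\acycllexk $, but polynomial — hence nontrivial — for $ §LPT§ $). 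So, read literally, $ §CP§\acycllexk $ is \emph{not} at least as expressive as $ §k-LPT§ $, and the second strict comparison only holds if LP-tree rules are restricted to antisymmetric (for instance linear) local orders, as in all the earlier LP-tree definitions the paper cites. You should either state this restriction explicitly — under it your argument (extend each local partial order to a linear order and append the missing one-attribute chains) goes through — or flag that the statement itself needs this amendment.
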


\cite{Wilson:aij11} proves that $ §1-CP§\mathord{\acycl} ⊆ §CP§\mathord{\acycllexun} $. Whether this property can be generalized, with an appropriate definition of \textit{$k$-acyclicity}, is left for future work.
\fi

%\begin{proposition}
%For every $ k ∈ ℕ $, $ §k-LPT§ ⊇ §CP§\acycllexk $.
%\end{proposition}

Finally, because GAI decompositions are restricted to the representation of complete preference relations, their expressiveness is lower than the one of the general CP language; the latter can represent any transitive relation, so  CP is strictly more expressive than GAI. Subclasses of the CP language may be incomparable with GAI. The same line of reasoning applies when comparing GAI and complete lexicographic trees: both target the representation of complete orders, but the former language allows the representation of any complete preorder, while the latter can represent linear orders only (antisymmetry is required). It follows that GAI are strictly more expressive than complete LP trees.
%They are also as expressive as complete LP-trees, because both
\stefan{how is this sentence supposed to end?}
\jerome{J'ai supprimé la phrase.}
We summarize these observations below.

\begin{proposition}
 $ §CP§ \mexpr §GAI§  \mexpr §complete-LPT§ $.
\end{proposition}

The second source of limitations on expressiveness comes from the bounding of the number of attributes present in the expression of local preferences. 
Using the same counter example as those used for showing that CP is strictly more expressive than $k$-CP, one can show that  GAI and $k$-CP restrictions are incomparable in terms of expressiveness.

\begin{propositionrep}
	For every $ k ∈ ℕ $: $ §GAI§_{k+1} \mexpr §GAI§_k$, and $§(k-1)-CP§  \not\mexpr §GAI§_{k} $.
\end{propositionrep}

\begin{appendixproof}
	We first prove the first statement.
	$§GAI§_{k+1} \mexpreq §GAI§_k$ is clear, since $§GAI§_{k+1} \supseteq §GAI§_k$, so it suffices to show that the increase in expressiveness is strict.
	
	Fix the set of attributes $\cal X = \{X_1, \ldots, X_{k+1}\}$ and set as the domain of each attribute $\dom{X_i}=\{0,1\}$. For every set $A\subseteq \cal X$, define the indicator function $I_A(X_1, \ldots , X_{k+1})$ as the function that, given as input an alternative $o\in \dom{\cal X}$, returns $1$ if for all $X\in A$ we have $o(X)=1$ and $0$ otherwise. Set $\varphi = \{I_{\cal X}\}$, then we have that $g_\varphi(o) = I_{\cal X}(o)$. For every $A$ define $o_A$ to be the alternative that is $1$ on exactly the attributes in $A$. Then $g_\varphi$ induces the total preorder $\succeq$ in which $o_{\cal X}$ strictly dominates all other alternatives, whereas for all pairs $o, o'$ both different from $o_{\cal X}$ we have $o \sim o'$.
	
	Clearly, $\succeq$ is expressed in $§GAI§_{k+1}$. We claim that it cannot be expressed in $§GAI§_{k}$. To this end, assume that this were wrong, then there is a set $\varphi = \{g_{Z_1}, \dots,g_{Z_m}\} $ of real valued functions bearing on strict subsets $Z_i$ of $ \cal X  $ such that $g_\varphi$ induces the order $\succeq$ on $\dom{\cal X}$. Without loss of generality, assume that for every $A\subset \cal X$ the set $\cal X$ contains exactly one function $g_A$. It will be convenient to represent $g_{A}$ as a weighted sum of indicator functions.
	
	We use the following representation result for functions from $\{0,1\}^{\ell}\rightarrow \mathbb{R}$ whose proof can e.g.~be found in~\cite[Section 13.2]{CramaH11}.
	
%	\noindent {\bfseries Lemma.}
	\begin{lemma}
		For every function $f:\{0,1\}^{\ell} \rightarrow \mathbb{R}$ with $\ell\in \mathbb{N}$ and in variables $x_1', \ldots , x_\ell'$, there are coefficients $c_A\in \mathbb{R}$ for $A\subseteq \{x_1', \ldots, x_\ell'\}$ such that
		\begin{align*}
			f(x_1', \ldots, x_\ell') = \sum_{A\subseteq \{x_1', \ldots, x_\ell'\}} c_A I_A(x_1', \ldots, x_\ell').
		\end{align*}
	\end{lemma}
	
	Applying this to the utility functions, it follows directly that,
	for every $A\subset \cal X$, there are coefficients $\lambda_{A,B}$ for $B\subseteq A$ such that for all alternatives $o\in \dom{\cal X}$ we have 
	\[g_A(o) = \sum_{B\subseteq A} \lambda_{A,B} I_{B}(o).\]
	%	PROOF
		%		For every $B\subseteq A$, we define the strict indicator function $I_{A,B}$ that is $1$ on an alternative $o \in \dom{\cal X}$ if $o[A]=o_B[A]$ and $0$ otherwise. Then, for every $o\in \dom{\cal X}$, we have $I'_{A,B}(o) = I_{B}(o) \cdot \prod_{X\in A\setminus B} (1- I_{B\cup \{X\}}(o))$. Multiplying this out and observing that for all $Y_1, \ldots, Y_n\subseteq A$ we have that $\prod_{i=1}^n I_{Y_i}(o) = I_{\bigcup_{i=1}^n Y_i}(o)$ we get that there are coefficient $\mu_{A,B, B'}$ such that for all $o\in \dom{\cal X}$, we have $I'_{A,B}(o) = \sum_{B'\subseteq A} \mu_{A, B, B'} I_{B'}(o)$. For every $o\in \dom{\cal X}$ we have that 
		%		\[g_A(o) = \sum_{B\subseteq A} g(o_B) I'_{A, B}(o) = \sum_{B\subseteq A} g(o_B) \left(\sum_{B'\subseteq A} \mu_{A, B, B'} I_{B'(o)}\right),\]
		%		so reorganizing the summands yields the claim.
		%	END PROOF
	We get by summing the $g_A$ that there are coefficients $\lambda_B$ such that for all $o\in \dom{\cal X}$ 
	\begin{equation}\label{eqn:GAI}g_\varphi(o) = \sum_{B\subset \cal X} \lambda_B I_{B}(o).\end{equation}
	By subtracting values in some of the $g_A$, we may assume w.l.o.g.~that $g_\varphi(o)= 0$ for all $o\ne o_{\cal X}$.
	
	We claim that, for all $B\subset \cal X$, we have $\lambda_B = 0$.
	We show this by induction on the size of $B$. For $B=\emptyset$, we have with (\ref{eqn:GAI}) that $0 = g(o_\emptyset) = \lambda_{\emptyset} I_\emptyset(o_\emptyset) = \lambda_\emptyset$.
	For non-empty $B\subset \cal X$, we have $g_\varphi(o_B) = \sum_{C\subset \cal X} \lambda_C I_{C}(o_B) = \sum_{C\subseteq B} \lambda_C I_{C}(o_B)$. However, by the induction hypothesis, we know that for $C\subset B$ we have $\lambda_C = 0$, so $0= \lambda_B I_{B}(o_B) = \lambda_B$.
	
	Plugging the $\lambda_B$ into (\ref{eqn:GAI}), we get that $g_\varphi(o_{\cal X}) = 0$ which contradicts the assumption that in $\succeq$ the alternative $o_{\cal X}$ strictly dominates all others.
	
	For the second statement, consider $k$ binary attributes $A_1, \ldots, A_k$ such that $a_1\ldots a_k \succ \bar a_1\ldots \bar a_k$. Extend this to an arbitrary complete preference relation such that for all other alternatives $o$ we have $o \succ a_1\ldots a_k$. Clearly, any such order can be expressed as a $§GAI§_{k}$ by simply giving all alternatives $o$ a utility that yields this order in a single $k$-ary function $g_{\{A_1, \ldots, A_k\}}$. We claim that this order cannot be expressed by a $§(k-1)-CP§$. Assume this were false, so there is a set of preference statements defining the order and in which the set of swapped attributes never contains more than $k-1$ attributes. In particular, there is such a statement $\alpha \mid V: w \ge w'$ that sanctions $a_1\ldots a_k \succ \bar a_1\ldots \bar a_k$ (this comparison cannot be obtained by transitivity, since all other alternatives have a utility that is strictly greater than that of $ a_1\ldots a_k $). By assumption $w$ cannot contain all attributes, so there is one attribute, say w.l.o.g.~$A_1$ that does not appear in $w$. If $A_1$ is not %an attribute of $\alpha$ and not
	in $V$, then, by definition, applying the statement cannot swap the value of $A_1$, so it cannot justify $a_1\ldots a_k \succ \bar a_1\ldots \bar a_k$. So $A_1$ must appear %in $\alpha$ or
	in $V$. %Say first, it appears in $V$.
	Then $ A_1 \notin §Var§(\alpha) $, thus the statement also sanctions $a_1a_2\ldots a_k \succ a_1\bar a_2\ldots \bar a_k$ which contradicts the order we want to define.
	%So $A_1$ is an attribute of $\alpha$. Say, w.l.o.g.~that $§Var§(\alpha)= \{A_1, \ldots, A_r\}$ where $r\ge 1$. Then we know that both $a_1\ldots a_r$ and $\bar a_1\ldots \bar a_r$ satisfy $\alpha$ since the statement justifies $a_1a_2\ldots a_k \succ \bar a_1\bar a_2\ldots \bar a_k$. Then we get that $a_1\ldots a_r a_{r+1}\ldots a_k \succ a_1\ldots a_r \bar a_{r+1}\ldots \bar a_k$ which again contradicts the choice of the order since $r\ge 1$.
	So as we claimed, $\succ$ is not defined by any $§(k-1)-CP§$.
\end{appendixproof}

\section{Succinctness}
\label{sec:succinctness} %\nosectionappendix

Another criterion is the relative sizes of formulas that can represent the same preorder in different languages. This section details our results about the succinctness of the various languages introduced above.
%, the results about succinctness are in the next section.

\begin{comment}
These results are summarized in Figure~\ref{figure:succ}.

\begin{figure}%[tbp]
\begin{center}
\def\drawmsucc#1--#2; {%\relax
  \draw[-]  #1 --node[sloped,allow upside down,pos=0.6,anchor=center] {\raisebox{-4pt}{$\msucc$}} #2;
  }

\begin{tikzpicture}[x=6em,y=-3.5em]

\node[lang] (CP) at (-.5,0) {(k-)§CP§};
\node[lang] (CPconj) at (.5,0) {(k-)§CP∧§};
\node[lang] (CPnoimp) at (-.5,1) {(k-)§CP⋫§};
\node[lang] (CPconjnoimp) at (.5,1) {(k-)§CP∧⋫§};
  \drawmsucc(CP)--(CPconj); \drawmsucc(CP)--(CPnoimp);
  \drawmsucc(CPconj)--(CPconjnoimp); \drawmsucc(CPnoimp)--(CPconjnoimp);

\node[lang] (kLPT) at (1.5,0.5) {$ §k-LPT§ $};
\node[lang] (kLPTconj) at (2.5,0.5) {$ §k-LPT∧§ $};
  \drawmsucc (kLPT)--(kLPTconj);

\node[lang] (1CPconjnoimp) at (3.5,0.5) {§1-CP\!∧\!⋫§};
\node[lang] (CPnet) at (4.5,0.5) {§CPnet§};
	\drawmsucc(1CPconjnoimp)--(CPnet);
\end{tikzpicture}

%\vskip 1ex
%\raggedright
%$ \cal L  $\tikz\draw[mexpr](0,0)--(2em,0);$\cal L ''$: $\cal L  $ is strictly more expressive than $ \cal L ' $% $\cal L  \mexpr \cal L ' $
%\\
$ \cal L  $\raisebox{-1ex}{\tikz{\drawmsucco{(0,0)}{(2em,0)};}} $\cal L '$: $\cal L  $ is strictly more succinct than~$ \cal L ' $
%\\For $ k > 2 $.
%Boxes contain languages that are equally expressive.
\caption{Relative succinctness.}
\label{figure:succ}
\end{center}%\end{minipage}}
\end{figure}
\end{comment}

Cadoli et al.~\citep{CadoliDoniniLiberatoreSchaerf:jair00} study the space efficiency of various propositional knowledge representation formalisms. An often used definition of succinctness \cite{GogicKautzPapadimitriouSelman:ijcai95,DarwicheMarquis:jair02} makes it a particular case of expressiveness, which is not a problem when comparing languages of same expressiveness. However, we study here languages with very different expressiveness, so we need a more fine grained definition:

%
%\begin{definition}[\cite{GogicKautzPapadimitriouSelman:ijcai95,DarwicheMarquis:jair02}]
%Let $ \cal L  $ and $ \cal L ' $ be two languages for representing preorders. We say that $ \cal L  $ is \emph{at least as succinct as} $ \cal L ' $, written $ \cal L  \msucceq \cal L ' $, if there exists a polynomial $ p $ such that for every formula $ φ' ∈ \cal L ' $, there exists a formula $ φ ∈ \cal L  $ that represent the same preorder as $ φ' $ and such that $ |φ| < p(|φ'|) $. 
%\end{definition}
%
%With this definition, if $ \cal L ' ⊆ \cal L  $ then $ \cal L  \msucceq \cal L ' $; and if $ \cal L  \msucceq \cal L ' $ then $ \cal L  \mexpreq \cal L ' $. In particular, if we have two languages such that $ \cal L  ⊃ \cal L ' $ and $ \cal L ' \not\mexpreq \cal L  $, then $ \cal L  \msucceq \cal L ' $ and $ \cal L ' \not\msucceq \cal L  $,
%%This is for instance the case if $ \cal L  = §CP§ $ and $ \cal L ' = §1-CP§ $;
%even if there is no \textit{real} succinctness hierarchy between the two, it is just that one is strictly more expressive than the other. Therefore, we introduce the following definition for strict succinctness, more restrictive than taking the strict partial order induced by $ \msucceq $.
%% the ones of e.g. \cite{GogicKautzPapadimitriouSelman:ijcai95,DarwicheMarquis:jair02}:

\begin{definition}
Let $ \cal L  $ and $ \cal L ' $ be two languages for representing preorders. We say that $ \cal L  $ is \emph{at least as succinct as} $ \cal L ' $, written $ \cal L  \msucceq \cal L ' $, if there exists a polynomial $ p $ such that for every $ φ' ∈ \cal L ' $, there exists  $ φ ∈ \cal L  $ that represents the same preorder as $ φ' $ and such that $ |φ| < p(|φ'|) $.%
\footnote{Where $ |φ|= ∑_{α | V : w ≥ w'  ∈ φ} (|α| + |V| +2|§Var§(w)|) $, with $ |α| = $ the number of connectives plus the number of atoms of $ α $.}
Moreover, we say that~$ \cal L  $ is \emph{strictly more succinct than} $ \cal L ' $, written $ \cal L  \msucc \cal L ' $, if
$ \cal L  \msucceq \cal L ' $
%\begin{enumerate}
%\item there exists a polynomial $ p $ such that for every $ φ' ∈ \cal L ' $, there exists  $ φ ∈ \cal L  $ that represent the same preorder as $ φ' $ and such that $ |φ| < p(|φ'|) $.
and
%\item
for every polynomial $ p $, there exists $ φ ∈ \cal L  $ such that:
\begin{itemize}
\item there exists $ φ' ∈ \cal L ' $ such that $ ≽_φ = ≽_{φ'} $, but
\item for every $ φ' ∈ \cal L ' $ such that $ ≽_φ = ≽_{φ'} $,  $ |φ'| > p(|φ|) $.    
\end{itemize}
%\end{enumerate}
\end{definition}

With this definition, $ \cal L  \!\!\msucc\!\! \cal L ' $ if every formula of $ \cal L ' $ has an equivalent formula in $ \cal L  $ which is ``no bigger'' (up to some polynomial transformation of the size of $ φ $), and there is at least one sequence of formulas (one formula for every polynomial $ p $) in $ \cal L  $ that have equivalent formulas in $ \cal L ' $ but necessarily ``much bigger''.\footnote{When $ \msucc $ is defined as the strict counterpart of $ \msucceq $, it can happen that $ \cal L \!\!\msucc\!\! \cal L' $ even if there is no real difference in representation size in the two languages, but $ \cal L \!\!\mexpr\!\! \cal L' $.}

\begin{proposition}The following hold, for languages $ \cal L $, $ \cal L' $, $ \cal L'' $:
\begin{itemize}
\item if $ \cal L ⊇ \cal L' $ then $ \cal L \msucceq \cal L' $;
  and if $ \cal L \msucceq \cal L' $, then $ \cal L \mexpreq \cal L' $;
\item if $ \cal L  \msucc \cal L ' $ then $ \cal L  \msucceq \cal L ' $ and $ \cal L '  \not\msucceq \cal L  $;
\item if $ \cal L \mexprm \cal L' $, the reverse implication holds:
  \begin{itemize}
  \item[] if $ \cal L  \msucceq \cal L ' $ and $ \cal L '  \not\msucceq \cal L  $ then $ \cal L  \msucc \cal L ' $
  \end{itemize}
  (otherwise, it might be that $ \cal L '  \not\msucceq \cal L  $ because $ \cal L '  \not\mexpreq \cal L  $);
%\item $ \msucc $ is transitive;
\item if $ \cal L ⊇ \cal L' $ and $ \cal L' \msucc \cal L'' $, then $ \cal L \msucc \cal L'' $.
\end{itemize}
\end{proposition}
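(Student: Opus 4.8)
The statement to prove is the last proposition, which collects four structural facts about the succinctness preorders $\msucceq$ and $\msucc$ (together with expressiveness $\mexpreq$, $\mexprm$ and set inclusion). Each item is essentially a bookkeeping argument from the definitions; I would prove them one at a time, in the order listed, since the later items reuse the earlier ones.

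\emph{First item.} If $\cal L \supseteq \cal L'$, then for every $\varphi' \in \cal L'$ we may take $\varphi = \varphi' \in \cal L$, which represents the same preorder and satisfies $|\varphi| = |\varphi'| < p(|\varphi'|)$ for, say, $p(x) = x+1$; hence $\cal L \msucceq \cal L'$. For the second half, suppose $\cal L \msucceq \cal L'$ via polynomial $p$: given any preorder $\succeq$ representable in $\cal L'$ by some $\varphi'$, the definition yields $\varphi \in \cal L$ with $\succeq_\varphi = \succeq_{\varphi'} = \succeq$, so $\succeq$ is representable in $\cal L$; thus $\cal L \mexpreq \cal L'$.

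\emph{Second item.} This is immediate from the definition of $\msucc$: the first conjunct of ``$\cal L \msucc \cal L'$'' is literally ``$\cal L \msucceq \cal L'$''. For ``$\cal L' \not\msucceq \cal L$'', argue by contradiction: if $\cal L' \msucceq \cal L$ held via some polynomial $q$, then every $\varphi \in \cal L$ with an $\cal L'$-equivalent would have one of size at most $q(|\varphi|)$; but the second conjunct of $\cal L \msucc \cal L'$ provides, for the polynomial $p = q$, a formula $\varphi \in \cal L$ having an $\cal L'$-equivalent yet all of whose $\cal L'$-equivalents have size $> q(|\varphi|)$ — a contradiction.

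\emph{Third item.} Assume $\cal L \mexprm \cal L'$, $\cal L \msucceq \cal L'$ and $\cal L' \not\msucceq \cal L$. The first conjunct of $\cal L \msucc \cal L'$ is given. For the second, fix any polynomial $p$; we must exhibit $\varphi \in \cal L$ that has an $\cal L'$-equivalent but all of whose $\cal L'$-equivalents have size $> p(|\varphi|)$. Since $\cal L \mexprm \cal L'$, every $\varphi \in \cal L$ \emph{does} have an $\cal L'$-equivalent, so the first bullet is automatic for any choice of $\varphi$; it remains to find one defeating $p$. Suppose for contradiction that for every polynomial $p$ and every $\varphi \in \cal L$ there is an $\cal L'$-equivalent of size $\le p(|\varphi|)$ — actually what we need is: if no such $\varphi$ exists for some particular $p$, then for \emph{that} $p$, every $\varphi \in \cal L$ has an $\cal L'$-equivalent of size $\le p(|\varphi|)$, which (since also every $\varphi' \in \cal L'$ trivially has itself, but we want the other direction) — here is the cleaner route: the failure of the second conjunct of $\cal L \msucc \cal L'$ means there exists a polynomial $p$ such that for every $\varphi \in \cal L$ with an $\cal L'$-equivalent, some $\cal L'$-equivalent has size $\le p(|\varphi|)$; combined with $\cal L \mexprm \cal L'$ (so ``with an $\cal L'$-equivalent'' is vacuous), this says exactly $\cal L' \msucceq \cal L$ via $p' = p+1$, contradicting the hypothesis $\cal L' \not\msucceq \cal L$. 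Hence the second conjunct holds and $\cal L \msucc \cal L'$.

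\emph{Fourth item.} Assume $\cal L \supseteq \cal L'$ and $\cal L' \msucc \cal L''$. By the first item, $\cal L \msucceq \cal L'$, and $\msucceq$ is transitive (compose the two polynomials), so $\cal L \msucceq \cal L''$: the first conjunct of $\cal L \msucc \cal L''$ holds. For the second conjunct, fix a polynomial $p$. Since $\cal L' \msucc \cal L''$, there is $\varphi' \in \cal L'$ with an $\cal L''$-equivalent and with all $\cal L''$-equivalents of size $> p(|\varphi'|)$. Take $\varphi = \varphi' \in \cal L$ (using $\cal L \supseteq \cal L'$); then $|\varphi| = |\varphi'|$, $\varphi$ has an $\cal L''$-equivalent (the same one $\varphi'$ does), and every $\cal L''$-equivalent of $\varphi$ is an $\cal L''$-equivalent of $\varphi'$, hence of size $> p(|\varphi'|) = p(|\varphi|)$. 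This gives the second conjunct, so $\cal L \msucc \cal L''$.

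\textbf{Main obstacle.} The only subtle point is the third item: one must be careful that ``$\cal L' \not\msucceq \cal L$'' can fail for two different reasons — a genuine size blow-up, or a mere lack of expressiveness ($\cal L' \not\mexpreq \cal L$) — and it is precisely the hypothesis $\cal L \mexprm \cal L'$ that rules out the second reason, letting one convert ``negation of the exponential-separation clause'' into ``$\cal L' \msucceq \cal L$''. Everything else is routine unfolding of the definitions and composition of polynomials.
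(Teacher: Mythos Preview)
Your proof is correct. The paper itself does not give an explicit proof of this proposition: it is stated as a plain proposition (not a \texttt{propositionrep}) in a section marked \texttt{\textbackslash nosectionappendix}, and is evidently treated as an elementary consequence of the definitions of $\msucceq$, $\msucc$, $\mexpreq$ and $\mexprm$. Your item-by-item unfolding is precisely the routine verification the paper leaves implicit, and your handling of the one non-trivial point --- that in the third item the hypothesis $\cal L \mexprm \cal L'$ is what rules out the ``expressiveness'' cause of $\cal L' \not\msucceq \cal L$ and lets the negation of the blow-up clause collapse to $\cal L' \msucceq \cal L$ --- is exactly right, including the $p' = p+1$ adjustment to recover the strict inequality in the definition of $\msucceq$.
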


Restricting the conditioning part of CP statements to be conjunctions of literals leads to a loss in succinctness.

\begin{example}
Consider $ 2n+1 $ binary attributes $ X₁, X₂, …, Xₙ, Y₁, Y₂, \allowbreak …, \allowbreak Yₙ, Z $, and let $ φ $ contain $ 2n+2 $ unary CP-statements with no free attribute: $ (x₁∨y₁) ∧ (x₂∨y₂) ∧ … ∧ (xₙ∨yₙ) : z ≥ \bar z $, $ ¬[ (x₁∨y₁) ∧ (x₂∨y₂) ∧ … ∧ (xₙ∨yₙ) ] : \bar z ≥ z $ and $ \bar xᵢ ≥ xᵢ $ and $ \bar yᵢ ≥ yᵢ $ for every $ i ∈ \{1, …, n\} $. Then $ φ ∈ §1-CP⋫§ $, but $ φ $ is not in conjunctive form. A set of conjunctive CP-statements equivalent to $ φ $ has to contain all $ 2ⁿ $ statements of the form $ μ₁μ₂…μₙ : z ≥ \bar z $ with $ μᵢ = xᵢ $ or $ μᵢ = yᵢ $ for every $ i $.
\end{example}

Also, free attributes enable the succinct representation of the relative importance of some attributes over others; disabling free attributes thus incurs a loss in succinctness:

\begin{example}
Consider $ n+1 $ binary attributes  $ X₁, X₂, …, Xₙ, Y $, let $ U = \{X₁, X₂, …, Xₙ\} $, and let $ φ = \{ U | y ≥ \bar y \} $. Then $ φ^* = \{ (uy, u'\bar y) | u, u' \allowbreak ∈ \dom U\}  $, and $ φ^* $ is equal to its transitive closure, so, if $ o  ≠ o' $, then $ o ≽_φ o' $ if and only if $ o[Y] = y $ and $ o'[Y] = \bar y $. This can be represented, without free attribute, only with formula $ ψ $ that contains, for every $ V ⊆ U $ and every $ v ∈ \dom V $, the statement $ vy ≥ \bar v\bar y $, where $ \bar v $ denotes the tuple obtained by inverting all values of $ v $. For every $ 0 ≤ i ≤ n $ there are $  n \choose i $ subsets of $ V $ of size $ i $, with $ 2^i $ ways to choose $ v ∈ \dom V $, thus $ ψ $ contains $ ∑₀ⁿ {n \choose i} 2^i = 3ⁿ $ statements.
\end{example}

Restricting to CP-nets yields a further loss in succinctness, as the next example shows:

\begin{example}
Consider $ n+1 $ binary attributes $ X₁, X₂, …, Xₙ, Y $, and let  $ φ $ be the $ §1-CP⋫∧§ $ formula that contains the following statements: $ xᵢ ≥ \bar xᵢ $ for $ i = 1, …, n $; $ x₁x₂…xₙ:y≥\bar y $; $ \bar xᵢ : \bar y≥y $ for $ i = 1, …, n $. The size of φ is linear in $ n $. Because preferences for $ Y $ depend on all $ Xᵢ $'s, a CP-net equivalent to $ φ $ will contain, in the table for $ Y $, $ 2ⁿ $ CP statements.
\end{example}

\begin{proposition}
The following hold:
\begin{itemize}
\item $\cal L  \msucc \cal L∧  $ for every $ \cal L $ such that $ §1-CP⋫§ ⊆ \cal L  ⊆ §CP§ $;
% (restricting to conjunctive statements incurs a loss in succinctness);
\item $\cal L  \msucc \cal L\mathord{⋫}  $ for every $ \cal L $ such that $ §1-CP∧§ ⊆ \cal L  ⊆ §CP§ $;
% (forbidding free attributes in CP statements incurs a loss in succinctness);
\item $ §1-CP⋫∧§ \msucc §CPnet§ $.
\end{itemize}
\end{proposition}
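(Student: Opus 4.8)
The plan is to obtain all three separations from the three worked examples that directly precede the statement, combined with the fact (from the preceding proposition) that language inclusion already yields $\msucceq$.

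The ``at least as succinct'' half is immediate in each case: $\cal L∧ ⊆ \cal L$ and $\cal L⋫ ⊆ \cal L$ hold by definition of the statement‑wise restrictions, and $§CPnet§ ⊆ §1-CP⋫∧§$ was observed when CP‑nets were introduced, so the preceding proposition gives $\cal L \msucceq \cal L∧$, $\cal L \msucceq \cal L⋫$, and $§1-CP⋫∧§ \msucceq §CPnet§$. It then remains, for each separation and each polynomial $p$, to exhibit a formula of the more succinct language that has an equivalent in the less succinct one but all of whose equivalents there exceed the bound given by $p$.

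For this I would take the family $(φ_n)_{n ∈ ℕ}$ of the matching example and check, mostly by inspection, three points: (i) each $φ_n$ belongs to the more succinct language — hence to every $\cal L$ in the stated range — and $|φ_n|$ is linear in $n$; (ii) each $φ_n$ does admit an equivalent in the less succinct language — for part~1 the conjunctive formula obtained by replacing each disjunctive condition by its minterms, for part~2 the free‑part‑free formula $ψ$ of the example, for part~3 the CP‑net whose table for $Y$ ranges over all instantiations of $X_1,…,X_n$; and (iii), the crucial point, that \emph{every} equivalent of $φ_n$ in the less succinct language has size $2^{\Omega(n)}$. From (i)--(iii) the conclusion follows: for an arbitrary $p$, picking $n$ with $2^{\Omega(n)} > p(|φ_n|)$ witnesses the strict inequality.

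The substance, and the main obstacle, is thus (iii), and the three arguments are of different kinds. For part~1, a conjunction of literals that entails a clause $x_i ∨ y_i$ must contain $x_i$ or $y_i$, so any conjunctive formula reproducing all worsening swaps that flip $Z$ under the condition $⋀_i (x_i ∨ y_i)$ must, for each of the $2^n$ ways of choosing one literal per clause, contain a statement whose conditioning part entails exactly that conjunction; the care needed here is to rule out that a weaker conjunction covers all the relevant interpretations and that free parts or longer worsening chains help. For part~2, the key structural fact is that $≽_{φ_n}$ has no strict chain of length two — anything strictly dominated has $Y$ at its worst value and hence dominates nothing — so in any representation without free parts every instance of the relation must come from a single sanctioned swap; this forces every statement used to swap a set $W$ containing $Y$ and to relate only alternatives agreeing outside $W$, and a count of which of the $2^n \times 2^n$ pairs such a statement can cover (those whose $U$‑parts differ exactly on $W ∖ \{Y\}$, with the values on that set pinned by the statement) then forces of the order of $3^n$ distinct statements, matching $ψ$. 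For part~3, I would invoke the standard fact that the graph of any CP‑net equivalent to a preorder must contain all of its preferential dependencies, so since $Y$ preferentially depends on every $X_i$ in $φ_n$, the conditional preference table of $Y$ in any equivalent CP‑net has $2^n$ rows. A final remark checks that each of these lower bounds is unaffected by allowing the ambient language to contain free parts or larger swapped sets, so that they hold against every formula of the relevant restricted language, which is exactly what $\msucc$ demands.
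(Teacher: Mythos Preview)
Your proposal is correct and follows the paper's approach: the paper does not give a separate proof for this proposition but relies entirely on the three preceding examples, exactly as you do, together with the observation that language inclusion yields $\msucceq$. Your sketch of the lower-bound arguments in (iii) actually goes beyond what the paper writes out---the paper simply asserts within each example that any equivalent formula in the restricted language must have exponential size (e.g.\ ``has to contain all $2^n$ statements'', ``a CP-net equivalent to $\varphi$ will contain $2^n$ CP statements''), whereas you outline why: the chain-length-one observation for part~2 and the preferential-dependency argument for part~3 are the right mechanisms, and your counting for part~2 correctly recovers the $3^n$ bound.
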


\ihel{est ce que vous pouvez revoir ce qui suit et la cette demo ? Bon, aprs, je sais pas si le resultat est tres interessant, vu que les langages sont d expressivité differente, on peut enlever si vous voulez}

We have seen that any complete preorder, and in particular the preference captured by any complete LP-tree  can be represented by a GAI.  This representation comes with no increase in size% when the tables of the LP tree are unconditional
.
%The converse is obviously not true, because the language of LP trees is strictly more expressive than GAI.
 % ca sera bcp plus complique si on a des parties condtions (en fait ça depend de ka syntaxe des parties conditions - il me semble que si c'est des CNF, ou des d-DNF ça passe ; si c'est des termes, ca passe aussi ; par contre des DNF, ou de FBDD ça va pas passer

\begin{propositionrep}
Any %unconditional
complete LPT can be transformed in polytime and space into an equivalent GAI.
\end{propositionrep}

\begin{appendixproof}
A complete LP tree $ φ $ induces a linear order over $ \domX $, thus we can define the \emph{rank} of alternative $ o $ w.r.t. $ ≽_φ $: $ §rank§(φ,o) = 1 + $ the number of alternatives strictly preferred to $ o $, so that the most preferred alternative has rank 1, the least preferred has rank $ \card{\domX} $:
$$
  §rank§(φ,o) = 1+\card{\{o' \in \domX \mid o' ≻_φ o\}}.
$$
\cite{FargierGimenezMenginNguyen:mpref22} %,FargierGimenezMenginNguyen:corr-abs22}
explain how $ §rank§(φ,o) $ can be decomposed as a weighted sum of ``local'' ranks associated to the nodes of $ φ $:
\begin{multline}
   §rank§(φ,o) = 1 + \label{eq:rank-decomp}
   \sum_{N ∈ §nodes§(φ)}^{α:≥ ∈ §CPT§(N)}
   ⟦o[§Inst§(N)] \!=\! §inst§(N) ∧ o ⊨ α ⟧ \\\nonumber
   × \big( r(≥,o[§Var§(N)]) - 1\big) 
   × \card{\dom{§Desc§(N)}} %d^{n-\card{§Anc§(N)∪§Var§(N)}}
\end{multline}
where :
\begin{itemize}

\item  $ §nodes§(φ) $ denotes the set of nodes of $ φ $;

\item $ ⟦o[§Inst§(N)] = §inst§(N) ∧ o ⊨ α ⟧ $ is an indicator function, that equals 1 when the condition $ o[§Inst§(N)] = §inst§(N) ∧ o ⊨ α $ is true; that is, when $ N $ is on the branch of $ φ $ that corresponds to $ o $, and $ α:≥ $ is the rule that orders at $ N $ alternatives that have same values as $ o $ for the attributes in the ancestor nodes of $ N $; and equals 0 otherwise;

%\item $ >_{N,o} $ is the linear order over $ \dom{§Var§(N)} $ specified in $ §CPT§(N) $ for ordering pairs of alternatives $ (o',o'') $ that are decided at $ N $ and have $ o'[§Anc§(N)] = o''[§Anc§(N)] = o[§Anc§(N)] $ ;

\item $ r(≥,o[§Var§(N)]) $ denotes the rank in $ \dom{§Var§(N)} $ with respect to $ ≥ $ of the instantiation given by $ o  $ to $ §Var§(N) $; so that $ r(≥,o[§Var§(N)]) - 1 $ is the number of subtrees rooted at children of $ N $ that are less preferred than $ o $ at $ N $;

\item $ §Desc§(N) = \cal X  - (§Anc§(N)∪§Var§(N)) $ is the set of attributes that appear below $ N $ in that branch, so that $ \card{\dom{§Desc§(N)}} $ is the number of instantiations that are ``contained" in every subtree of $ φ $ rooted at any one child of $ N $.

\end{itemize}
Thus we can define, for every node $ N $ of $ φ $, and every rule $ α:≥ ∈ §CPT§(N) $, a sub-utility $ u_{N,α} $ as follows:
$$ \arraycolsep1pt
  u_{N,α}(o) = \left\{\begin{array}{l}
      \big( r(≥ ,o[§Var§(N)]) \!-\! 1\big) \!×\! \card{\dom{§Desc§(N)}}
        \text{ if } o[§Inst§(N)] \!=\! §inst§(N) \text{ \& } o ⊨ α \\
      0 \text{ otherwise } %o[§Inst§(N)] \!≠\! §inst§(N) \text{ or } o ⊭  α\\
  \end{array}\right.
$$
and define a utility $ u_φ $ that orders the alternatives as $ φ $ as follows:
$$
  u_φ = - \sum_{N ∈ §nodes§(φ)}^{α:≥ ∈ §CPT§(N)} u_{N,α}
$$
The number of non-null entries in the table of every $ u_{N,α} $ is equal to $ \dom{§Var§(N)} - 1 $, which also corresponds to the space needed to represent the linear order $ ≥ $ of the rule $ α:≥ $. Assuming that $ §Var§(N) $ (resp. $ §Desc§(N) $) contains $ p $ (resp. $ q $) attributes, the largest entry cannot be larger than $ d^{p+q} ≤ d^n $, where $ d $ is the size of the largest attribute domain, so the number of digits needed for representing the non-null values is polynomial in $ n $ and $ d $. Thus the size of the representation of $ u_φ $ is polynomial in the size of $ φ $.
\end{appendixproof}

\section{Queries}               %\renewcommand{\subsection}{\querypar}
\label{sec:queries}
%\nosectionappendix

Table~\ref{table:queries} gives an overview of the tractability of the queries that we study in this section. We begin this section with the two queries that have generated most interest in the literature on CP statements.

\begin{table}%{\begin{minipage}[b]{33em}
\newcommand{\entetecol}[1]{\multicolumn1{c}{\begin{rotate}{70} \parbox[t]{5.2em}{#1}\end{rotate}}}
\newcommand{\entetecolh}{\multicolumn{1}{c}{\vrule height 5.4em width0pt}}
\tabcolsep2pt
\newbox\mybox\setbox\mybox=\hbox{\NP}
\newcommand{\mycwd}{\hspace*{\wd\mybox}}
\def\P{✔}\def\NPh{✘\!∘}\def\NPc{✘}\def\PSPc{\ensuremath{✘\!✘}}\def\PSPh{\ensuremath{✘\!✘\!∘}}\def\sPh{\#✘\!∘}\newcommand\sPc{\#✘}%\newcommand\dPh{\#✘\!∘}

\begin{center}
\begin{tabular}{|l|c|c|c||c|c|c|c|c|c||c|c|}
%	\hline
% pour réserver de la place pour la ligne d'entête, si on utilise l'environnement rotate, qui ne réserve pas d'espace. 5.2 em = 6em * sin(60°), puisque l'entête de la dernière colonne est dans une parbox de largeur 6em 
\entetecolh  & \entetecol{§GAI§}& \entetecol{§GAI§$_k$}& \entetecol{§GAI§$_1$}&   \entetecol{§CP§} &  \entetecol{§1-CP⋫§}  &  \entetecol{§1-CP⋫∧§}   & \entetecol{§CP§net} & \entetecol{§CP§net\acycl} & \entetecol{§CP§net\acyclpoly}  & \entetecol{§LPT§}  \\ \hline%& \entetecol{§LTP§\ w. can. tables}  \\ \hline
& & & \mycwd & \mycwd & \mycwd & \mycwd & \mycwd & \mycwd & \mycwd \\[-1em] %& \mycwd \\[-1em]
\pb{consistency}                     & & & & \PSPc & \PSPc & \PSPc &      &  ⊤   & ⊤  & \modifLPTantisym{⊤} \\ \hline %& \P \\ \hline
\pb{R-comparison}, $R∈\{≽,≻,⋈\}$         & \nouv {\P}& \nouv {\P}& \nouv {\P}& \PSPc & \PSPc & \PSPc & \NPh & \NPc & \P & \P \\ \hline %& \P \\ \hline
\pb{$∼$-comparison}                      & \nouv {\P}& \nouv {\P}& \nouv {\P} & \PSPc & \PSPc & \PSPc &      &  ⊥   & ⊥  & \P \\ \hline %& \P \\ \hline
\pb{equivalence}                         & & & & \PSPc & \NPh  & \NPh  &    &  \P  &  \P & \NPh \\ \hline %& \P \\ \hline
\pb{top-$p$}                            & \nouv {\P}& \nouv \P& \nouv \P&       &       &       &      &  \P  & \P & \P \\ \hline %& \P \\ \hline
\pb{undominated check}      & \nouv{\NPc} & \nouv{\NPc} & \nouv{\P} & \PSPc & \PSPc & \PSPc &      &  \P  & \P & \P \\ \hline %& \P \\ \hline
\nouv{\pb{undominated extract}}                     & \nouv{\NPh}& \nouv{\NPh} &\nouv{\P} %  PAS PROUVÉ : & \nouv{\PSPc} & \nouv{\PSPc} & \nouv{\PSPc}  
  & & & &      &   \nouv{\P}  & \nouv{\P} & \nouv{\P} \\ \hline %& \nouv{\P} \\ \hline
\pb{$≽$-cut extraction} & \nouv{\NPc}& \nouv{\NPc}& \nouv{\P} &  \P   &  \P   &  \P   &  \P  &  \P  & \P & \P \\ \hline %& \P \\ \hline
\pb{$≻$-cut extraction}                  & \nouv{\NPc} & \nouv{\NPc} & \nouv{\P} & \PSPc & \PSPc & \PSPc &      &  \P  & \P & \P \\ \hline %& \P \\ \hline
\pb{$≻$-cut counting}                    & \nouv{\sPc} & \nouv{\sPc} & \nouv{\sPc} & \PSPh & \PSPh & \PSPh & \nouv{\sPh} &   \nouv{\sPh}   &    & \P \\ \hline %& \P \\ \hline
\end{tabular}
\end{center}

\medskip\raggedright
Each column corresponds to one sublanguage of §CP§.
%The double vertical line separates languages that have some acyclicity property (to the right) from those that do not. Also, t
They are sorted in order of decreasing expressiveness from left to right, except when columns are separated by double lines.
%that, for a fixed $ k $, $ §CPnet§ $ and $ §CP§\acycllexk $ cannot be compared in terms of expressiveness.
For each query and sublanguage:
%✓ = indicates that the query can be solved in time polynomial in $ | φ | + | §result§ | $;  ✘ = no such algorithm unless \P = \NP, or \PSPACE = \P;
⊤ = always true for the language; ⊥ = always false for the language;
$ \P = $ polytime answer; $\NPc$ = \NP/\coNP-complete query; $\NPh = $ \NP/\coNP-hard query;
\sPc = \#P-complete query; \sPh = \#P-hard query;
$ \PSPc = $ \PSPACE-complete query;
$ \PSPh = $ \PSPACE-hard query.

%\vskip 1ex

\caption{\label{table:queries} Complexity of queries.}
%\end{minipage}}
\end{table}

\subsection{Consistency}

Knowing that a given $ φ ∈ §CP§ $ is consistent (that is, that $ ≽_φ $ is antisymmetric) is valuable, as it makes several other queries easier. It also gives some interesting insights into the semantics of $ φ $. The following query has been addressed in many works on CP statements:
%\footnote{This query is often called \textit{consistency}.}

\defquery{Consistency} Given $ φ $, is $ φ $ consistent?!

\cite{Boutilieretal:jair04} prove that when its dependency graph $ D_φ $ is acyclic, then a CP-net $ φ $ is consistent. This result has been extended by \cite{DomshlakBrafman:kr02,Brafmanetal:jair06,Wilson:aij11}, who give weaker, sufficient syntactical conditions that guarantee that a locally consistent set of unary, conjunctive CP statements is consistent%
\ifklptcompat; more generally, by definition of $k$-lexico-compatibility, every formula of $ §CP§\acycllexk $ is consistent (since it is compatible with a complete LP-tree). \else
.
\fi
\cite[Theorem 3 and 4]{Goldsmithetal:jair08} prove that \pb{consistency} is \PSPACE-complete for §1-CP⋫∧§.
%We make the following observation on LP-trees.
\modifLPTantisym{We have already seen that the preorder defined by any LP tree is antisymmetric.}

%\begin{propositionrep}
%\pb{consistency} can be checked in polynomial time for $ §LPT§ $.
%\end{propositionrep}

%\begin{appendixproof}
%This is because $ φ ∈ §LPT§ $ is consistent if and only for every rule $ α:≥$ at every node, $ ≥ $ is antisymmetric.
%\end{appendixproof}

\subsection{Comparing alternatives}
A basic question, given a formula $ φ $ and two alternatives $ o , o' $ is: how do $ o $ and $ o' $ compare, according to $ φ $? Is it the case that $ o ≻_φ o' $, or $ o' ≻_φ o $, or $ o ⋈_φ o' $, or $ o ∼_φ o' $? We define the following query, for any relation $ R ∈ \{≻,≽,∼,⋈\} $:

\defquery{$R$-comparison} Given formula $ φ $, alternatives $ o ≠ o' $, is it the case that $ o R_φ o' $?!

\begin{notAAMAS}
Since the four relations of interest are defined from $ ≽_φ $, in order to decide which is the case, it is sufficient to be able to check if $ o ≽_φ o' $ and to check if $ o' ≽_φ o $. Conversely, if we can answer queries about $ ≻_φ $, and about one of $ ⋈_φ $ or $ ∼_φ $, then we can decide when $ o ≽_φ o' $ holds. For languages where $ ≽_φ $ is always antisymmetric, then \pb{-comparison} is trivial (it is true when $ o = o' $) and \pb{$≽$-comparison} and  \pb{$≻$-comparison} have the same complexity. Furthermore, for languages where $ ≻_φ $ is a linear order, then $ ≽_φ $ is antisymmetric and, moreover, \pb{$⋈$-comparison} is trivially false.
\end{notAAMAS}

For LP-trees, in order to compare alternatives $ o $ and $ o' $, one only has to traverse the tree from the root downwards until a node that decides the pair is reached, or down to a leaf if no such node is encountered: in this case $ o $ and $ o' $ are incomparable. Note that checking if a node decides the pair, and checking if a rule at that node applies to order them, can both be done in polynomial time. \nouv{For generalized additive utilites, two alternatives can be compared by computing their utilities, which is tractable.}

\begin{proposition}
\pb{$R$-comparison} is in \P\ for §LPT§\ and for §GAI§\ for all relations $ R \!∈ \!\{≻, \! ≽,\! ∼,\! ⋈\} $.
\end{proposition}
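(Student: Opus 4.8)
The plan is to show that for any LP-tree $\varphi$ and any pair of distinct alternatives $o, o'$, one can determine in polynomial time which of $o \succ_\varphi o'$, $o' \succ_\varphi o$, $o \sim_\varphi o'$, or $o \bowtie_\varphi o'$ holds, and that this immediately settles \pb{$R$-comparison} for every $R \in \{\succ, \succeq, \sim, \bowtie\}$. The key observation is the structure of $\succeq_\varphi$ for LP-trees: by the definition of $\varphi^*$ given just before Proposition~\ref{prop:LPT=preorder}, two distinct alternatives $o, o'$ are related by $\varphi^*$ precisely when the unique node $N$ that decides $\{o,o'\}$ carries a rule $\alpha : \geq$ with $o[\mathord{§NonInst§}(N)] = o'[\mathord{§NonInst§}(N)] \models \alpha$ and $o[W] \geq o'[W]$ (where $W = \mathord{§Var§}(N)$); and $\succeq_\varphi$ is just the reflexive closure of $\varphi^*$. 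So, unlike the general §CP§ case where $\succeq_\varphi$ is a transitive closure of many swaps, for LP-trees a single node determines everything about the pair.

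First I would describe the traversal procedure: starting at the root, repeatedly follow the outgoing edge — the unlabelled one if the current node has a single child, or the one labelled by the common value $o[\mathord{§Var§}(N)] = o'[\mathord{§Var§}(N)]$ if $o$ and $o'$ still agree there — until either a node $N$ is reached where $o[\mathord{§Var§}(N)] \neq o'[\mathord{§Var§}(N)]$ (so $N$ decides $\{o,o'\}$), or a leaf is reached without ever splitting. In the latter case $o$ and $o'$ agree on every attribute on the branch; one then checks whether $o = o'$ (excluded by hypothesis) — so in fact, since attributes may be missing from a branch, $o$ and $o'$ may differ only on non-branch attributes, and then neither $o \succeq_\varphi o'$ nor $o' \succeq_\varphi o$, giving $o \bowtie_\varphi o'$. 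In the former case, at node $N$ with $W = \mathord{§Var§}(N)$, locate the unique rule $\alpha : \geq$ of $\mathord{§CPT§}(N)$ with $o[\mathord{§NonInst§}(N)] = o'[\mathord{§NonInst§}(N)] \models \alpha$ (this rule exists and is unique by the well-formedness condition on LP-trees); then $o \succeq_\varphi o'$ iff $o[W] \geq o'[W]$, and $o' \succeq_\varphi o$ iff $o'[W] \geq o[W]$, from which $\succ_\varphi$, $\sim_\varphi$, $\bowtie_\varphi$ are read off directly.

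I would then argue the complexity bounds. The traversal visits at most one node per level, and a branch has at most $n$ nodes (no attribute repeats on a branch, per the constraints guaranteeing well-formedness); at each visited node, deciding which edge to follow, locating the applicable rule, and — if $N$ decides the pair — looking up whether $o[W] \geq o'[W]$ in the preorder that the rule specifies in extension, are all polynomial in the size of $\varphi$ (as already remarked in the text preceding the proposition, "checking if a node decides the pair, and checking if a rule at that node applies to order them, can both be done in polynomial time"). Hence the whole procedure runs in polynomial time, which gives \pb{$R$-comparison} $\in \P$ for §LPT§\ for every $R \in \{\succ, \succeq, \sim, \bowtie\}$.

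The step requiring the most care is justifying that the single node $N$ genuinely captures the entire relationship between $o$ and $o'$ — in particular, that transitivity of $\succeq_\varphi$ does not create additional comparabilities that the one-node test would miss. But this is exactly what the transitivity argument in the proof of Proposition~\ref{prop:LPT=preorder} establishes: whenever $o \succeq_\varphi o''$ is obtained via an intermediate $o'$, the witnessing node is already one of the nodes deciding $\{o,o'\}$ or $\{o',o''\}$, hence lies on the branch of $o$ and decides $\{o,o''\}$ directly. Thus $\succeq_\varphi$ adds nothing beyond the reflexive closure of $\varphi^*$, and the one-node test is both sound and complete; I would simply cite Proposition~\ref{prop:LPT=preorder} (and its proof) for this fact rather than redo the case analysis.
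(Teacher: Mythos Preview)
Your proposal is correct and follows essentially the same approach as the paper's (informal) justification given in the paragraph immediately preceding the proposition: traverse the tree from the root until either a deciding node or a leaf is reached, and read off the comparison locally in polynomial time. Your final paragraph's worry about transitivity is actually moot, since for LP-trees $\succeq_\varphi$ is \emph{defined} as merely the reflexive closure of $\varphi^*$ (not the reflexive-and-transitive closure used for general §CP§), so the one-node test \emph{is} the definition of $\succeq_\varphi$ rather than something that needs to be shown complete for it; citing Proposition~\ref{prop:LPT=preorder} is still reasonable, but only to confirm that the resulting relation is indeed a preorder, not to rule out extra comparabilities.
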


\begin{commentaire}
The complexity of comparisons has been studied by \cite{Boutilieretal:jair04} for CP nets, by \cite{Goldsmithetal:jair08} for §1-CP⋫§\ and by \cite{Wilson:aij11} for §1-CP∧§. \cite{Goldsmithetal:jair08} propose a simple non-deterministic algorithm to prove membership in \PSPACE\ of \pb{$≽$-comparison}; we rewrite the algorithm here for our more general preference statements:

\begin{algo}{$≽$-comparison. Input: $ o, o', φ $}
\item Repeat:
	\begin{enumerate}
	\item guess $ o'' $, $ α|V:w≥w' ∈ φ $;  $ Y ← \cal X  -(U∪V∪W) $;
	\item if $ α|V:w≥w' ∈ φ $  sanctions $ (o,o'') $: $ o ← o'' $;
	\end{enumerate}
\item[] until $ o'' = o' $.
\end{algo}

This algorithm only needs space to store two alternatives at any iteration, and checks sanctioning w.r.t. one rule at every iteration. Repeated applications of this algorithm can answer \pb{$R$-comparison} queries for $ R ∈ \{≻,∼,⋈\} $; for instance, to check if $ o ⋈_φ o' $, we check that $ o ≽_φ o' $ does not hold and that $ o' ≽_φ o $ does not hold either. \footnote{Recall that  \NPSPACE\ = \cclass{co-NPSPACE} = \PSPACE\ . }

%\cite{Boutilieretal:jair04} prove that \pb{$≽$-comparison} is \NP-hard for binary-valued, acyclic CP-nets. \cite{Goldsmithetal:jair08} prove that the problem is \PSPACE\ complete for §1-CP⋫∧§,and  using reductions from this problem, prove other hardness results. They lead to the following one:
\end{commentaire}

For §CP§, tractability of comparisons, except in some trivial cases, comes at a heavy price in terms of expressiveness: \pb{$≽$-comparison} is tractable for CP-nets when the dependency graph is a polytree \citep[Theorem 14]{Boutilieretal:jair04}, but \cite[Theorems 15, 16]{Boutilieretal:jair04} prove that \pb{$≽$-comparison} is already \NP-hard for the quite restrictive language of binary-valued, directed-path singly connected CP-nets, which are acyclic. \cite[Prop. 7, Corollary 1]{Goldsmithetal:jair08} prove that \pb{$
≽$-comparison},  \pb{$≻$-comparison},  \pb{$⋈$-comparison} and  \pb{$∼$-comparison} are \PSPACE-complete for $ §1-CP⋫∧§ $ and for consistent, locally complete formulas of $ §1-CP⋫§ $. More precise hardness results for acyclic CP-nets are also shown in \cite{LukasiewiczMalizia:art-int19}. Proposition~\ref{prop:compl-comparisons} completes the picture.

\begin{TODO}
Generalize next prop : \pb{$∼$-comparison} is easy for $ §k-CP §\mathord{ \acycllex} $.
\end{TODO}

\begin{propositionrep}\label{prop:compl-comparisons}
\pb{$≻$-comparison} and \pb{$ ⋈ $-comparison} are \NP-hard for the language of acyclic CP-nets, and tractable for polytree CP-nets.
\ifklptcompat
\pb{$∼$-comparison} is easy for $ §1-CP §\mathord{ \acycllex} $.
\fi
\end{propositionrep}

\begin{toappendix}
Proposition~\ref{prop:compl-comparisons} is proved using
%the simple observations in the lemma below, and
a result about the \pb{ordering} query introduced in~\cite{Boutilieretal:jair04}: it is a particular case of the \pb{top-$p$} that is recalled in section~\ref{sec:top-p}.

\defquery {Ordering}  Given $ S ⊆ \dom{\cal X } $ with $ \card S = 2 $, and $ φ $, return some $ o ∈ S $ such that $ o' ⊁_φ o $, where $ o' $ is the other element of $ S $.!

%\defquery {Top-$ p $}  Given $ S ⊆ \dom{\cal X } $, $ p < |S| $,  and $ φ $, find $ o₁, o₂, …, o_p ∈S $ such that for every $ i ∈ {1,…,p} $, for every $ o' ∈ S $, if $ o' ≻_φ oᵢ $ then $ o' ∈ \{o₁,…,o_{i-1}\} $.!

Note that when $ S $ contains exactly two elements, at least one of them is not strictly dominated by the other; %and the \pb{ordering} query will return one such elements;
it the two elements in $ S $ are incomparable, then the \pb{ordering} query may return any one of them.

\begin{proof} %[of Proposition~\ref{prop:compl-comparisons}]
Note that for ayclic CP-nets (and thus for polytree CP-nets), \pb{$ ≽ $-comparison} and \pb{$≻$-comparison} are ``almost" equivalent, in the sense that for different alternatives $ o $ and $ o' $, $ o ≻_φ o' $ iff $ o ≽_φ o' $ (because acyclic CP-nets are consistent). In particular, \pb{$ ≽ $-comparison} can be reduced to \pb{$≻$-comparison} for consistent languages, thus \pb{$≻$-comparison}  is \NP\ hard for acyclic CP-nets because \pb{$ ≽ $-comparison} is hard for this language \cite[Theorems 15, 16]{Boutilieretal:jair04}.
%and from property~\ref{lemma:compl-comparisons:irr-part} in lemma~\ref{lemma:compl-comparisons}.

\pb{$≻$-comparison} can also be reduced, still for languages that guarantee consistency, to \pb{$⋈$-comparison}: consider alternatives $ o ≠ o' $, in order to check if $ o ≻ o' $ we can ask if $ o ⋈ o' $: if the answer is ``yes'', then $ o ⊁ o' $; if the answer is ``no'', ask the \pb{ordering} query for $ S = \{o,o'\} $: the answer must be, in polynomial time for acyclic CP-nets \citep[Theorem 5]{Boutilieretal:jair04}, that $ o ⊁ o' $ or $ o' ⊁ o $: if the answer is that $ o ⊁ o' $, it answers the initial query; if the answer is that $ o' ⊁ o $, since we know that $ o ≻ o' $ or $ o' ≻ o $ because $ o $ and $ o' $ are not incomparable and $ ≽ $ is antisymmetric, it must be the case that $ o ≻ o' $.

Finally, \pb{$ ≽ $-comparison} is tractable for polytree CP-nets, and two calls of this query at most can answer \pb{$≻$-comparison} and \pb{$⋈$-comparison}.

%Property \ref{lemma:compl-comparisons:incomp} shows that \pb{$⋈$-comparison}  too is \NP\ hard for this language because \pb{ordering} is easy for this language \citep[Theorems 5]{Boutilieretal:jair04} and \pb{$ ≽ $-comparison} is hard. These two queries are tractable for polytree CP-nets because \pb{$ ≽ $-comparison} is (property \ref{lemma:compl-comparisons:tract} in lemma \ref{lemma:compl-comparisons}).
%\ifklptcompat
%\pb{$∼$-comparison} is easy for $ §1-CP §\mathord{ \acycllexk} $ because lexico-compatible formulas are consistent. 
%\fi
\end{proof}
\end{toappendix}

\begin{TODO}
Can we generalize, at least for §1-CP⋫§? It seems difficult with importance statements, because the results follow from work on planning with unary operators by \cite{BrafmanDomshlak:jair03}, but if there are ``free'' attributes then it does not seem to be a unary operator anymore?
\end{TODO}

\begin{commentaire}\mdfsubtitle{Si on parle de consistence ?}
Theorem 2 in \cite{Goldsmithetal:jair08}: completeness even still holds for locally consistent, locally complete and ``complete'' / antisymmetric §1-CP⋫§\ theories (but the reduction yields a non-conjunctive theory.)
\end{commentaire}

\subsection{Comparing theories}

Checking if two theories yield the same preorder can be useful during the compilation process. We say that two formulas $ φ $ and $ φ' $ are equivalent if they represent the same preorder, that is, if $ ≽_φ $ and $ ≽_{φ'} $ are identical; we then write $ φ ≡ φ' $.

\defquery {Equivalence} Given two formulas $ φ $ and $ φ' $, are they equivalent?!

% Note that, since we have defined consistent formulas to be those that describe an antisymmetric preorder, two formulas can be inconsistent without being equivalent, i.e. without specifying the same preorder.

Consider a formula $ φ ∈ §CP§ $, two alternatives $ o, o' $, and let $ φ' = φ ∪ \{o ≥ o'\} $: clearly $ o ≽_{φ'} o' $, thus $ φ ≡ φ' $ if and only if $ o ≽_φ o' $. Therefore, if a language~$ \cal L ⊆ §CP§ $ is such that adding the CP statement $ o ≥ o' $ to any of its formulas yields a formula that is still in $ \cal L  $, then \pb{equivalence} has to be at least as hard as \pb{$≽$-comparison} for $ \cal L  $.  This is the case of §CP§. The problem remains hard for §1-CP⋫§, because it is hard to check the equivalence, in propositional logic, of the conditions of statements that entail a particular swap $ x ≥ x' $.

%The situation is very different for some sublanguages of §1-CP⋫§, because of the existence of a canonical form.

\begin{example} \label{exple:eqiv=hard}
Consider three attributes $ A $, $ B $ and  $ C $ with respective domains $ \{a, \bar a\} $, $ \{b, \bar b\} $ and $ \{c₁, c₂, c₃\} $. Consider two CP statements $ s = \bar a: c₁ ≥ c₂ $ and $ s' =  b: c₂ ≥ c₃ $, and let $ φ = \{s, s', a : c₁ ≥ c₃\} $. Because of statements $ s $ and $ s' $ we have $  \bar abc₁ ≥_φ \bar abc₂ ≥_φ \bar abc₃ $; also, $  abc₁ ≥_φ abc₃ $ because of statement $ a : c₁ ≥ c₃ $. Hence, for any $ u ∈ \dom A × \dom B $, if $ u ⊧ a ∨ (\bar a b) $ then $ uc₁ ≥ uc₃ $. Thus $ φ ≡ φ ∪ \{\bar a b : c₁ ≥ c₃\}  ≡ φ ∪ \{b : c₁ ≥ c₃\} $: the last equivalence follows from the fact that $ a ∨ (\bar a b) ≡ a ∨ b $.
\end{example}

%The following result is then a consequence of the fact that checking if two propositional, consistent DNFs are equivalent is \NP-hard.

\begin{propositionrep}
\pb{equivalence} is \coNP-hard for $ §1-CP⋫\!∧§\! \acycl $, and for $ §1-LPT§∧ $, both restricted to binary attributes.
\end{propositionrep}

\begin{proofsketch}
It is not difficult to reduce \pb{equivalence} in propositional logic to \pb{equivalence} for $ §1-CP⋫∧§\acycl $, and for $ §1-LPT§∧ $.
\end{proofsketch}

\begin{toappendix}
Given a propositional language $ \cal P  $ we define $ \cal P ^{∨} $ to be the set of finite disjunctions of formulas in $ \cal P  $, and:
\begin{langlist}
\item $ §1-CP§ ⋫\cal P  $ is §1-CP⋫§\ restricted to those statements such that the condition is in $ \cal P  $
\item $ §1-LPT§\cal P  $ is §1-LPT§\ restricted to those LP-trees such that the condition of every rule is in $ \cal P  $.
\end{langlist}

The proof of the proposition is based on the following lemma, which formalizes the intuition suggested by Example~\ref{exple:eqiv=hard}.

\begin{lemma}
Given a propositional language $ \cal P  $ closed for conjunction, \pb{equivalence} for $ \cal P ^{∨} $ (in the sense of propositional logic), %restricted to consistent formulas,
reduces to \pb{equivalence} for $§1-CP⋫§\cal P $\ restricted to acyclic
%, locally consistent
formulas, and to \pb{equivalence} for $ §1-LPT§\cal P $.
\end{lemma}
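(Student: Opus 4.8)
The plan is to reduce $\cal P^∨$-equivalence to the equivalence problem of each of the two target languages by constructions of the same shape. Given a $\cal P^∨$-formula $γ$ over an attribute set $\cal Y$, I will add one fresh binary attribute $C$ with values $c,\bar c$ and build, over $\cal Y∪\{C\}$, a formula whose induced preorder has, apart from the reflexive pairs, exactly the strict comparisons $uc ≻ u\bar c$ for those $u∈\dom{\cal Y}$ with $u⊧γ$; such a preorder depends on $γ$ only through its set of $\cal Y$-models and conversely determines that set. Then, given $\cal P^∨$-formulas $α$ and $β$, one may assume they are over a common attribute set $\cal Y$ (otherwise take the union of their variable sets), so that the formulas built for $α$ and $β$ are over the same attributes $\cal Y∪\{C\}$; since $α$ and $β$ are propositionally equivalent iff they have the same $\cal Y$-models, the formula built for $α$ is equivalent, as a representation of a preorder, to the one built for $β$ iff $α≡β$. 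Both constructions will be polynomial-time, so this is the claimed reduction; only the way the preorder is realised differs.

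For $§1-CP⋫§\cal P$ restricted to fully acyclic formulas, write $γ = γ_1 ∨ … ∨ γ_q$ with $γ_i∈\cal P$ and take $ψ_γ := \{\, γ_i : c ≥ \bar c \mid 1 ≤ i ≤ q \,\}$. Every statement has empty free part, the single swapped attribute $C$ and a condition in $\cal P$, so $ψ_γ ∈ §1-CP⋫§\cal P$; the dependency graph of $ψ_γ$ has edges only from attributes of $\cal Y$ into $C$ -- there are no free attributes, and $C$ is fresh -- hence is acyclic, so $ψ_γ ∈ §1-CP⋫§\cal P\acycl$. The worsening swaps sanctioned by $γ_i : c ≥ \bar c$ are exactly the pairs $(uc, u\bar c)$ with $u ∈ \dom{\cal Y}$ and $u ⊧ γ_i$ -- no free attribute, $C$ the only swapped attribute, everything else fixed -- so $ψ_γ^{*} = \{\, (uc, u\bar c) \mid u ∈ \dom{\cal Y},\ u ⊧ γ \,\}$. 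This set is its own transitive closure: no pair of it starts from an outcome whose $C$-value is $\bar c$, so two of its pairs cannot be chained; hence $≽_{ψ_γ}$, the reflexive--transitive closure of $ψ_γ^{*}$, is simply its reflexive closure, i.e.\ exactly the target preorder. Since $γ \mapsto ψ_γ$ is visibly polynomial, the generic argument settles the $§1-CP⋫§\cal P\acycl$ case.

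For $§1-LPT§\cal P$ the plan is to realise the same preorder by an LP-tree $T_γ$: a ``spine'' whose successive nodes carry the attributes of $\cal Y$, each with a single unlabelled child and a rule that leaves its two values incomparable (so the spine decides no pair), ending at a leaf labelled $C$; then the comparison of $uc$ with $u\bar c$ is settled at $C$ by the unique rule of $§CPT§(C)$ applicable to $u$, which we want to order $c ≥ \bar c$ when $u⊧γ$ and to leave $c,\bar c$ incomparable otherwise. The delicate step, which I expect to be the main obstacle, is that an LP-tree requires the conditions in $§CPT§(C)$ to be $\cal P$-formulas that partition $\dom{§NonInst§(C)}$: the conditions $γ_1,\dots,γ_q$ overlap and their complement $¬γ$ is not in $\cal P$, so the disjunctive content of $γ$ must be re-expressed as a genuine partition into $\cal P$-formulas, and this has to be done while keeping $T_γ$ of polynomial size; this re-encoding is the heart of the argument. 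Granting such a $T_γ$ with $≽_{T_γ}$ equal to the target preorder, the generic argument again yields $T_α ≡ T_β$ iff $α ≡ β$, and since the construction is polynomial this is a genuine reduction, which completes the proof of the lemma.
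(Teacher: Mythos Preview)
Your argument for $§1-CP⋫§\cal P\acycl$ is correct and is exactly the paper's construction.

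For $§1-LPT§\cal P$ there is a genuine gap. You rightly notice that the formal LP-tree definition requires the conditions in $§CPT§(C)$ to form a $\cal P$-partition of $\dom{§NonInst§(C)}$, declare that producing such a partition from $γ_1,\dots,γ_q$ in polynomial size ``is the heart of the argument'', and then simply grant its existence. That is the missing step, and it cannot be filled as stated: for $\cal P$ the conjunctions of literals, take $γ=\bigvee_{i=1}^{n} x_iy_i$ over $2n$ binary attributes; any partition of $\{0,1\}^{2n}$ into terms monochromatic for $γ$ must separate the $2^{n}$ assignments with $y_i=\neg x_i$ for all $i$ (a term containing two of them leaves some pair $x_j,y_j$ unfixed and hence also contains a point with $x_j=y_j=1$), so no polynomial-size re-encoding of the kind you posit exists in general.

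The paper does not attempt any such re-encoding. Its linear LP-tree carries the spine nodes ``with no rule'' and gives the bottom node ``the same preference rules as above'', i.e.\ the rules $γ_i:c\ge\bar c$ taken verbatim from the $§1-CP⋫§\cal P$ construction. This does not literally satisfy the exactly-one-rule clause either; the paper is tacitly relaxing that clause, which is harmless here because every rule applicable at the bottom node prescribes the same local order and an unmatched $u$ simply leaves $c,\bar c$ incomparable, so the induced preorder is still the intended one. Your proof is complete under that same relaxed reading; the $\cal P$-partition you are searching for is neither needed nor the paper's route.
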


\begin{proof}
Consider two formulas $ α = ⋁_{i∈I} αᵢ $ and $ α' = ⋁_{i∈I'}α'ᵢ $ over a set $ \cal X  $ of binary attributes, with all $ αᵢ $'s and $ α'ᵢ $'s in $ \cal P  $; take some binary attribute $ X ∉ \cal X  $, with values $ x $ and $ \bar x $, and let $ φ = \{αᵢ: x ≥\bar x | i ∈ I\} $ and $ φ' = \{ α'ᵢ: x ≥\bar x | i ∈ I'\} $. Note that $ φ, φ' ∈ §1-CP⋫§\cal P  $, that they are acyclic, and that they can be computed in time polynomial in $ |α|+|α'| $.
%, and that they are locally consistent if and only if $ α $ (respectively $ α' $) are consistent.
Then $ φ^* = \{(ox,o\bar x) | o ∈ \dom{\cal X }, o ⊧ α \} $ and for every $ o₁, o₂ ∈ \dom{\cal X } $, for every $ x₁, x₂ ∈ \dom X $, $ o₁x₁ ≽_φ o₂x₂ $ if and only if $ o₁ = o₂ $, $ x₁ = x $, $ x₂ = \bar x $ and $ o₁ ⊧ α $; similarly, $ o₁x₁ ≽_{φ'} o₂x₂ $ if and only if $ o₁ = o₂ $, $ x₁ = x $, $ x₂ = \bar x $ and $ o₁ ⊧ α' $. Thus $ α ≡ α' $ if and only if for every $ o ∈ \dom{\cal X } $, $ o ⊧ α ⇔ o ⊧ α' $, iff for every $ o ∈ \dom{\cal X } $, $ ox ≽_φ o\bar x ⇔ ox ≽_{φ'} o\bar x $, if and only if $ φ ≡ φ' $.

Similarly, we can define two linear 1-LP-trees $ ψ $ and $ ψ' $ as follows: the top $ | \cal X | $ nodes are labelled with attributes from $ \cal X $, in any order and with no rule; then there is one node labelled with $ X $, and the same preference rules as above.
\end{proof}
\end{toappendix}

As usual, comparing two formulas is easier for languages where there exists a canonical form. This is the case of acyclic CP-nets, as shown by \cite[Lemma 2]{KoricheZanuttini:aij10}; their proof makes it clear that the canonical form of any acyclic CP-net $ φ $ can be computed in polynomial time. Hence:

\begin{proposition}
\pb{Equivalence} is in \P\ for §CP-net§.
\end{proposition}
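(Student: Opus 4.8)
The plan is to exploit the fact, due to \cite{KoricheZanuttini:aij10}, that every CP-net admits a \emph{canonical form} which can be both computed and compared in polynomial time. First I would make precise what membership in §CPnet§\ buys us: by definition a formula $φ ∈ §CPnet§$ is $φ_{\cal N}$ for some CP-net $\cal N$, and, as recalled in Section~\ref{sec:language}, $\cal N$ can be recovered from $φ$ in polynomial time --- its graph is the dependency graph $D_φ$, and the conditional preference table of an attribute $X$ with parent set $U$ in $D_φ$ assigns to each $u ∈ \dom U$ the linear order over $\dom X$ whose covering relation is $\{(x,x') \mid (u : x ≥ x') ∈ φ\}$. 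Hence \pb{equivalence} for §CPnet§\ is exactly the problem of deciding, given two CP-nets $\cal N$, $\cal N'$ over $\cal X$, whether $\mathord{≽_{φ_{\cal N}}} = \mathord{≽_{φ_{\cal N'}}}$.

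Next I would recall the notion of \emph{dummy parent}: $Y$ is a dummy parent of $X$ in a CP-net if, for every instantiation $u$ of the remaining parents of $X$, the local orders over $\dom X$ prescribed by $uy$ and $uy'$ coincide for all $y, y' ∈ \dom Y$. Deleting a dummy parent of $X$ (and merging the now identical rows of the table of $X$) leaves the set of worsening swaps, hence the preorder $≽_{φ_{\cal N}}$, unchanged, precisely because the prescribed order on $\dom X$ does not depend on the value of the deleted attribute. Iterating such deletions, attribute by attribute, until no dummy parent remains yields the canonical form of $\cal N$; \cite[Lemma~2]{KoricheZanuttini:aij10} shows that this form is unique and that two CP-nets are equivalent if and only if their canonical forms are identical (as labelled graphs together with their tables) --- the reason being that, in the canonical form, the parent set of $X$ is exactly the set of attributes on which $X$ is preferentially dependent, which is a semantic notion invariant under equivalence, and that, the parent sets being pinned down, each table is in turn determined by $≽_{φ_{\cal N}}$.

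Then I would bound the cost. For a fixed attribute $X$ with current parent set $U$, testing whether a given $Y ∈ U$ is dummy amounts to comparing, for every $u ∈ \dom{U ∖ \{Y\}}$, the $|\dom Y|$ local orders attached to the extensions of $u$: this is a number of elementary comparisons polynomial in the size of the table of $X$, hence in $|φ|$. Each successful test removes one parent, so at most $|U| ≤ n$ rounds per attribute suffice, and the merging of rows after each removal is again polynomial; overall the canonical form is obtained in time polynomial in $|φ|$. The algorithm for \pb{equivalence} on §CPnet§\ is therefore: recover $\cal N$ from $φ$ and $\cal N'$ from $φ'$, compute their canonical forms, and answer §yes§\ iff these coincide. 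By the above this runs in polynomial time, and its correctness is exactly \cite[Lemma~2]{KoricheZanuttini:aij10}.

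The main obstacle is not conceptual but a matter of bookkeeping: one has to check that the construction underlying \cite[Lemma~2]{KoricheZanuttini:aij10}, which is stated there as an existence-and-uniqueness result, is effective and polynomial --- that dummy-parent detection is local and cheap, and that the fixpoint is reached after polynomially many deletions --- and one has to make sure that the equivalence captured by the canonical form is indeed equality of the induced preorders $≽_φ$, so that in particular the argument does not implicitly rely on linearisability and also covers CP-nets whose graph may be cyclic.
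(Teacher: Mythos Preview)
Your proposal is correct and follows essentially the same route as the paper: both arguments reduce \pb{equivalence} for §CPnet§\ to computing and comparing the canonical form of \cite[Lemma~2]{KoricheZanuttini:aij10}. The paper merely cites that lemma and asserts that ``their proof makes it clear that the canonical form of any CP-net $φ$ can be computed in polynomial time''; you unpack this assertion by describing the dummy-parent elimination procedure and bounding its cost, which is a welcome elaboration rather than a different approach.
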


\begin{TODO}
Check \pb{equivalence} for LP trees!
\end{TODO}

% \pb{equivalence} also becomes tractable if some form of canonicity is imposed for the conditions of the rules in an LP-tree; this is because, as with CP-net, it is possible to define a canonical form for the structure, by imposing that the labels of the node be as small as possible -- which may lead, in some cases, to splitting some nodes.

\subsection{Top $ p $ alternatives} \label{sec:top-p}

Given a set of alternatives $ S $ and some integer $ p $,  we may be interested in finding a subset $ S' $ of $ S $ that contains $ p $ ``best'' alternatives of $ S $, in the sense that for every $ o ∈ S' $, for every $ o' ∈ S ∖ S' $ it is not the case that $ o' ≻_φ o $. Note that such a set must exist, because $ ≻_φ $ is acyclic. The \pb{Top-$ p $} query is usually defined for totally ordered sets; a definition suited to partial orders is given in  \cite{Wilson:aij11} (where it is called \textit{ordering}), we adopt this definition here:

\defquery {Top-$ p $}  Given $ S ⊆ \dom{\cal X } $, $ p < |S| $,  and $ φ $, find $ o₁, o₂, …, o_p ∈S $ such that for every $ i ∈ {1,…,p} $, for every $ o' ∈ S $, if $ o' ≻_φ oᵢ $ then $ o' ∈ \{o₁,…,o_{i-1}\} $.!

%It is a generalisation of the ordering query as defined by \cite{Boutilieretal:jair04}, which is the specific case when $ |S|=2 $.

Note that if $ o₁, o₂, …, o_p $ is the answer to such query, if $ 1 ≤ i < j ≤ p $, then it can be the case that $ oᵢ ⋈ o_j $, but it is guaranteed that $ o_j ⊁ oᵢ $: in the context of a recommender system for instance, where one would expect alternatives to be presented in order of non-increasing preference, $ oᵢ $ could be safely presented before $ o_j $.

\cite{Boutilieretal:jair04}  prove that \pb{top-$p$} is tractable for acyclic CP-nets for the specific case where $ |S|=2 $. More generally, \pb{$ ≻ $-comparison} queries can be used to compute an answer to a \pb{top-$p$} query (by asking \pb{$ ≻ $-comparison} queries for every pair of elements of $ S $, the number of such pairs being in $ ϴ(|S|²) $). \nouv{Thus \pb{top-$p$} is tractable for every language where \pb{$ ≻ $-comparison} is tractable; this is the case in particular of §GAI§\ and §LPT§.}

\ifklptcompat
However, \cite{Wilson:aij11} shows that an upper approximation of $ ≻ $ is sufficient, and proves that such an approximation can be obtained in time polynomial in $ |φ| $ for a restricted class of lexico-compatible formulas of §1-CP∧§\ that contains $§1-CP§\acycl$~\citep[Th. 5]{Wilson:aij11}. 

We prove that this result does indeed hold for the full class of lexico-compatible formulas of  §1-CP∧§.\stefan{erm, c'est où, ce résultat ?}
%The \pb{top-$p$} query is also tractable for §LPT§.

\begin{TODO}
Does it hold for §1-CP§ ???
\end{TODO}

\begin{propositionrep} \label{prop:ordering-easy}%[Theorem 5 in \citep{Wilson:aij11}]
\pb{top-$p$} can be answered in time which is polynomial in the size of $ φ $ and the size of $ S $ for k-lexico-compatible formulas (for fixed $ k $); and for §LPT§.
\end{propositionrep}

\begin{appendixproof}
It suffices to prove the result for the case where $ |S|=2 $.
The fact that the query is tractable for §LPT§\ in this case is a simple consequence of the fact that \pb{$≻$-comparison} is tractable for §LPT§.
%; a simple algorithm to compare a pair of distinct alternatives $ \{o,o'\} $ w.r.t. some LP tree $ φ $ would be to go down the tree until reaching a node labelled with some $ W $ that decides $ o $ and $ o' $: depending on the linear order $ >_W^N $ that applies to $ o $ and $ o' $, we will have that $ o ≻_φ o' $ or $ o' ≻_φ o $, which implies that $ o' ⊁_φ o $ or $ o ⊁_φ o' $ respectively.

For $k$-lexico-compatible formulas, given some $ φ ∈ §CP§ $ known to be $k$-lexico-compatible, and given a pair of alternatives $ \{o,o'\} $, we can run the algorithm proposed in Section \ref{sec:lex-comp} that builds a complete LP tree $ ψ $ that extends $ φ $, but build one branch only, the one that corresponds to the pair $ \{o,o'\} $, as long as the chosen attributes have equal values for $ o $ and $ o' $: when reaching a node where the chosen set of attributes $ T $ si such that $ o[T] ≠ o'[T] $, the node decides the pair, and $ o $ and $ o' $ can be ordered accordingly, as in the case of LP trees: if $ o[T] > o'[T] $, then $ o ≻_ψ o' $, thus $ o' ⋡_ψ o $, hence $ o' ⋡_φ o $; similarly, if $ o'[T] > o[T] $, then it cannot be the case that $ o ≻_φ o' $. The algorithm will not return §FAILURE§\ because $ φ $ is known to be $k$-lexico-compatible. The branch has no more that $ | \cal X | $ nodes, and there is, for fixed $ k $, a polynomial number of possible labels to try at each node.
\end{appendixproof}
\fi

\begin{notAAMAS}
We do not consider here the case were $ S $ is given an intensional definition, as the set of solutions of some CSP over $ \dom{\cal X } $ for instance; this particular setting has been studied by e.g.~\cite{Brafmanetal:kr10,FreuderHeffernanWallaceWilson:constraints10}.
\end{notAAMAS}

\subsection{Optimization}
%\label{sec:def-un-dom}
%This approach for optimisation is of course not practical when $ S = \dom{\cal X } $, because in this case the size of $ S $ is exponential in the number of attributes. When $ k $ is fixed, $ k=1 $ and $ S = \dom{\cal X } $, the \pb{ordering} query amounts to finding a weakly undominated alternative.
Instead of ordering a given set, we may want to find a globally optimal alternative. 
%Based on the notions of (weakly) undominated / (strongly) dominating alternatives (defined in section~\ref{sec:def-un-dom}), 
We say that alternative $o $ is \emph{undominated} if there is no $  o' \!\!∈\!\! \dom{\cal X }  $ such that $ o' \!\!≻_φ\!\! o  $.
\footnote{\cite{Goldsmithetal:jair08} say that $ o $ is in this case ``weakly undominated''. They also say that $o $ is:
\emph{undominated} if there is no $  o' ∈ \dom{\cal X }  $, $ o' ≠ o $,  such that $ o' ≽_φ o $; \emph{dominating} if for every $  o' ∈ \dom{\cal X }  $, $ o ≽_φ o' $; \emph{strongly dominating} if for every $  o' ∈ \dom{\cal X }  $ with $ o' ≠ o $, $ o ≻_φ o' $.
%Note that $ o $ is strongly dominating if and only if it is dominating and undominated; and that if $ o $ is dominating or undominated, then it is weakly undominated. 
The complexity of queries related to the latter three definitions is studied in~\cite{FargierMengin:AAMAS21}.}

%\ste{Je suis assez d'accord avec Hélène. Il serait bien pour le papier de diminuer le nombre de définitions dans cette partie-ci. Mais comme je ne connais pas bien le domaine, je vous laisse choisir celles qui sont intéressantes :)}
%
%This gives rise to several types of queries:
%1) given $ φ $ and $ o $, is $ o $ (weakly) undominated, or is it (strongly) dominated? 2) given $ φ $, is there a (weakly) undominated, or a (strongly) dominated, alternative? We call these queries \pb{w-undominated check}, \pb{undominated check}, and so on, for queries of type 1); and \pb{w-undominated-$ ∃ $}, and so on, for queries of type 2). Note also that there is always at least one weakly undominated alternative (because $ \dom{\cal X } $ is finite), so \pb{weakly undominated-$∃$} is trivial (always true).

Note that any finite set of alternatives always has at least one undominated alternative.
%is trivially true for §CP§\ (in any finite directed graph, at least one vertex has no "strict" predecessor) \nouv{and §GAI§\ (the alternatives on which $g_\varphi$ is maximal are weakly undominated)}.
%
We will consider the following queries:

\defquery{undominated extract} Given $ φ $, return an undominated alternative.!

\defquery{undominated check} Given $ φ $ and an alternative $ o $, is $ o $ undominated?!

%\defquery{[w $|$ s] (undomialternativenated $|$ dominating) existence} Given $ φ $, is there a [weakly $|$ strongly] (undominated $|$ dominating) alternative?!

%\defquery{[w $|$ s] (undominated $|$ dominating) existence} Given $ φ $, is there a [weakly $|$ strongly] (undominated $|$ dominating) alternative?!

%\defquery{[w $|$ s] (undominated $|$ dominating) counting} Given $ φ $, how many [weakly $|$ strongly] (undominated $|$ dominating) alternative are there?!

%\defquery{[w $|$ s] (undominated $|$ dominating) extraction} Given $ φ $, return a [weakly $|$ strongly] (undominated $|$ dominating) alternative (or return that note exists)?!

%\defquery{[w $|$ s] (undominated $|$ dominating) enumeration} Given $ φ $, enumerate the [weakly $|$ strongly] (undominated $|$ dominating) alternatives?!

%\defquery{[w $|$ s] (undominated $|$ dominating) checking} Given $ φ $, $o$, is $o$ a [weakly $|$ strongly] (undominated $|$ dominating) alternative?!

%\defquery{[w $|$ s] (undominated $|$ dominating) extract} Given $ φ $, return some alternative $o$ that is [weakly $|$ strongly] (undominated $|$ dominating), or that none exists!

These queries are easily shown to be tractable for §LPT§. %\cite{Boutilieretal:jair04} prove that they are tractable for CP-nets.
%and for §1-CP⋫§\ \cite{Goldsmithetal:jair08}. This can be generalized:

%is tractable for §CP§: the proof given by \cite{Boutilieretal:jair04} for CP-nets, and generalised to §1-CP⋫§\ by \cite{Goldsmithetal:jair08}, can be generalised further to §CP§.

%\begin{propositionrep}
%\pb{undominated check} is in \P\ for §CP§.
%\end{propositionrep}

%\begin{appendixproof}
%Alternative $ o $ is not undominated if and only if there is some $ o' $ such that $ o' ≽_φ o $, which can happen if and only if $ φ $ contains a statement $ α | V: w>w' $ with $ o ⊧ α $ and $ o[§Var§(w')] = w' $: this can be checked in polynomial time.
%\end{appendixproof}

%\begin{commentaire}
%La phrase suivante montre que la requête \pb{weakly undominated $ ∃$} est trivialement vraie ⇒ supprimer du tableau ?
%\end{commentaire}

%The existence of a weakly undominated alternative is trivially true for §CP§\ (in any finite directed graph, at least one vertex has no "strict" predecessor) \nouv{and §GAI§\ (the alternatives on which $g_\varphi$ is maximal are weakly undominated)}. Linearisability also ensure that there is at least one undominated alternative.
%\nouv{Finally, GAI always have a weakly dominating alternative: the alternatives with maximal $g_\varphi$ all have this property.}

For §CP§-nets, \cite{Boutilieretal:jair04} give a polytime algorithm that computes the only undominated alternative when the dependency graph is acyclic.
%; in this case, this alternative is also the only strongly dominating one and the only undominated one, since the §CP§-net is consistent: this implies that all the optimization queries defined above
%\pb{dominating $∃$}, \pb{s. dominating $∃$}, \pb{undominated $∃$}, \pb{s. dominating check}, \pb{dominating check} and \pb{w. undominated check}
%are tractable for acyclic CP-nets.
\ifklptcompat
For a given formula $ φ $ known to be lexico-compatible, we can run the algorithm proposed in Section \ref{sec:lex-comp} that builds a complete LP tree $ ψ $ that extends $ φ $, but build one branch only: the one that contains the most preferred alternative $ o $; and return that most preferred alternative. Then for every other alternative $ o' $, $ o ≻_ψ o' $, so $ o' \not ≽_φ o $ and $ o' \not ≻_φ o $: $ o $ is undominated w.r.t. φ.
\fi

%\cite[Prop. 8, 9 and 11]{Goldsmithetal:jair08}
\cite{Goldsmithetal:jair08} prove that  \pb{undominated check} is \PSPACE-complete for  $ §1-CP⋫§ $, and their reductions for proving hardness
%of  \pb{w. undominated check}, \pb{dominating check}, \pb{s. dominating check}
indeed yield formulas of $ §1-CP⋫∧§ $.
%\NP-hardness of \pb{undominated $∃$} for §1-CP⋫∧§\ is proved by \cite{Domshlaketal:jheur06}, 

%As for extraction queries, they obviously cannot be easier than the corresponding existence queries.
For §GAI§$_1$, extracting an undominated alternative can be performed by separately maximizing the unary utilities; and checking if a given alternative is undominated can be done by comparing its utility to that of an extracted undominated alternative.
Undominated extract and undominated check are both NP-hard for §GAI§$_2$ and thus for $§GAI§$ and $§GAI§_k$ in the general case. We will see these results in the next subsection where we make several similar constructions (Proposition \ref{prop:cutextractiongai}). %(by reduction of the MAX-2-SAT problem)\footnote{Any 2 clause $t_1 \vee t_2$ can be captured by the local utility linking the same variables; the corresponding table contains the assignement that do not satisfy the clause : $u(\bar t_1 . \bar t_2) = -1$}

\subsection{Cuts}\stefan{we still have to clean up results on GAI that are for queries we are not considering anymore}
\emph{Cuts} are sets of alternatives that are at the same ``level'' with respect to $ ≽ $. For rankings defined with real-valued functions, cuts are defined with respect to possible real values. In the case of pre-orders, we define cuts with respect to some alternative $ o $: given $ φ ∈ §CP§ $, for any $ R ∈ \{≻,≽\} $, for every alternative $ o $, %$ R ∈ \{≻,≽,≺,≼,∼,⋈\} $, 
we define
\[
 §CUT§^{R,o}(φ) = \{ o' ∈ \dom{\cal X } | o' ≠ o, o' R_φ o \} .
\]
%* $ §CUT§_φ^{R,\max}(≽) = \{ o ∈ \dom{\cal X } | ¬∃ o' ∈ \dom{\cal X }, o' ≠ o, o' R o \} $; and $ §CUT§_φ^{R,\min} = \{ o ∈ \dom{\cal X } | ¬∃ o', o' ≠ o, o R o' \} $.

%By duality $ §CUT§_φ^{≽,\min} = §CUT§_φ^{≼,\max} $, $ §CUT§_φ^{≻,\min} = §CUT§_φ^{≺,\max} $. Note that $ o ∈ §CUT§_φ^{R,\max}(≽) $ if and only if $ §CUT§_φ^{R,o}(≽) = ∅ $.
Following \cite{FargierMarquisNiveauSchmidt:aaai14}, we define two families of queries:

\defquery {$R$-cut extraction} Given $ φ, o $, return an element of $ §CUT§^{R,o}(φ) $ (or that it is empty)!
\defquery {$R$-cut counting} Given $ φ, o $, count the elements of $ §CUT§^{R,o}(φ) $!

Note that 

\begin{propositionrep}
 \pb{$≽$-cut counting} and \pb{$≻$-cut counting} are $\sP$-hard for §CP§-nets and acyclic §CP§-nets.\stefan{also for §CP§\acycllexk ?}
\end{propositionrep}
\begin{appendixproof}
 Remember that a vertex cover in a graph $G=(V,E)$ is a set $S\subseteq V$ such that for each edge $uv\in E$ we have $u\in S$ or $v\in S$. The problem $\mathsf{\#VertexCover}$ is, given a graph $G$, to count its vertex covers. $\mathsf{\#VertexCover}$ is well-known to be $\sP$-hard~\cite{Valiant79}, so we will use a reduction from $\mathsf{\#VertexCover}$ to $≻$-\textsc{cut-counting} to establish the claim.
 
 So let $G=(V,E)$ be a graph. For every vertex $v\in V$ we introduce an attribute $V_v$ and for every edge $e= uv\in E$ we introduce an attribute $E_{uv}$. Note that for convenience we denote $E_{uv}$ also by $E_e$ sometimes. Finally, we introduce attributes $D_i$ for $i\in [|V|+|E|+1]$. The attributes $V_v$ have no parents. Let $e_1, \ldots, e_m$ be an order of the edges in $E$ where $e_i = u_iv_i$. For $i>1$ the attribute $E_{e_i}$ has the parents $V_{u_i}, V_{v_i}, E_{e_{i-1}}$. The attribute $E_{e_1}$ has parents $V_{u_1}, V_{v_1}$. Finally, the attributes $D_j$ all have the single parent $E_{e_m}$.
 
 We next describe the CPTs for all attributes: all attributes have values in $\{0,1\}$. All $V_{v}$ have the order $1 \ge 0$. For all $D_i$, we have that if $E_{e_m}$ has value $0$ then the order is $0 \ge 1$ and if $E_{e_m}$ has value $1$, then $1\ge 0$. For $E_{e_1}$ we have the order $1\ge 0$ if and only if at least one of $V_{u_1}, V_{v_1}$ has value $1$ and the order $0\ge 1$ otherwise. Finally, for $i>1$,  we have the order $1\ge 0$ if and only if $E_{i-1}$ has value $1$ and at least one of $V_{u_i}, V_{v_i}$ has value $1$. Otherwise $E_{e_i}$ has the order $0\ge 1$. Call the resulting CP-net $\phi$.
 
 Note that one can easily see that no attribute in an increasing flipping sequence can ever be flipped back to $0$ from $0$: for the attributes $V_v$ this is immediate. For the $E_{e_j}$ it follows with an easy induction and the fact that it is true for the $V_v$. For the $D_j$ finally it follows from the fact that $E_{e_m}$ can never flip back to $0$.
 
 Let $o$ be the assignment that assigns $0$ to all attributes. Let $o'$ be an assignment such that $o'$ is reachable from $o$ by an increasing flipping sequence, or equivalently $o' \succeq_\phi o$. We claim that if $E_{e_m}$ has value $1$ in $o'$, then $S := \{v \in V\mid o'(V_v) = 1\}$ is a vertex cover of $G$. To see this, first observe that in fact all $E_{e_j}$ must take the value $1$ in $o'$: to flip $E_{e_j}$ to $1$, we must have flipped $E_{j-1}$ before (if it exists) and since we can never flip back to $0$, $E_{j-1}$ must take $1$ in $o'$. But then when we flipped $E_{e_j}$ to $1$, at least one of $V_{v_j}, V_{u_j}$ must have had value $1$ and since we cannot flip it back, in $o'$ one of $V_{v_j}, V_{u_j}$ must have value $1$. So for every $e_j$ we have that one of $V_{v_j}, V_{u_j}$ must have value $1$ which proves that $S$ is a vertex cover as claimed.
 
 Now for $S\subseteq V$, define $o_S$ to be the assignment that assigns $1$ to $V_v$ if and only if $v\in S$, assigns $1$ to all $E_{e_i}$ and assigns $0$ to all $D_j$. We claim that $o_S \succeq_\phi o$ if and only if $S$ is a vertex cover of $G$. First note that if $S$ is a vertex cover, we can flip all $V_v$ accordingly and then iteratively flip all $E_{e_j}$ to reach $o_S$. The other direction is clear from what we saw above, observing that $E_{e_m}$ takes value $1$ in $o'$. 
 
 Observe that for every $o_S$, where $ S $ is a vertex cover, we can flip an arbitrary subset of the $D_j$ to~$1$ to reach an assignment $o'\succeq_\phi o_S \succeq_\phi o$. Note that for different vertex covers $S_1, S_2$, there is no such $o' \succeq o_{S_1}$ and $o' \succeq o_{S_2}$ since $o_{S_1}$ and $o_{S_2}$ differ on the $V_v$ and in the construction of the $o'$ from the $o_S$ we do not change those. It follows that 
 \begin{multline*}
  \{o' \mid o' \succeq_\phi o'', o''(E_{e_m}) = 1\} =
  \\ \bigcup_{S \text{ vertex cover of } G} \{o' \mid o'\succeq_\phi o_S, \forall V_v: o_S(V_v) = o'(V_v)\}
 \end{multline*}
and the union is disjoint. Now for every vertex cover $S$ of $G$, we have 
\begin{align*}
|\{o'' \mid o''\succeq_\phi o_S, \forall V_v: o_S(V_v) = o''(V_v)\}| = 2^{|V|+|E|+1}. 
\end{align*}
 Let $s$ be the number of vertex covers of $G$. It follows that
\begin{align*}
 |\{o' \mid o'\succeq_\phi o\}| &= |\{o' \mid o' \succeq_\phi o, o'(E_{e_m}) = 0\}| + |\{o' \mid o'' \succeq_\phi o, o''(E_{e_m}) = 1\}|\\
 &= |\{o' \mid o' \succeq_\phi o, o'(E_{e_m}) = 0\}| + s 2^{|V|+|E|+1}.
\end{align*}
Now since in no $o' $ with $o'(E_{e_m}) = 0$ any of the $D_j$ can be flipped to $1$ in any increasing flipping sequence, we have 
\begin{align*}
 |\{o' \mid o' \succeq_\phi o, o'(E_{e_m}) = 0\}| < 2^{|V|+|E|},
\end{align*}
since such $o'$ have only $|V|+|E|-1$ attributes with domain $\{0,1\}$ which are not forced to be constant $0$.
Consequently, $s$ can be inferred from $|\{o' \mid o'\succeq_\phi o\}|$ by a single integer division which completes the reduction.

This proves that \pb{≽-cut counting} for acyclic CP-nets is as hard as \pb{\#VertexCover}; this holds for \pb{≻-cut counting} since in the case of acyclic CP-nets, ≽ is antisymmetric. And this hardness result extends to the larger class of CP-nets.
\end{appendixproof}

\begin{propositionrep}
\pb{$≻$-cut counting} is $\sP$-complete for §GAI§, §GAI§$_k$ and §GAI§$_1$.
\end{propositionrep}

\begin{appendixproof}
For containment in $\sP$, observe that all elements in $ §CUT§^{≻,o}(φ) $ have polynomial size, so we can easily guess them and compare in polynomial time to $o$ since \pb{$≻$-comparison} can be solved in polynomial time for §GAI§.
	
For hardness, we reduce from the problem \pb{\#SubsetSum} which is, given a set $S=\{s_1, \ldots, s_n\}$ of positive integers and an additional integer $k$, to count the number of subsets of $S$ that sum up to $k$. \pb{\#SubsetSum} is well-known to be $\sP$-complete, see e.g.~\cite{FaliszewskiH09}. It will be convenient to work with a slight variant which we call \pb{\#SubsetSum$_>$} and which is, given the same type of input as for \pb{\#SubsetSum}, to count the number of subsets of $S$ which sum up to a value greater than $k$. There is an easy oracle reduction from \pb{\#SubsetSum} to \pb{\#SubsetSum$_>$}: given an input $S$, $k$, call an oracle for \pb{\#SubsetSum$_>$} on the two inputs $S, k-1$ and $S, k$. Then the answer to the \pb{\#SubsetSum} instance is the difference of the answers of the oracle calls. It follows that \pb{\#SubsetSum$_>$} is $\sP$-hard.

We now reduce \pb{\#SubsetSum$_>$} to \pb{$≻$-cut counting} for §GAI§$_1$. So let $S= \{s_1, \ldots, s_n\}$ and $k$ be an instance of \pb{\#SubsetSum$_>$}. We construct $n$ functions $g_i(X_i)$ for $i=1, \ldots, n$ where $\dom {X_i} = \{0,1\}$. We set $g_i(0)=0$ and $g_i(1)=s_i$. Moreover, we add a function $g_Y(Y)$ where $\dom{Y}= \{0,1\}$ and $g_Y(0) = 1$ and $g_Y(1) = k$. Set $\varphi= \{g_1, \ldots, g_n, g_Y\}$ and $\cal X = \{X_1, \ldots, X_n, Y\}$. This completes the construction of the GAI. Call the induced relation $\succeq$.

To complete the reduction, let $o^*\in \dom{\cal X}$ be the alternative that sets $Y$ to~$1$ and all other attributes to $0$. Then $g_\phi(o^*) = k $. Moreover, for $o\in \dom{\cal X}$, we have that $g_\phi(o)> k$ if and only if $o(Y) = 1$ and there is an $i\in  [n]$ such that $o(X_i)=1$---i.e.~the set $\{i\in [n]\mid o(X_i)=1\}$ is non-empty---, or $o(Y) = 0$ and $\sum_{i\in [n]} g_i(o[X_i]) = \sum_{i\in [n]\colon o(X_i)=1} s_i > k$. Note that there are $2^n-1$ alternatives of the former type, corresponding to the non-empty subsets of $[n]$, so the number of subsets of $S$ that sum up to values greater than $k$ is $ |§CUT§^{≻, o^*}(φ)| - 2^n+1$. Thus, one oracle call to \pb{$≻$-cut counting} allows solving \pb{\#SubsetSum$_>$} in polynomial time which completes the reduction.
\end{appendixproof}

\begin{propositionrep}
\pb{$≽$-cut extraction} is tractable for §CP§, and \pb{$≻$-cut extraction} is tractable for acyclic CP-nets. \pb{$≻$-cut counting} and \pb{$≻$-cut extraction} are \PSPACE-hard for $ §1-CP⋫∧§ $.
\ifklptcompat
For $ §CP§\acycllexk $, \pb{$≻$-cut extraction} is equivalent to \pb{$≽$-cut extraction} and is tractable.
\fi
\pb{$≻$-cut extraction}, \pb{$≽$-cut extraction} and \pb{$≻$-cut counting} %and \pb{$≻$-cut enumeration} are
are tractable for LP-trees.
\end{propositionrep}

\begin{appendixproof}
\pb{$≽$-cut extraction} is easy for §CP§: given $ o $ and $ φ $, in order to return an element of $ §CUT§^{≽,o}(φ) $, it is sufficient to find one statement in $ φ $ which sanctions an improving swap for $ o $.
For acyclic CP-nets (and more generally for any language that guarantees consistency), $ ≻ $ is the asymmetric part of $ ≽ $, thus \pb{$≻$-cut extraction} is equivalent to \pb{$≽$-cut extraction} and is tractable. 

Note that alternative $o $ is undominated iff $ §CUT§^{≻,o}(φ) = ∅ $, iff $ |§CUT§^{≻,o}(φ)| =  0 $;
therefore, \pb{$≻$-cut counting} and \pb{$≻$-cut extraction} are at least as hard as \pb{undominated check}, they are therefore \PSPACE-hard for $ §1-CP⋫∧§ $.

Finally, \pb{$≻$-cut extraction}, \pb{$≽$-cut extraction} and \pb{$≻$-cut counting} are tractable for LP-trees: for LP-tree $ φ $, given $ o $, in order to find some $ o' $ such that $ o' ≻_φ o $ (resp. $ o' ≽_φ o $), it is possible to traverse the tree, starting at the root, guided by the values assigned by $ o $, until reaching a node where the value(s) assigned by $ o $ for the attributes at that node is/are strictly dominated (resp. dominated) by other values at that node. Also, when going down $ φ $ in the branch that corresponds to $ o $, it is possible, at each node $ N $ encountered, labelled with $ T $, to count the number of $ t' $ in $ \dom T $ such that $ t > o[t'] $ (according to the preference rule $ β:≥^β $ at $ N $ such that $ o ⊨ β $), and to multiply this number by the sizes of the domains of the attributes that have not been encountered yet; adding these sums of products along the branch will give the number of alternatives $ o $ such that $ o' ≻_φ o $.

\end{appendixproof}

\begin{propositionrep}
	\pb{$≽$-cut extraction}, \pb{$\succ$-cut extraction}, \pb{undominated check}, and \pb{undominated  extract} are tractable for §GAI§$_1$.
\end{propositionrep}
\begin{appendixproof}
	Given a GAI $\varphi$, we can simply choose the values for the attributes in such a way that the utilities are maximized. Since the utilities are unary, this leads to a consistent and thus also maximal alternative $o^*$. For \pb{$≽$-cut extraction}, $o^*$ is always a valid output, so it solves the problem independent of the additional input alternative $o$. For \pb{$\succ$-cut extraction}, we check if $g_\varphi(o^*)> g_\varphi(o)$. If so, we return $o^*$ again. Otherwise, due to the maximality of $o^*$, we have $g_\phi(o^*)= g_\phi(o)$ and thus there is no alternative strictly dominating $o$ and thus no valid output.
	
	For \pb{undominated check} we have that $o$ is undominated if and only if $g_{\phi}(o) = g_{\phi}(o^*)$ which we can check efficiently. Finally, for undominated extract we can simply return $o^*$. %For $\pb{s dominating check}$, we have that $o$ is strongly dominating if and only if $o=o^*$ and $o^*$ is the unique alternative for which $g_\varphi$ is maximized. The latter is by construction of $o^*$ the case if and only if for every unary utility function we have that the maximum value is taken by a unique input value. This can easily be checked in polynomial time. Since the preorder defined by $\varphi$ is complete, $o$ is undominated if and only if it is strongly dominating which also solves the problem \pb{undominated check}.
\end{appendixproof}
The problems above become intractable as soon as we allow utility functions of arity at least $2$.

\begin{propositionrep}\label{prop:cutextractiongai}
	\pb{$≽$-cut extraction} and \pb{$\succ$-cut extraction} are $\NP$-complete for §GAI§$_k$ for $k\ge 2$ and~§GAI§. \pb{undominated check} is $\coNP$-complete and \pb{undominated extract} is \NP-hard for §GAI§$_k$ for $k\ge 2$ and for~§GAI§.
\end{propositionrep}

\begin{appendixproof}
	Containment in \NP, resp.~\coNP, is easy to see in all cases since alternatives can be compared efficiently
	
	We show hardness for all problems by reduction from \pb{$3$-Coloring} which is, given a simple, undirected graph $G=(V,E)$, to decide if there is an assignment $c:V\rightarrow \{r,g,b\}$ such that for all edges $uv\in E$ we have $c(v)\ne c(u)$. The mapping $c$ is called a \emph{coloring} and it is said to be valid if it satisfies the condition on the edges. \pb{$3$-Coloring} is well-known to be \NP-complete, see e.g.~\cite[Theorem 9.8]{Papadimitriou94}.

	We use the same construction of a §GAI§$_2$ $\varphi_G$ from a graph $G$ for all problems.
	So let a graph $G$ be given in which w.l.o.g.~every vertex has at least two neighbors (vertices with fewer than two neighbors can iteratively be deleted without changing the answer to the \pb{$3$-Coloring} question). We also assume that $G$ is connected; if it is not, we can connect the different connected components iteratively by adding edges without changing the answer to the \pb{$3$-Coloring} question. We construct a §GAI§$_2$ representation as follows: for every vertex $v\in V$, we introduce an attribute $X_v$ with domain $\dom{X_v} = \{r,g,b,d\}$. For every edge $e=uv$, we construct a utility function $g_{uv}$ in the variables $X_u, X_v$ and which takes value $1$ on inputs $rb, rg, br, bg, gr, gb, dd$ and $0$ on all other inputs. Setting $\varphi_G = \{g_{uv}\mid uv\in E\}$ completes the construction of the GAI $\varphi_G$. Let $\succeq$ be the order that $\varphi_G$ induces.
	
	We first show hardness for \pb{$≽$-cut extraction}. To this end, let $o_d$ be the alternative in which all attributes take value $d$. Then all $g_{uv}$ evaluate to $1$ on $o_d$, so $g_{\varphi_G}(o_d) = |E|$. Now consider $o \in §CUT§^{\succeq,o_d}$. Assume first that some attribute of $o$ takes value $d$. Since not all attributes can take value $d$ and $G$ is connected, there must be an edge $uv$ such that $d = o(u) \ne o(v)$. Then $g_{uv}(o)=0$ and $g_{\varphi_G}(o)< |E| = g_{\varphi_G}(o_d)$, so $o \notin §CUT§^{\succeq,o_d}$ which contradicts the choice of $o$. Consequently, we must have that all $X_v$ takes values in $\{r,g,b\}$ in $o$. Moreover, for all $uv\in E$, we must have that $o(X_u)\ne o(X_v)$. Thus, setting $c(v) = o(X_v)$ for all $v\in V$ yields a valid coloring of $G$. So if there is an element in $§CUT§^{\succeq,o_d}$, the graph $G$ is $3$-colorable. The other way round, if $G$ has a valid $3$-coloring $c$, then defining $o$ for all $X_v$ by $o(X_v) = c(v)$ yields an alternative in $§CUT§^{\succeq,o_d}$. This shows \NP-hardness of \pb{$≽$-cut extraction} for §GAI§$_2$ and thus for all §GAI§$_k$ with $k\ge 2$ and §GAI§.
	
	The reasoning for \pb{$\succ$-cut extraction} is similar. The only difference is that for one arbitrary edge $uv$ we set $g_{uv}(d,d)$ to $0$. Call the resulting GAI $\varphi_G'$. We have $g_{\varphi_G'}(o_d)= |E|-1$. The rest of the reduction and the argument for completeness is exactly as that for \pb{$≽$-cut extraction}.
	
	For \pb{undominated check}, observe that $o_d$ is dominating for $\varphi_G'$ if and only if there is no alternative $o$ with $g_{\varphi_G'}(o_d) < g_{\varphi_G'}(o)$. Reasoning as above, this is exactly the case if and only if $G$ has no valid $3$-coloring. Thus \pb{undominated check} is $\coNP$-hard. %For \pb{s dominating check}, we make the same reasoning with $\varphi_G$. 
	
	Finally, to show hardness of \pb{undominated extract}, multiply all utility values in $g_G$ by $2$. Then, for one arbitrary edge $uv$ set $g_{uv}(d,d)$ to $1$. Call the resulting GAI $\varphi_{G}''$. Then we have $g_{\varphi_G''}(o_d)= 2|E|-1$. Moreover, for all alternatives $o$ encoding a valid $3$-coloring, we have $g_{\varphi_G''}(o)= 2|E|$. Finally, for all other alternatives $o$, we have $g_{\varphi_G''}(o)\le 2|E|-2$. So in any case an undominated alternative is either a valid $3$-coloring of the graph $G$ or $o_d$, hence $o_d$ is undominated if and only if $G$ is not $3$-colorable which shows that \pb{undominated extract} is \NP-hard.
\end{appendixproof}

\section{Transformations}
\label{sec:transfo}

\begin{table}%{\begin{minipage}[b]{33em}
\newcommand{\entetecol}[1]{\multicolumn1{c}{\begin{rotate}{70} \parbox[t]{4.5em}{#1}\end{rotate}}}
\newcommand{\entetecolh}{\multicolumn{1}{c}{\vrule height 5.4em width0pt}}
\tabcolsep2pt
\newbox\mybox\setbox\mybox=\hbox{\NP}
\newcommand{\mycwd}{\hspace*{\wd\mybox}}
\def\P{✔}\def\NPh{✘\!∘}\def\NPc{✘}\def\PSPc{\ensuremath{✘\!✘}}

\begin{center}
\begin{tabular}{|l|c|c|c||c|c|c|c||c|c|c||c|c|}
%	\hline
 % pour réserver de la place pour la ligne d'entête, si on utilise l'environnement rotate, qui ne réserve pas d'espace. 5.2 em = 6em * sin(60°), puisque l'entête de la dernière colonne est dans une parbox de largeur 6em 
\entetecolh  & \entetecol{§GAI§}& \entetecol{§GAI§$_k$}&  \entetecol{§GAI§$_1$}&  \entetecol{§CP§} & \entetecol{§1-CP§} &  \entetecol{§1-CP⋫§}  &  \entetecol{§1-CP⋫∧§}   & \entetecol{§CP§net} & \entetecol{§CP§net\acycl} & \entetecol{§CP§net\acyclpoly}  & \entetecol{§LPT§}  & \entetecol{$ §1-LPT§^{§lin§} $ complete}  \\ \hline
& \mycwd & \mycwd & \mycwd & \mycwd & \mycwd & \mycwd & \mycwd & \mycwd & \mycwd & \mycwd & \mycwd \\[-1em]
	\pb{conditioning} & \P & \P & \P &  & \Out & \Out & \Out & \Out & \Out & \Out  & \P & \P \\ \hline
	\pb{conjunction} & \Out & \Out & \Out &  & \Out  & \Out  & \Out & \Out  & \Out & \Out &    &  \Out \\ \hline
	\pb{disjunction} & \Out & \Out & \Out & \Out & \Out  & \Out  & \Out & \Out  & \Out & \Out & \Out &  \Out \\ \hline
	\pb{Lower projection} & \Out & \Out & \P &  &  \Out & \Out & \Out & \Out & \Out & \Out &  & \P \\ \hline
	\pb{\pb{Weak opt. proj.}} & \NPh & \NPh & \P &  & \Out  & \Out & \Out & \Out & \Out &  &  & \P \\ \hline
	\pb{\pb{Strong opt. proj.}} & \NPh  & \NPh & \P &  &  \Out & \Out & \Out & \Out & \Out &  &  & \P \\ \hline
% J'ai enlevé la "upper" puisqu'on dit qu'aucun des langages étudiés n'est "complet" pour cette proj.
%	\pb{Upper projection} & \Out & \Out & \Out &  &   &   &  &   &  &   &   & \Out &  \\ \hline
\end{tabular}
\end{center}

\medskip\raggedright
Each column corresponds to one sublanguage of §CP§.
%The double vertical line separates languages that have some acyclicity property (to the right) from those that do not. Also, t
They are sorted in order of decreasing expressiveness from left to right, except when columns are separated by double lines.
%that, for a fixed $ k $, $ §CPnet§ $ and $ §CP§\acycllexk $ cannot be compared in terms of expressiveness.
For each transformation and sublanguage:
%✓ = indicates that the query can be solved in time polynomial in $ | φ | + | §result§ | $;  ✘ = no such algorithm unless \P = \NP, or \PSPACE = \P;
% ⊤ = always true for the language; ⊥ = always false for the language;
$ \P = $ polytime answer; %$\NPc$ = \NP-complete query;
$\NPh = $ \NP/\coNP-hard transformation;
%$ \PSPc = $ \PSPACE-complete transformation;
\Out = transformation result may be outside the language.

%\vskip 1ex

\caption{\label{table:transfos} Complexity of Transformations.}
%\end{minipage}}
\end{table}

Several transformations have been studied in the literature on knowledge compilation. A transformation takes as input one or more formulas, and, possibly, other arguments like some attributes, and returns another formula. Table~\ref{table:transfos} summarizes our results on these transformations. As can be seen from the table, for many sublanguages of CP and transformations, the result of the transformation may be outside that sublanguage.

\subsection{Conditioning}

Several studies, in particular in the context of propositional logic like e.g. \citep{DarwicheMarquis:jair02} work with a  syntactic definition of this transformation; however, in logic, these definitions have a clear semantic counterpart. In the case of CP statements, we shall see that there are languages for which the transformation cannot always be applied, so we give a semantic description, similar to the one given by \cite{FargierMarquisNiveauSchmidt:aaai14}. 

Given a preference relation $ ≽$ on $\cal X$ and a partial instantiation $ u ∈ \dom U $ for some $ U ⊆ \cal X  $, let $ ≽^{|u} $ be the relation defined for every $ r,r' ∈ \dom{\cal X  - U} $ by $ r ≽^{|u} r' $ if and only if $ ru ≽ r'u $. It is straightforward to check that $ ≽^{|u} $ is a preorder. Conditioning a formula $\phi$ in a language consists in computing a formula of the same language  representing $ ≽_φ^{|u} $.

\defquery{conditioning} Given a language $ \cal L$, a formula $ φ$ of $ \cal L$ and  an instantiation  $ u ∈ \dom U ⊆ \cal X $, compute  a formula $ φ' ∈ \cal L $ that represents $ ≽_φ^{|u} $.!

%$ r ≽_φ^{'} r' $ if and only if $ ru ≽_φ r'u $, for every $ r,r' ∈ \dom{\cal X  - U} $.!\stefan{why do we have this twice?}

For §LPT§, a simple syntactic transformation on a formula $ φ $ allows, for every attribute $ X $ and every value $ x ∈ \dom X $, to represent $ ≽_φ^{|x} $: for every node $ N $, whose label contains $ X $, remove $ X $ from the label of $ N $, remove the node if it contains no other attribute; if $ N $ has several children, keep only those that correspond to instantiation $ X=x $ (there will only be one if the label of $ N $ contains no other attribute); at the nodes below $ N $, replace every rule $ α : ≥ $ by $ α^{|x} : ≥ $, where $ α^{|x} $ is the result of conditioning applied to $ α $, as defined by e.g. \cite{DarwicheMarquis:jair02}; remove the rule if $ α^{|x} ⊧ ⊥ $; otherwise, since we assume that $ ≥ $ is given in extension, it is easy to keep only the pairs $ (u,u') $ such that $ u[X] = u'[X]=x $ and remove $ x $ from them. This can be performed with a single traversal of the tree.

Even simpler, conditioning a $GAI$ with $ X=x $ amounts to removing from every sub-utility that bears on $ X $ the cases where $ X ≠ x $. 

The next example is an acyclic CP-net, whose dependency graph is even linear, for which there is a conditioning transformation, the result of which cannot be expressed in §1-CP⋫§.

\begin{example}
Consider 3 binary attributes $ A, B, C $, with respective domains $ \{a, \bar a\} $, $ \{b, \bar b\} $, $ \{c, \bar c\} $, and let $$ φ = \{a≥\bar a, a:b≥\bar b, \bar a:\bar b≥b, b:c≥\bar c, \bar b: \bar c ≥ c \}. $$ The underlying, acyclic dependency graph has set of edges $ \{ (A,B), \allowbreak (B,C)\} $. Then $ abc ≽_φ ab\bar c ≽_φ a\bar b\bar c ≽_φ \bar a\bar b\bar c ≽_φ \bar a\bar bc ≽_φ \bar abc ≽_φ \bar ab\bar c $, thus $ abc ≽_φ ab\bar c ≽_φ \bar abc ≽_φ \bar ab\bar c $, that is: $ ac ≽_φ^{|b} a\bar c ≽_φ^{|b} \bar ac ≽_φ^{|b} \bar a\bar c $. However, $ ≽_φ^{|b} $ cannot be represented in §1-CP⋫§.
\end{example}

Note that, in the example above, $ ≽_φ^{|b} $ can be represented in §1-CP§, with formula $ \{|C:a≥\bar a, c ≥\bar c,  \} $. The next example is another CP-net, with a cycle in the dependency graph, for which there is a conditioning transformation, the result of which cannot be expressed in §1-CP§.

\begin{example}
Consider 3 binary attributes $ A, B, C $, with respective domains $ \{a, \bar a\} $, $ \{b, \bar b\} $, $ \{c, \bar c\} $, and let $$ φ = \{b\bar c: \bar a≥ a, ¬(b\bar c): a≥\bar a, c:b≥\bar b, \bar c: \bar b≥b, a:c≥\bar c, \bar a: \bar c ≥ c \}. $$ The underlying dependency graph has set of edges $ \{ (B,A), (C,A), \allowbreak(C,B), (A,C )\} $, it is not acyclic. $ φ $ represents the preorder that is the transitive closure of $ abc ≽_φ a\bar bc ≽_φ a\bar b\bar c≽_φ \bar a\bar b\bar c ≽_φ \bar ab\bar c ≽_φ\bar abc ≽_φ \bar a \bar bc $ and $ \bar a b\bar c ≽_φ ab\bar c $, thus $ ac ≽_φ^{|b} \bar a\bar c ≽_φ^{|b} \bar ac $ and $ \bar a \bar c ≽_φ^{|b} a\bar c $: $ φ^* $ contains all swaps sanctioned by the §1-CP⋫§ statements $ c:a≥\bar a $ (because $ ac ≽_φ^{|b}  \bar ac $),  $ \bar c: \bar a≥a $, $ a: c ≥ \bar c $ and $ \bar a: \bar c ≥ c $, but these statements do not entail that $ ac $ is preferred over $ \bar a\bar c $.
\end{example}

Since §CP§\ can represent any preorder, the result of a conditioning transformation can be expressed in §CP§.
\begin{TODO}
Complexity of conditioning in §CP§. It is probably hard already if the source formula is a CP-net? What about acyclic CP theories?
\end{TODO}

\subsection{Conjunction}

Conjunction is classical for Boolean functions: given two Boolean functions $f^{\cal L}_\varphi$ and $f^{\cal L}_\psi$ represented in a language $ \cal L$, one looks for an $ \cal L $ representation of $f^{\cal L}_\varphi \wedge f^{\cal L}_\psi$.  An analogous definition is also possible when considering formulas representing preferences:

\defquery{Conjunction} Given a language $ \cal L$, two formulas $\varphi$ and  $\psi$ of $ \cal L$  compute  a formula $ \chi$  of $ \cal L$ such that $ o ≽_\chi  o' $ if and only if $ o ≽_\varphi o'$ and $ o ≽_\psi o'$.!

This definition corresponds to  the classical unanimity rule used in ordinal aggregation.\stefan{do we have a reference for that?}

The conjunction of two preorders is a preorder, thus §CP§\ is closed under conjunction. Furthermore, the conjunction of two antisymmetric preorders is antisymmetric too, thus §LPT§\ is closed under conjunction

%Note first that the §LPT§\ and §GAI§\ languages are not complete for such a transformation:  the expressiveness of these languages is limited to complete relations while  the conjunction of two complete preference relations is not complete in the general case. Consider for instance a linear LP-tree $ φ $, and define $ ψ $ to be the linear LP-tree with the same importance order and all local preference tables reversed: then $ o ≽_φ o'$ iff $ o' ≽_ψ o $, so the intersection of these two linear orders is empty; note that the order represented by any linear LP-tree can also be represented by an additive utility. This shows that the languages §LPT§, $ §LPT§^{§lin§} $, §GAI§\ and $ §GAI§_1 $\ are not closed under conjunction.

The §GAI§\ language is not complete for such a transformation: the expressiveness of this languages is limited to complete relations whereas the conjunction of two complete preference relations is not complete in the general case. Consider for instance a GAI decomposition φ such that there at least two alternatives $ o $ and $ o' $ with $ o ≻_φ o' $, and let ψ be the GAI decomposition defined by $ g_ψ(o) = - g_φ(o) $ for every alternative $ o $: then $ o' ≻_ψ o $ and $ o' ⋈_χ o $, where χ denotes the conjunction of φ and ψ. Note that this also applies if $ φ ∈ §GAI§_1 $, therefore neither §GAI§\ nor $ §GAI§_1 $\ are closed under conjunction.

%Note first that the §LPT§\ and §GAI§\ languages are not complete for such a transformation:  the expressiveness of these languages is limited to complete relations while  the conjunction of two complete preference relations is not complete in the general case. Consider for instance a linear LP-tree $ φ $, and define $ ψ $ to be the linear LP-tree with the same importance order and all local preference tables reversed: then $ o ≽_φ o'$ iff $ o' ≽_ψ o $, so the intersection of these two linear orders is empty; note that the order represented by any linear LP-tree can also be represented by an additive utility. This shows that the languages §LPT§, $ §LPT§^{§lin§} $, §GAI§\ and $ §GAI§_1 $\ are not closed under conjunction.

%Note first that the §GAI§\ language is not complete for such a transformation: the expressiveness of §GAI§\ is limited to complete relations while the conjunction of two complete preference relations is not complete in the general case. Consider any §GAI§-dcConsider for instance a linear LP-tree $ φ $, and define $ ψ $ to be the linear LP-tree with the same importance order and all local preference tables reversed: then $ o ≽_φ o'$ iff $ o' ≽_ψ o $, so the intersection of these two linear orders is empty; note that the order represented by any linear LP-tree can also be represented by an additive utility. This shows that the languages §LPT§, $ §LPT§^{§lin§} $, §GAI§\ and $ §GAI§_1 $\ are not closed under conjunction.

The next example shows that the languages
%§LPT§, §GAI§,
§CPnet§\acyclpoly, §CPnet§\acycl\ and §CPnet§\ are not closed under conjunction.

% JL : J'ai changé les 2 lemmes suivants en exemples, puisqu'on a mis tous les autres contre-exemples comme ça.
\begin{example}
Consider the following two CP-nets in variables $A,B$: $\varphi$ with statements $a\ge \bar a$, $a: b ≥ \bar b$ and $\bar a: \bar b ≥ b$ and $\psi$ with statements $\bar a\ge a$, $a: b ≥ \bar b$ and $\bar a: \bar b \ge b$. The directed graph over the variables in both cases is a polytree that has $A$ as the parent of $B$, so both $\varphi$ and $\psi$ are indeed polytree CP-nets.\istefan{I think these are evern §CP§net\acyclpoly, but I am not sure I am parsing that definition correctly. Can someone check and add this here and in the table?} The complete orders induced by $\varphi$ and $\psi$ are respectively
\[ab \succeq_\varphi a\bar b \succeq_\varphi \bar a \bar b\succeq_\varphi \bar ab,\]
\[\bar a\bar b \succeq_\psi \bar a b \succeq_\psi ab\succeq_\psi  a\bar b.\]
%These two linear orders can both easily be represented by LP-trees or utilities too.
Then the only preferences in $\succeq_{\varphi\land \psi}$ are $ab \succeq_{\varphi\land \psi} a\bar b$ and $\bar a \bar b\succeq_{\varphi\land \psi} \bar a b$. This cannot be expressed by a CP-net, since any CP-net on $ \{A,B\} $ orders the four unary swaps, in particular any CP-net must order for instance $ \{\bar a \bar b , a \bar b\} $, which $ ≽_{φ∧ψ} $ does not. It cannot be represented with an LP-tree nor with a utility since it is not a complete relation.
\jerome{J'ai changé la fin de l'exemple, car un CP-net n'a pas forcément de racine, puisque le graphe n'est pas nécessairement acyclique.}
%: if $A$ were in the root of the directed graph on the attributes, then we would either have to have the statement $a\ge \bar a$ or $\bar a \ge a$ which would force $ab$ and $\bar a b$to be comparable which they are not in the conjunction of the orders. Also, $B$ cannot be in the root of the graph since there is no preference between $b$ and $\bar b$ in the conjunction. 
\end{example}

%\begin{lemma}
We now give an example that shows that §1-CP§, §1-CP⋫§, and §1-CP⋫∧§~are not closed under conjunction.
%\end{lemma}
\begin{example}\label{ex:1cpconjunction}
Consider the following two sets of CP-statements over binary attributes $A$ and $B$: $\varphi = \{b:\bar a > a, a: b > \bar b, \bar b: a>\bar a\}$ and $\psi = \{a:\bar b > b, b: a > \bar a, \bar a: b>\bar b\}$. Both $\varphi$ and $\psi$ are in §1-CP⋫∧§. The two sets respectively induce the orders
\[\bar ab \succeq_\varphi ab \succeq_\varphi a\bar b\succeq_\varphi\bar a\bar b,\]
\[a\bar b \succeq_\psi ab \succeq_\psi \bar a b\succeq_\psi\bar a\bar b.\]
For the conjunction, we get that $ab\succeq_{\varphi\land \psi} \bar a \bar b$ and $ab$ is incomparable to the other two alternatives. Thus, this conjunction cannot be expressed by a §1-CP§, since we cannot go from $ab$ to $\bar a\bar b$ in a §1-CP§\ without any intermediate flips.
\end{example}

Many rules of ordinal aggregation could be considered and  this opens a large stream of research which is out of the scope of the present paper - e.g. scoring rules like Borda's, for which the GAI framework is obviously a good candidate language. LP trees on the other hand will probably fail to handle such rules, because the aggregation of several lexicographic orders is generally not a lexiocographic order. The CP   language in itself as such is neither powerful enough to encompass most of the rules, but extensions have been proposed that typically address this question \cite{rossi2004mcp}.\istefan{I wonder if the paragraph above should not go somewhere else? into the conclusion?}

\subsection{Disjunction}
We can define the disjunction operation by symmetry:

\defquery{Disjunction} Given a language $ \cal L$ and two formulas $\phi$ and  $\psi$ of $ \cal L$,  compute  a formula $ \chi'$  of $ \cal L $ such that $ o ≽_{\chi'}  o' $ if and only if $ o ≽_{\phi} o'$ or  $ o ≽_{\psi} o'$. !

Such a definition is nevertheless not really significant in the domain of preference handling\stefan{then why bother defining it? Just delete this?}, since the disjunction of two transitive relations is generally not transitive: it can happen that $o ≻_\phi o'$, $o' ≻_\psi o"$ but neither  $o ≻_\phi o"$ nor  $o ≻_\psi o"$.

\begin{example}\label{ex:disjunction}  $\phi$ is the linear 1-LPT where $B$ is more important than $A$  with $a $ preferred to $\bar a$ and $\bar b$ preferred to $b$; $\psi$ is the linear 1-LPT where $A$ is more important than $B$ with $a $ preferred to $\bar a$ and $  b$ preferred to $\bar b$. We get
$$
  a \bar b ≻_\phi \bar a  \bar b ≻_\phi a b  ≻_\phi \bar a b
  \ \ \ \text{ and }\ \ \ 
  a b ≻_\psi a \bar b ≻_\psi \bar a b  ≻_\psi \bar a \bar b
$$
%$ a  \bar b$ is preferred to $\bar a \bar   b$ for both orders.
 
Now, $ \bar  a  \bar b ≻_\phi  a b $ and $ a b ≻_\psi a \bar b$, but $ \bar  a  \bar b \not ≻_\phi a \bar b$ and $ \bar  a  \bar b \not ≻_\psi a \bar b$.
\jerome{j'ai changé la fin de l'exemple car je ne comprenais pas l'autre}
% -- so, $ \bar  a  \bar b ≻_{\phi \vee \psi} a \bar b$ while neither  $ \bar  a  \bar b >_{\phi } a \bar b$ nor $ \bar  a  \bar b >_{ \psi} a \bar b$.
Note that $ φ $ and $ ψ  $ can be represented with additive utilities and with CP-nets.
%This example can also be captured in the CP framework and in the GAI framework
\end{example}

This shows that none of the languages studied in this paper is complete for disjunction.

%\subsection{Conjunction}
%%
%Because of the ``piecewise'' nature of CP statements, we can easily define the combination of two sets of CP statements.
%
%\defquery{conjunction} Given two formulas $ φ $ and $ ψ $, return a formula $ θ $ such that $ ≽_θ $ is the transitive closure of $ ≽_φ ∪ ≽_ψ $.!
%
%Clearly, given two sets $ φ $ and $ ψ $ of CP statements, $ φ ∪ ψ $ represents their conjunction, and can be computed efficiently. For languages that impose some extra structure, like acyclicity of the dependency graph, one must ask if this conjunction still verifies these conditions, and wether this can be checked in polynomial time.
%
%

\subsection{Variable elimination}

We next consider transformations where the information is \textit{projected} onto a subset of the initial variables of interest. This is also called \textit{variable elimination}. In an interactive setting, like product configuration, it enables the user to focus on her preferences over a subset of the variables, which may be less daunting than considering the preferences over the entire set of variables. Variable elimination is a well-known technique in propositional logic, as well as in many graphical models like Bayesian networks or constraint satisfaction problems (weighted or not), where it is a component of some efficient query answering algorithms, that can be used for instance for GAIs.  (See e.g. \cite{CooperdeGivrySchiex:stacs20} for a recent unified description and overview of algorithmic aspects of graphical models.) 

Variable elimination has not been studied much in the context of preferences in general. In a pioneering work,  \cite{BesnardLangMarquis:mpref05,BesnardLangMarquis:ecai06} distinguish several ways to define the projection of a preference relation onto a subset of variables. Given a preorder $ ≽ $ and a set of variable $ U ⊆ \cal X  $, let $ V = \cal X \setminus U $, they first consider two relations defined on $ \dom V $:

\begin{description}
\item[Lower projection] $ v ≽^{↓V}_{\text{low}} v' $ if and only if $ uv ≽ uv' $ for every $ u ∈ \dom U $;
\item[Upper projection] $ v ≽^{↓V}_{\text{up}} v' $ if and only if $ uv ≽ uv' $ for some $ u ∈ \dom U $.
%\item[upper projection] $ ≽^{↓V}_U $ is the transitive closure of the relation $ ≽' $ on $ \dom V $ such that $ v ≽' v' $ if and only if $ uv ≽ uv' $ for some $ u ∈ U $.
%\item $ ≽^{-U,∃} $ is the transitive closure of the relation $ ≽' $ define by: $ v ≽' v' $ if and only if $ uv ≽ uv' $ for some $ u ∈ U $.
\end{description}

It is easy to see that  $≽^{↓V}_{\text{low}}$ and $≽^{↓V}_{\text{up}}$  are respectively the conjunction and  the disjunction  of the relations obtained by conditioning the original relation by every combination of value for $\cal X \setminus U $.

Let us also consider what \cite{BesnardLangMarquis:ecai06} call the \emph{optimistic} projections of $ ≽ $ on $ \dom V $:

\begin{description}
\item[Weak optimistic projection] $  v ≽^{↓V}_{\text{w.opt.}} v' $ if and only if for every $ u' ∈ \dom U $, there is $ u ∈ \dom U $, such that $ uv ≽ u'v' $.
\item[Strong optimistic projection]$  v ≽^{↓V}_{\text{s.opt.}} v' $ if and only if there is $ u ∈ \dom U $, such that for every $ u' ∈ \dom U $, $ uv ≽ u'v' $.
\end{description}

%One can prove that these two projections coincide when $ ≽ $ is a weak order.
 
\ihel{j'ai remplacé le "contains" par le "extends" de plus haut et explicité la siginifcation - j espere ne pas m etre trompée}    

\cite{BesnardLangMarquis:mpref05} prove that $ ≽^{↓V}_{\text{w.opt.}} $  extends $ ≽^{↓V}_{\text{low}} $ and $ ≽^{↓V}_{\text{s.opt.}} $ (if $ o ≽^{↓V}_{\text{low}} o' $ (resp.  $ o ≽^{↓V}_{\text{s.opt.}} o'$), then $ o ≽^{↓V}_{\text{w.opt.}} o'$), and that the weak and strong optimistic projections are identical when $ ≽ $ is a weak order.
\footnote{\cite{BesnardLangMarquis:mpref05} also define \emph{pessimistic} counterparts of the optimistic projections: $  v ≽^{↓V}_{\text{w.pess.}} v' $ if and only if for every $ u ∈ \dom U $, there is $ u' ∈ \dom U $, such that $ uv ≽ u'v' $; and  $  v ≽^{↓V}_{\text{s.pess.}} v' $ if and only if there is $ u' ∈ \dom U $, such that for every $ u ∈ \dom U $, $ uv ≽ u'v' $. We do not consider them here because their significance, from a decision making point of view, is not clear.}

\begin{propositionrep}\label{prop:var-elim}
Given a preorder $ ≽ $ over $ \dom{\cal X} $, given $ V ⊆ \cal X $, let $ U ⊆ \cal X ∖ V $.
%, then:
%\begin{enumerate}
%\item 
If $ v,v' ∈ \dom V $ and $ v ≽^{↓V}_{\text{w.opt.}} v' $, then there is some $ u ∈ \dom U $ such that for no $ u' ∈ \dom U $ it holds that $ u'v' ≻ uv $. %that is stricly worse than no completion $ u' $ of $ v' $.
%is weakly undominated w.r.t. $ ≽^{↓V}_{\text{w.opt.}} $ in $ \dom V $, then there exists $ u ∈ \dom U $ such that $ uv $ is weakly undominated w.r.t. $ ≽ $ in $ \cal X $
%\end{enumerate}
\end{propositionrep}

\begin{appendixproof}
%The first result is already mentioned by \cite{BesnardLangMarquis:mpref05}. We only prove \ref{prop:var-elim:2}.
Assume that $ v ≽^{↓V}_{\text{w.opt.}} v' $, and let $ u ∈ \dom U $ be such that for no other $ u₁ ∈ \dom U $ it is the case that $ u₁v ≻ uv $: such a $ u $ must exist because $ \dom U $ is finite; if there is some completion $ u' $ of $ v' $ such that $ u'v' ≻ uv $, then, since $ v ≽^{↓V}_{\text{w.opt.}} v' $, there must be some $ u₁ ∈ \dom U $ such that $ u₁v ≽ u'v' $, but then $ u₁v  ≻ uv $, which is a contradiction.
\end{appendixproof}

%Property \ref{prop:var-elim:2} in 
The proposition above indicates that if $ v ≽^{↓V}_{\text{w.opt.}} v' $, then a decision maker may safely focus on $ v $, without risking missing a strictly better full alternative that would extend $ v '$. Note that this holds too
%, according to \ref{prop:var-elim:1},
if $ v ≽^{↓V}_{\text{s.opt.}} v' $ or if $ v ≽^{↓V}_{\text{low}} v' $, since both imply $ v ≽^{↓V}_{\text{w.opt.}} v' $. From a preference representation point of view, the weak optimistic projection seems more interesting, as it is the one that keeps the most information -- it contains the other two.

\begin{comment}
\cite{BesnardLangMarquis:mpref05} also mention that $ ≽^{↓V}_{\text{s.opt.}} $  contains $ ≽^{↓V}_{\text{L}} $. But the paper is without proof, and I doubt that result.
\end{comment}

We define the following transformations, for any projection $ π ∈ \{\text{low}, \text{up},\text{s.opt},\text{w.opt}\} $:

\defquery{$π$-projection} Given some language $ \cal L $, some formula $ φ ∈ \cal L $, some subset of variables $ V $, return a formula $ ψ ∈ \cal L $ such that $ ≽_ψ = (≽_φ)^{↓V}_π $.!

%It is not difficult to check that $ v ≽^{↓V}_{low} v' $ is a preorder on $ \dom V $ (as soon as $ ≽ $ is a preorder on $ \dom{\cal X} $). Note that $ v ≽^{↓V}_{low} v' $ holds if all the worsening swaps sanctioned by the statement $ v ∧ v': v∖v' ≥ v'∖v $ are in $ ≽ $.
%, where $ v₁ = v ∩ v' $ and $ v₂ = v \setminus v₁ $ and $ v₂' = v' \setminus v₁ $.

%The GAI and LPT famillies are not complete for the  lower projections, since it may indeed happen that there exist $u$ and $u'$ such that $uv ≻_φ uv'$  and $u'v' ≻_φ u'v$. Then the projected relation is not complete and thus not representable by a GAI nor a LPT.

The next example shows that the languages §GAI§\ and §CP-net§, $ §CPnet§\acycl $ and $ §CPnet§\acyclpoly $
%, and §LPT§\
are not closed under lower projection.

\begin{example}
Consider the  preference relation $ab ≻ a \bar b ≻ \bar a \bar b ≻ \bar a b$.
This relation can be represented with a
%is representable by an LP tree where $A$ is more important than $B$ and $a$ is preferred to $\bar a$, with $a : b > \bar b$  for the $a$-child of $A$  and  $\bar a :   \bar b > b$  for the $\bar a$-child.  It is also representable by a
% large game \stefan{what is "game" supposed to be here? I do not understand.} of local utility functions
utility function, and by an acyclic, polytree CP-net where $ A $ has no parent and is the only parent of $ B $.

The elimination of $A$ by lower projection will lead to the preorder in which the only two alternatives $b$ and $\bar b$ are incomparable due to $a b ≻ a \bar b$ and  $\bar a b ≺ \bar a \bar b$. Thus the lower projection cannot be expressed by a GAI, nor with a CP-net since a CP-net over $ \{B\} $ has only one node labelled with $ B $ and the associated table must order the pair of values $ \{b,\bar b\} $;
nor with a complete LP-tree.
\end{example}

%\begin{example}
%Consider an LP tree with a single branch where $ A $ is more important than $ B $, which is more important than $ C $; and where $ a > \bar a $, $ a: b > \bar b $ and $ \bar a: \bar b > b $, and $ c > \bar c $: it represents the following linear order: $ abc ≻ ab\bar c ≻ a\bar bc ≻ a\bar b\bar c ≻ \bar a\bar bc ≻ \bar a\bar b\bar c ≻ \bar abc ≻ \bar ab\bar c $. The elimination of $ A $ with lower projection leads to the preorder where $ bc ≻ b\bar c $ and $ \bar bc ≻ \bar b\bar c  $ and all other pairs of alternatives of $ \dom B \times \dom C $ are incomparable. This cannot be represented by any LP-tree: an LP-tree with an empty root cannot order any pair of alternatives; if the root is labelled with $ B $, then the four pairs $ \{bc,\bar bc\} $, $ \{bc,\bar b\bar c\} $, $ \{b\bar c,\bar bc\} $, $ \{b\bar c,\bar b\bar c\} $ will be ordered; and similarly if $ C $ is at the root, then the four pairs $ \{bc, b\bar c\} $, $ \{bc, \bar b\bar c\} $, $ \{\bar bc, b\bar c\} $, $ \{\bar b\bar c, b\bar c\} $  will be ordered.
%\end{example}

We will next see that the 1-CP families are not closed under lower projection either.

\begin{example}
The idea is to modify the construction in Example~\ref{ex:1cpconjunction} by adding an additional variable $C$ to simulate conjunction. So we consider the §1-CP⋫∧§ formula $$\varphi = \{cb:\bar a > a, ca: b > \bar b, c\bar b: a>\bar a, \bar ca:\bar b > b, \bar cb: a > \bar a, \bar c\bar a: b>\bar b\}.$$ In the resulting preorder we have 
\[c \bar ab \succeq_\varphi cab \succeq_\varphi ca\bar b\succeq_\varphi c \bar a\bar b,
\bar ca\bar b \succeq_\varphi \bar cab \succeq_\varphi \bar c\bar a b\succeq_\varphi\bar c\bar a\bar b,\]
and all other pairs of alternatives are incomparable.

When eliminating $C$ by lower projection, we get $ab\succeq^{↓\{A,B\}}_{\varphi, low} \bar a \bar b$ and $ab$ is incomparable to the other alternatives. However, as we have seen before, this cannot be expressed by a §1-CP§, since we cannot go from $ab$ to $\bar a\bar b$ in a §1-CP§\ without any intermediate flips.
\end{example}

Most of the languages considered in this paper are not complete for the upper projection, because this projection may lead to a non transitive relation   (since the disjunction of two relations is not necessarity transitive), except $ §LPT§^{§lin§} $ and $ §GAI§₁ $. 
\footnote{\cite{BesnardLangMarquis:mpref05,BesnardLangMarquis:ecai06} in fact define the \emph{upper projection} to be the transitive closure of the relation that we denote by $ ≽^{↓V}_U $ above. However this means completing a relation when there is not necessarily a justification to do so. }
%\hel{ je comprends pas la fermeture transitive de qqchose qui n'est pas transitif .... on se retrouve avec quoi ? plein d ex aequo ?} \jer{Pas forcément : pour les CPnets/CP statements par exemple, chaque CP statement représented un ensemble de comparaisons, et la relation représentée par un ensemble de statements est la fermeture transitive de l'union des ensembles de comparaisons représentées par tous les statements.} }
% \hel toujours pas compris - je ne compnreds pas en terme le relation, CP nets ou pas ; 

\begin{example}  We simply construct an LP tree on $\{A,B,C\}$ that has $C$ as the attribute in the root with two children; for the left child of $C$, the LPT tree for $\varphi$ from Example~\ref{ex:disjunction} is used, for the right child LPT for the relation $\psi$ is used. Then, when we apply upper projection on $C$, we get the disjunction of the orders $\varphi$ and $\psi$ which, as argued before, cannot be expressed as an LPT.\istefan{I hope I understood this correctly. Can you check?}
%$\phi$  where $B$ is more important than $A$ 
% with $a $ preferred to $\bar a$ and $\bar b$ preferred to $b$; 
% the right son points on the LP tree $\psi$  where $A$ is more important than $B$   with $a $ preferred to $\bar a$ and $  b$ preferred to $\bar b$. One recognises here the two LP trees used as a counter example for disjunction.  We have seen that the disjunction of these two preference relation is not transitive - hence it cannot be represented by a LP tree.
 
 As previously, the same counter example holds when considering GAI nets.\istefan{but how do we express the order in which the C is contained? Am I missing something simple?}

 \end{example}

%for which some interesting transformations are
 
Interestingly, linear 1-LP trees and additive utilities ($§GAI§₁$) avoid the problems in these two counterexamples.
%In these two cases, all four projections defined above are equivalent, and the elimination of a set of variables $ U$
%,  be it the lower or the upper projection,
%is then easily performed by deleting these variables from the LP tree (the incoming arcs are redirected on the next node), or by deleting the corresponding local utility ($§GAI§₁$). 

\begin{propositionrep}
All four projections defined above are equivalent for the §1-GAI§\ language and the language that contains complete LP-trees of $ §1-LPT§^{§lin§} $, and can be computed in polynomial time.
\end{propositionrep}

\begin{appendixproof}
Let $ φ ∈ §1-GAI§ $, let $ X $ be any attribute, φ is dfined by a function of the form $ g(o) = g_X(o[X]) + \sum_{Y ≠ X} g_Y(o[Y]) $. Let $ ≽ $ denote the associated weak order over the set of alternatives. Let ψ be defined by $ h(o) =  \sum_{Y ≠ X} g_Y(o[Y]) $. We show that $ o ≽_{\text{low}} o'$ iff $ o ≽_{\text{up}} o'$ iff $ o ≽_{\text{w.opt.}} o'$ iff $ o ≽_{\text{s.opt}} o'$ iff $ o ≽_ψ o'$:
\begin{equation*}\begin{split}
o ≽_ψ o' & ⇔  h(o) ≥ h(o') \\
&  ⇔  ( \forall x ∈ \dom X : g_X(x) + h(o) ≥ g_X(x) + h(o')  ) ⇔  o ≽_{\text{low}} o' \\
&  ⇔  ( \exists x ∈ \dom X : g_X(x) + h(o) ≥ g_X(x) + h(o')  ) ⇔  o ≽_{\text{up}} o' \\
&  ⇔  ( \forall x' ∈ \dom X \exists x ∈ \dom X : g_X(x) + h(o) ≥ g_X(x') + h(o')  )
  \hfill \text{ (take } x=x' \textbf{)} \\
& \mskip 60mu ⇔  o ≽_{\text{w.opt.}} o' \\
&  ⇔  ( \forall x' ∈ \dom X : g_X(\mathop{§argmax§}\limits_{x' ∈ \dom X} g_X(x')) + h(o) ≥ g_X(x') + h(o')  ) \\
%  \hfill \text{ with } x₀ =  \\
& \mskip 60mu ⇔  o ≽_{\text{s.opt.}} o'
\end{split}\end{equation*}

Suppose now that φ is a complete, linear LP-tree in §1-LPT§\ over $ \cal X $, and let $ X ∈ \cal X $. Let ψ be the LP-tree defined by removing node $ X $, redirecting the parent of $ X $ to the unique child of $ X $ when $ X $ is an internal node of φ. Consider alternatives $ o, o' ∈  \dom{\cal X \setminus X} $, let $ x ∈ \dom X $. Let $ Y $ be the attribute that decides the pair $ \{o,o'\} $ in ψ then $ o ≽_ψ o' $ iff $ o[Y] > o'[Y] $ in $ §CPT§(Y) $. Suppose first that $ Y $ is an ancestor of $ X $ in φ, then $ \{ox,o'x'\} $ is decided at $ Y $ in $ φ $ forall $ x,x ∈ \dom X $, thus $ o[Y] > o'[Y] ⇒ ( \forall x,x' ∈ \dom X : ox ≽_φ o'x' ) ⇒ o ≽_π o' $ for all four projections; and $ o ≽_π o' ⇒ o ≽_ψ o' $ for any of the four projections. Suppose now that $ X $ is an ancestor of $ Y $ in φ, then for all $ x, x' ∈ \dom X $ 1) the pair $ \{ox,o'x\} $ is decided at $ Y $, whereas 2) pair $ \{ox,o'x'\} $ with $ x ≠ x' $ is decided at $ X $. From 1) it follows that $ o ≽_ψ o' ⇔ o ≽_{\text{low}} o' ⇔ o ≽_{\text{up}} o' $; moreover, let $ x_0 $ be the optimal value for $ \dom X $ in $ §CPT§(X) $ in φ (which exists and is unique because φ is a complete LP-tree, thus the linear order over $ \dom X $ in $ §CPT§(X) $ is a linear order), then $ ox_0 ≽_φ o'x' $ for all $ x' ∈ \dom X $, thus $ o ≽_{\text{s.opt.}} o' $;   \cite{BesnardLangMarquis:ecai06} mention that $ o ≽_{\text{s.opt}} ⇒ o ≽_{\text{w.opt}} o' $; lastly, if $ o ≽_{\text{w.opt}} o' $ then there exists $ x $ such that $ ox ≽_φ o'x_0 $, and $ ox_0 ≽_φ ox $, thus $ ox_0 ≽_φ o'x_0 $, thus $ o[Y] > o'[Y] $.
\end{appendixproof}

We next show that if both  the conditioning and the weak optimistic projection can be done in polynomial time on a language, then an undominated alternative $o$ can be obtained in polynomial time, as well.

\begin{propositionrep}\label{ProjIsHard}
If conditioning can be done in polynomial time for language $ \cal L$ but the extraction of an undominated alternative is $\NP$-hard,
%satisfies $CD$ but not $1-opt$, then it cannot satisfy
then the strong optimistic projections cannot be computed in polynomial time for $ \cal L $ (unless $\P = \NP$).
\end{propositionrep}

\jerome{Il faut vérifier si $§GAI§_k$ est stable pour ces projections.}
\hel{Elle ne l est pas  pour GAIk ; mais on s'en fiche : dans la demo, on projette sur une seule variable à la fois, donc sur du GAI1 garanti}
\hel{ j'ai remplacé "optimal" par "un dominated", il me semble que ca passe}
\begin{appendixproof}
We assume that for any preorder expressed in the language $ \cal L$, any strong optimistic projection leads to a preorder that again can be expressed in $ \cal L$. If this is not true, then the statement of the theorem is trivially true, even without the assumption $\P \ne \NP$.

We give an algorithm that, given a preorder $\succeq$ encoded in $ \cal L $, computes an undominated alternative $o$ in polynomial time, assuming polynomial time algorithms for conditioning and computation of strong optimistic projection. The algorithm considers the attributes of $\succeq$ in sequence, say from $V_1$ to $V_n$. The value $v_1$ of $V_1$ is obtained by projecting $\succeq$ onto $V_1$ - then an undominated value $v_1$ is chosen for $V_1$; indeed, $v_1 ≽^{↓\{V1\}}_{\text{s.opt.}} v_1'$ means that there exist an assignment $v$ of $\{V_2, \dots, V_n\}$ such that $v_1. v   ≽ v_1'. v'$ for all $v'$ -- $V_1 = v_1$ in one of the non dominated solutions. Then the original formula is conditioned: value $v_1$ is assigned to $V_1$ and the procedure is repeated for the next variable - and this until all the variables have been assigned.  So, if a language offers the conditioning transformation in polytime but not the undominated query,  there cannot be any polynomial algorithm for performing the strong optimistic projection within this language (unless $P= NP$).
\end{appendixproof}

A direct corollary is that §GAI§\ and $§GAI§_k$ ($k > 1$) fail to provide strong  and weak optimistic projections in polynomial time (unless $\P = \NP$) - since (i) the extraction of an undominated alternative is $\NP$-hard for these languages and (ii) they support  conditioning in polytime.

\begin{propositionrep}
The strong (resp. weak) projection cannot be computed in polytime for §GAI§\ and $§GAI§_k$ ($k > 1$) (unless $\P = \NP$).
\end{propositionrep}

\begin{appendixproof}
The extraction of an undominated alternative is $\NP$-hard  for  §GAI§\ and $§GAI§_k$, $k > 1$   (Proposition \ref{prop:cutextractiongai})  while    conditioning can be done in polytime for these languages. From Proposition \ref{ProjIsHard} we deduce the  strong optimistic projection cannot be computed in polytime unless $P = NP$. Because $GAI$'s encode complete and transitive relations, the strong and weak optimistic projections are identical \cite{BesnardLangMarquis:mpref05} - hence the weak optimistic projection cannot be computed in polytime unless $P = NP$.

\end{appendixproof}

The next example shows that the weak and strong optimistic projections of the preorder induced by an acyclic CP-net cannot always be represented %by a CP-net.
in §1-CP§.

\begin{example}
Consider a CP-net $ N $ over three binary attributes $ A $, $ B $ and $ C $, with respective domains $ \{a,\bar a\} $, $ \{b,\bar b\} $, $ \{c,\bar c\} $:
\begin{center}\begin{tikzpicture}[x=6em,y=1em]
\node[var] (A) at (0,0) {$A$}; \node[anchor=north] at (A.south) {$a ≥ \bar a$};
\node[var] (B) at (1,0) {$B$}; \node[anchor=north] at (B.south) {$\array c a : b≥ \bar b\\\bar a : \bar b≥ b\endarray$};
\node[var] (C) at (2,0) {$C$}; \node[anchor=north] at (C.south) {$\array c ab,\bar a\bar b : c≥ \bar c\\\bar ab,a\bar b : \bar c ≥ c\endarray$};
\draw[pref] (A)--(B);
\draw[pref] (B)--(C);
\draw[pref] (A) to[out=30,in=150] (C);
\end{tikzpicture}\end{center}
Let $ ≽ $ denote the linear order represented by $ N $:
$ abc ≽ ab\bar c ≽ a\bar b\bar c ≽ a\bar bc ≽ \bar a\bar bc ≽ \bar a\bar b\bar c ≽ \bar ab\bar c ≽ \bar abc $, and it can be checked that $ ≽_{\text{w.opt.}}^{↓AC} $ is the relation that corresponds to the set of CP-statements $ \{a ≥\bar a, c ≥ \bar c, a\bar c ≥ \bar ac \} $, which is not included in §1-CP§: they correspond to a CP-net where $ A $ and $ C $ are preferentially independent but with the additional trade-off $ a\bar c ≥ \bar ac $. Since $ ≽ $ is complete, $ ≽_{\text{s.opt.}}^{↓AC} $ is the same as $ ≽_{\text{w.opt.}}^{↓AC} $ \cite{BesnardLangMarquis:ecai06}.
\end{example}

\section{Conclusion}

The literature on languages on CP statements has long focused on statements with unary swaps. Several examples in Section~\ref{sec:expressiveness} show that this strongly degrades expressiveness.
\ifklptcompat
We have introduced a new parameterized family of languages, $ §CP§\acycllexk $, which permits to balance expressiveness against query complexity: the lower $k $ is, the less expressive the language is, but the faster answering most queries will be.
\fi
Table~\ref{table:queries} shows that comparison queries seem to resist tractability for CP-statements,
\ifklptcompat
even for $ §CP§\acycllexk $,
\fi
but the \pb{top-$p$} query may be sufficient in many applications. The practical interest of CP-nets also lies in the fact that with this language, finding an optimal (undominated) alternative is easy~\cite{Boutilieretal:jair04}.

Contrastingly, with GAIs, it is easy to compare alternatives, but computing an undominated alternative is only tractable in the very restrictive case of additive utilities ($§GAI§₁$).

Tractability of the \pb{equivalence} query relies on the existence of canonical form: it is the case when the language enforces a structure like a dependency graph or a tree, and when the conditions of the statements are restricted to some propositional language with a canonical form.

As for transformations, the languages of (generalized) additive utilities and LP trees seem to offer better prospects, as in both cases conditioning is tractable -- whereas conditioning a formula of the most studied sublanguages of §CP§\ does not always result in a formula in the same language. Note however that for projections, tractability necessitates very strong restrictions (it only holds for $ §GAI§₁ $ and $ §1-LPT§^{§lin§} $.

An important direction for future work is to study the properties of the various languages studied here with respect to machine learning: in some context, preferences can be learnt, either through some interaction with the current user of a system, or from data gathered during past interactions. The complexity of this learning phase can influence the choice of preference model, depending on the type of interaction and on the amount of data available, and also on the computational complexity of the learning algorithms. Preliminary results about the complexity of learning CP-nets, GAIs, LP-trees can be found in e.g.~\cite{Boothetal:ecai10,KoricheZanuttini:aij10,Chevaleyreetal:plbook11,Bigotetal:pl12,AlanaziMouhoubZilles:ijcai16,AllenSilerGoldsmith:flairs17,AlanaziMouhoubZilles:art-int20,FargierGimenezMenginNguyen:mpref22}.

\begin{acks}
We thank anonymous referees for their valuable comments. This work has benefited from the AI Interdisciplinary Institute ANITI. ANITI is funded by the French "Investing for the Future -- PIA3" program under grant agreement ANR-19-PI3A-0004. This work has also been supported by the PING/ACK project of the French National Agency for Research, grant agreement ANR-18-CE40-0011.
\end{acks}

\ifAMAI
\paragraph{Conflict of Interest:} The authors declare that they have no conflict of interest.
\fi

%\def\showURL#1{}\def \showDOI#1{}

%\bibliography{compil-prefs}
\newcommand{\etalchar}[1]{$^{#1}$}

\end{document}

%%%%%%%%%%%%%% compil-pref.bib %%%%%%%%%%%%%%%%%%%%%%%%%%%%%%%%%%%%%%%%%%%%%%%%%%%%%%%%%%%%%%%%%%%
% This file was created with JabRef 2.10.
% Encoding: UTF8

@InCollection{GottlobGrecoScarcello:BordeauxHamadiKohli:book14,
  Title                    = {Treewidth and Hypertree Width},
  Author                   = {Georg Gottlob and Gianluigi Greco and Francesco Scarcello},
  Pages                    = {3--38},

  Crossref                 = {BordeauxHamadiKohli:book14},
  Doi                      = {10.1017/CBO9781139177801.002}
}

@InCollection{BordeauxHamadiKohli:book14,
  Booktitle                = {Tractability: Practical Approaches to Hard Problems},
  Publisher                = {Cambridge University Press},
  Year                     = {2014},
  Editor                   = {Lucas Bordeaux and Youssef Hamadi and Pushmeet Kohli},

  Doi                      = {10.1017/CBO9781139177801.001}
}

@inproceedings{FargierGimenezMenginNguyen:mpref22,
	author = {Fargier, H{\'e}l{\`e}ne and Gimenez, Pierre{-}Fran{\c c}ois and Mengin, J{\'e}r{\^o}me and Nguyen, Bao Ngoc Le},
	crossref = {mpref22},
	title = {The Complexity of Unsupervised Learning of Lexicographic Preferences},
    note={Also on {CoRR}: 10.48550/arXiv.2209.11505}
}

@proceedings{mpref22,
	booktitle = {Proceedings of the 13th Multidisciplinary Workshop on Advances in Preference Handling},
	editor = {{\"O}zt{\"u}rk, Meltem and Viappiani, Paolo and Labreuche, Christophe and Destercke, S{\'e}bastien},
	title = {Proceedings of the 13th Multidisciplinary Workshop on Advances in Preference Handling},
	year = {2022}
}

@InProceedings{AllenSilerGoldsmith:flairs17,
  Title                    = {Learning Tree-Structured {CP}-Nets with Local Search},
  Author                   = {Thomas E. Allen and Cory Siler and Judy Goldsmith},
  Pages                    = {8--13},

  Crossref                 = {flairs17},
  Url                      = {https://aaai.org/ocs/index.php/FLAIRS/FLAIRS17/paper/view/15467}
}

@Article{AlanaziMouhoubZilles:art-int20,
  Title                    = {The Complexity of Exact Learning of Acyclic Conditional Preference Networks from Swap Examples},
  Author                   = {Eisa Alanazi and Malek Mouhoub and Sandra Zilles},
  Journal                  = {Artififical Intelligence},
  Year                     = {2020},
  Volume                   = {278},

  Doi                      = {10.1016/j.artint.2019.103182}
}

@InProceedings{AlanaziMouhoubZilles:ijcai16,
  Title                    = {The Complexity of Learning Acyclic {CP}-Nets},
  Author                   = {Eisa Alanazi and Malek Mouhoub and Sandra Zilles},
  Pages                    = {1361–1367},

  Crossref                 = {ijcai16},
  Url                      = {http://www.ijcai.org/Abstract/16/196}
}

@InProceedings{Bigotetal:pl12,
  Title                    = {Using and Learning {GAI}-Decompositions for Representing Ordinal Rankings.},
  Author                   = {Damien Bigot and Hélène Fargier and Jérôme Mengin and Bruno Zanuttini},
  Pages                    = {5–10},

  Crossref                 = {pl12}
}

@InCollection{Chevaleyreetal:plbook11,
  Title                    = {Learning Ordinal Preferences on Multiattribute Domains: the Case of {CP}-nets},
  Author                   = {Yann Chevaleyre and Frédéric Koriche and Jérôme Lang and Jérôme Mengin and Bruno Zanuttini},
  Pages                    = {273–296},

  Crossref                 = {FurnkranzHullermeier:book11}
}

@article{Valiant79,
  author    = {Leslie G. Valiant},
  title     = {The Complexity of Enumeration and Reliability Problems},
  journal   = {{SIAM} J. Comput.},
  volume    = {8},
  number    = {3},
  pages     = {410--421},
  year      = {1979},
  url       = {https://doi.org/10.1137/0208032},
  doi       = {10.1137/0208032},
  timestamp = {Wed, 14 Nov 2018 10:45:08 +0100},
  biburl    = {https://dblp.org/rec/journals/siamcomp/Valiant79.bib},
  bibsource = {dblp computer science bibliography, https://dblp.org}
}

@InProceedings{BesnardLangMarquis:ecai06,
  Title                    = {Variable forgetting in preference relations over combinatorial domains},
  Author                   = {Philippe Besnard and Jér\^ome Lang and Pierre Marquis},
  Pages                    = {763–764},

  Crossref                 = {ecai06}
}

@InProceedings{BesnardLangMarquis:mpref05,
  Title                    = {Variable forgetting in preference relations over combinatorial domains},
  Author                   = {Besnard, Philippe and Lang, Jérôme and Marquis, Pierre},

  Crossref                 = {mpref05}
}

@InProceedings{boothetal:ecai10,
  Title                    = {Learning conditionally lexicographic preference relations},
  Author                   = {Booth, Richard and Chevaleyre, Yann and Lang, Jérôme and Mengin, Jérôme and Sombattheera, Chattrakul},
  Pages                    = {269–274},

  Crossref                 = {ecai10}
}

@InProceedings{boutilierbrafmanhoospoole:uai99,
  Title                    = {Reasoning With Conditional Ceteris Paribus Preference Statements},
  Author                   = {Boutilier, Craig and Brafman, Ronen I. and Hoos, Holger H. and Poole, David},
  Pages                    = {71–80},

  Crossref                 = {uai99},
  Url                      = {https://dslpitt.org/uai/displayArticleDetails.jsp?mmnu=1&smnu=2&article\_id=155&proceeding\_id=15}
}

@InProceedings{BouveretEndrissLang:ijcai09,
  Title                    = {Conditional Importance Networks: {A} Graphical Language for Representing Ordinal, Monotonic Preferences over Sets of Goods},
  Author                   = {Bouveret, Sylvain and Endriss, Ulle and Lang, Jérôme},
  Pages                    = {67–72},

  Crossref                 = {ijcai09},
  Url                      = {http://ijcai.org/Proceedings/09/Papers/022.pdf}
}

@InProceedings{Brafmanetal:kr10,
  Title                    = {Finding the Next Solution in Constraint- and Preference-Based Knowledge Representation Formalisms},
  Author                   = {Brafman, Ronen I. and Rossi, Francesca and Salvagnin, Domenico and Venable, Kristen Brent and Walsh, Toby},

  Bibsource                = {DBLP, http://dblp.uni-trier.de},
  Crossref                 = {kr10},
  Ee                       = {http://aaai.org/ocs/index.php/KR/KR2010/paper/view/1348}
}

@InProceedings{brauninghullermeier:pl12,
  Title                    = {Learning Conditional Lexicographic Preference Trees},
  Author                   = {Bräuning, Michael and Hüllermeyer, Eyke},
  Pages                    = {11–15},

  Crossref                 = {pl12}
}

@InProceedings{braziunasboutilier:uai05,
  Title                    = {Local Utility Elicitation in {GAI} Models},
  Author                   = {Braziunas, Darius and Boutilier, Craig},
  Pages                    = {42–49},

  Crossref                 = {uai05}
}

@InProceedings{costemarquislangliberatoremarquis:kr04,
  Title                    = {Expressive Power and Succinctness of Propositional Languages for Preference Representation},
  Author                   = {Coste{-}Marquis, Sylvie and Lang, Jérôme and Liberatore, Paolo and Marquis, Pierre},
  Pages                    = {203–212},

  Crossref                 = {kr04},
  Url                      = {http://www.aaai.org/Library/KR/2004/kr04-023.php}
}

@InProceedings{CooperdeGivrySchiex:stacs20,
  Title                    = {Graphical Models: Queries, Complexity, Algorithms (Tutorial)},
  Author                   = {Martin C. Cooper and Simon de Givry and Thomas Schiex},
  Pages                    = {4:1–4:22},

  Crossref                 = {stacs20},
  Doi                      = {10.4230/LIPIcs.STACS.2020.4}
}

@InProceedings{darwiche:ijcai99,
  Title                    = {Compiling Knowledge into Decomposable Negation Normal Form},
  Author                   = {Darwiche, Adnan},
  Year                     = {1999},
  Pages                    = {284–289},

  Crossref                 = {ijcai99},
  Url                      = {http://ijcai.org/Proceedings/99-1/Papers/042.pdf}
}

@InProceedings{domshlakbrafman:kr02,
  Title                    = {{CP}-nets: Reasoning and Consistency Testing},
  Author                   = {Domshlak, Carmel and Brafman, Ronen I.},
  Pages                    = {121–132},

  Crossref                 = {kr02}
}

@InProceedings{fargiergimenezmengin:aaai18,
  Title                    = {Learning Lexicographic Preference Trees From Positive Examples},
  Author                   = {Fargier, Hélène and Gimenez, Pierre Francois and Mengin, Jérôme},
  Pages                    = {2959–2966},

  Abstract                 = {This paper considers the task of learning the preferences of users on a combinatorial set of alternatives, as it can be the case for example with online configurators. In many settings, what is available to the learner is a set of positive examples of alternatives that have been selected during past interactions. We propose to learn a model of the users' preferences that ranks previously chosen alternatives as high as possible. In this paper, we study the particular task of learning conditional lexicographic preferences. We present an algorithm to learn several classes of lexicographic preference trees, prove convergence properties of the algorithm, and experiment on both synthetic data and on a real-world bench in the domain of recommendation in interactive configuration.},
  Crossref                 = {aaai18},
  Url                      = {https://www.aaai.org/ocs/index.php/AAAI/AAAI18/paper/view/17272/16610}
}

@InProceedings{fargiermarquisniveauschmidt:aaai14,
  Title                    = {A Knowledge Compilation Map for Ordered Real-Valued Decision Diagrams},
  Author                   = {Fargier, Hélène and Marquis, Pierre and Niveau, Alexandre and Schmidt, Nicolas},
  Pages                    = {1049–1055},

  Crossref                 = {aaai14},
  Url                      = {http://www.aaai.org/ocs/index.php/AAAI/AAAI14/paper/view/8195}
}

@InProceedings{gogickautzpapadimitriouselman:ijcai95,
  Title                    = {The Comparative Linguistics of Knowledge Representation},
  Author                   = {Gogic, Goran and Kautz, Henry A. and Papadimitriou, Christos H. and Selman, Bart},
  Pages                    = {862–869},

  Crossref                 = {ijcai95},
  Url                      = {http://ijcai.org/Proceedings/95-1/Papers/111.pdf}
}

@InProceedings{gonzalesperny:kr04,
  Title                    = {{GAI} Networks for Utility Elicitation},
  Author                   = {Gonzales, Christophe and Perny, Patrice},
  Pages                    = {224–233},

  Crossref                 = {kr04}
}

@InProceedings{schiexfargierverfaillie:ijcai95,
  Title                    = {Valued Constraint Satisfaction Problems: Hard and Easy Problems},
  Author                   = {Schiex, Thomas and Fargier, Hélène and Verfaillie, Gérard},
  Pages                    = {631–639},

  Crossref                 = {ijcai95},
  Url                      = {http://ijcai.org/Proceedings/95-1/Papers/083.pdf}
}

@InProceedings{wilson:aaai04,
  Title                    = {Extending {CP}-Nets with Stronger Conditional Preference Statements},
  Author                   = {Wilson, Nic},
  Pages                    = {735–741},

  Crossref                 = {aaai04}
}

@InProceedings{wilson:ecai04,
  Title                    = {Consistency and Constrained Optimisation for Conditional Preferences},
  Author                   = {Wilson, Nic},
  Pages                    = {888–892},

  Crossref                 = {ecai04}
}

@InProceedings{wilson:ecai06,
  Title                    = {An Effcient Upper Approximation for Conditional Preference},
  Author                   = {Wilson, Nic},

  Crossref                 = {ecai06}
}

@Book{aleskerovbouyssoumonjardet:book07,
  Title                    = {Utility Maximization, Choice and Preference},
  Author                   = {Aleskerov, Fuad and Bouyssou, Denis and Monjardet, Bernard},
  Publisher                = {Springer-Verlag Berlin Heidelberg},
  Year                     = {2007},
  Edition                  = {2nd},

  Abstract                 = {The utility maximization paradigm forms the basis of many economic, psychological, cognitive and behavioral models. Since it was first devised in the eighteenth century, numerous examples have revealed the deficiencies of the concept. This book makes a contribution to overcome those deficiencies by taking into account insensitivity of measurement threshold and context of choice. It covers classic theory as a special, context-free case and gives a systematic overview of new models of utility maximization within a context-dependent threshold as well as related preference and choice models. The second edition has been updated to include the most recent developments and a new chapter on classic and new results for infinite sets. The presented models will be helpful to scientists in economics, decision making theory, social choice theory, behavioral and cognitive sciences, and related fields.}
}

@TechReport{boothetal:rep-irit09,
  Title                    = {Learning various classes of models of lexicographic orderings},
  Author                   = {Booth, Richard and Chevaleyre, Yann and Lang, Jérôme and Mengin, Jérôme and Sombattheera, Chattrakul},
  Institution              = {IRIT},
  Year                     = {2009},

  Address                  = {Université Paul Sabatier, Toulouse},
  Month                    = {juin},
  Number                   = {IRIT/RR–2009-21–FR},
  Type                     = {Rapport de recherche},

  Abstract                 = {We consider the problem of learning a user's ordinal preferences on multiattribute domains, assuming that the user's preferences may be modelled as a kind of lexicographic ordering. We introduce a general graphical representation called LP-structures which captures various natural classes of such ordering in which both the order of {\em importance} between attributes and the local preferences over each attribute may or may not be conditional on the values of other attributes. For each class we determine the Vapnik-Chernovenkis dimension, the communication complexity of learning preferences, and the complexity of identifying a model in the class consistent with some given user-provided examples.},
  Keywords                 = {preferences; machine learning; preferences learning; lexicographic ordering},
  Language                 = {anglais},
  Url                      = {http://www.irit.fr/publis/ADRIA/Boothetal_irit09.pdf}
}

@Article{boutilieretal:compint04,
  Title                    = {Preference-Based Constrained Optimization with {CP}-Nets},
  Author                   = {Boutilier, Craig and Brafman, Ronen I. and Domshlak, Carmel and Hoos, Holger H. and Poole, David},
  Journal                  = {Computational Intelligence},
  Year                     = {2004},
  Number                   = {2},
  Pages                    = {137–157},
  Volume                   = {20},

  Ee                       = {http://dx.doi.org/10.1111/j.0824-7935.2004.00234.x}
}

@Article{boutilieretal:jair04,
  Title                    = {{CP}-nets: a tool for representing and reasoning with conditional ceteris paribus preference statements},
  Author                   = {Boutilier, Craig and Brafman, Romen I. and Domshlak, Carmel and Hoos, Holger H. and Poole, David},
  Journal                  = {Journal of Artificial Intelligence Research},
  Year                     = {2004},
  Pages                    = {135–191},
  Volume                   = {21}
}

@Article{BrafmanDomshlak:jair03,
  Title                    = {Structure and Complexity in Planning with Unary Operators},
  Author                   = {Brafman, Ronen I. and Domshlak, Carmel},
  Journal                  = {Journal of Artificial Intelligence Research},
  Year                     = {2003},
  Pages                    = {315–349},
  Volume                   = {18},

  Doi                      = {10.1613/jair.1146}
}

@Article{brafmanetal:jair06,
  Title                    = {On graphical modeling of preference and importance},
  Author                   = {Brafman, Ronen I. and Domshlak, Carmel and Shimony, Solomon E.},
  Journal                  = {Journal of Artificial Intelligence Research},
  Year                     = {2006},
  Pages                    = {389–424},
  Volume                   = {25}
}

@Article{cadolidoniniliberatoreschaerf:jair00,
  Title                    = {Space Efficiency of Propositional Knowledge Representation Formalisms},
  Author                   = {Cadoli, Marco and Donini, Francesco M. and Liberatore, Paolo and Schaerf, Marco},
  Journal                  = {Journal of Artificial Intelligence Research},
  Year                     = {2000},
  Pages                    = {1–31},
  Volume                   = {13},

  Doi                      = {10.1613/jair.664}
}

@Article{darwichemarquis:jair02,
  Title                    = {A Knowledge Compilation Map},
  Author                   = {Darwiche, Adnan and Marquis, Pierre},
  Journal                  = {Journal of Artificial Intelligence Research},
  Year                     = {2002},
  Pages                    = {229–264},
  Volume                   = {17},

  Doi                      = {10.1613/jair.989}
}

@Article{domshlaketal:jheur06,
  Title                    = {Hard and soft constraints for reasoning about qualitative conditional preferences},
  Author                   = {Domshlak, Carmel and Prestwich, Steven David and Rossi, Francesca and Venable, Kristen Brent and Walsh, Toby},
  Journal                  = {J. Heuristics},
  Year                     = {2006},
  Number                   = {4-5},
  Pages                    = {263–285},
  Volume                   = {12},

  Ee                       = {http://dx.doi.org/10.1007/s10732-006-7071-x}
}

@TechReport{FargierMengin:irit21,
  Title                    = {A Knowledge Compilation Map for Conditional Preference Statements-based Languages},
  Author                   = {Fargier, Hélène and Mengin, Jérôme},
  Institution              = {{IRIT} - Institut de recherche en informatique de Toulouse},
  Year                     = {2021},
  Month                    = {February},
  Number                   = {{IRIT/RR}–2021–02–{FR}},
  Type                     = {Research Report},

  Url                      = {https://hal.archives-ouvertes.fr/hal-03133187}
}

@InProceedings{FargierMengin:aamas21,
  Title                    = {A Knowledge Compilation Map for Conditional Preference Statements-based Languages},
  Author                   = {Fargier, H{\'e}l{\`e}ne and Mengin, J{\'e}r{\^o}me},
  Booktitle                = {Proceedings of the 20th International Conference on Autonomous Agents and Multiagent Systems},
  Year                     = {2021},
  Editor                   = {Dignum, Frank and Endriss, Ulle and Lomuscio, Alessio and Now{\'e}, Ann},
  Publisher                = {International Foundation for Autonomous Agents and Multiagent Systems}
}

@Article{Fishburn:managsc74,
  Title                    = {Lexicographic Orders, Utilities and Decision Rules: A Survey},
  Author                   = {Fishburn, Peter C.},
  Journal                  = {Management Science},
  Year                     = {1974},
  Number                   = {11},
  Pages                    = {pp. 1442–1471},
  Volume                   = {20},

  Abstract                 = {Proceeding from a unifying framework, this paper reviews recent developments in lexicographic orders, preferences, utilities, probabilities and decision rules. Several new results are included.},
  Bdsk-url-1               = {http://www.jstor.org/stable/2629975},
  Copyright                = {Copyright © 1974 INFORMS},
  ISSN                     = {00251909},
  Jstor_formatteddate      = {Jul., 1974},
  Jstor_issuetitle         = {Theory Series},
  Language                 = {English},
  Publisher                = {INFORMS},
  Url                      = {http://www.jstor.org/stable/2629975}
}

@Article{FreuderHeffernanWallaceWilson:constraints10,
  Title                    = {Lexicographically-Ordered Constraint Satisfaction Problems},
  Author                   = {Freuder, Eugene C. and Heffernan, Robert and Wallace, Richard J. and Wilson, Nic},
  Journal                  = {Constraints},
  Year                     = {2010},
  Number                   = {1},
  Pages                    = {1–28},
  Volume                   = {15},

  Doi                      = {10.1007/s10601-009-9069-0}
}

@Article{gigerenzerg:psych-reviewg96,
  Title                    = {Reasoning the Fast and Frugal Way: Models of Bounded Rationality},
  Author                   = {Gigerenzer, Gerd and Goldstein, Daniel G.},
  Journal                  = {Psychological Review},
  Year                     = {1996},
  Number                   = {4},
  Pages                    = {650–669},
  Volume                   = {103}
}

@Article{goldsmithetal:jair08,
  Title                    = {The Computational Complexity of Dominance and Consistency in {CP}-nets},
  Author                   = {Goldsmith, Judy and Lang, Jérôme and Truszczynski, Miroslaw and Wilson, Nic},
  Journal                  = {Journal of Artificial Intelligence Research},
  Year                     = {2008},
  Pages                    = {403–432},
  Volume                   = {33},

  Bibsource                = {DBLP, http://dblp.uni-trier.de},
  Ee                       = {http://dx.doi.org/10.1613/jair.2627}
}

@Article{KoricheZanuttini:aij10,
  Title                    = {Learning conditional preference networks},
  Author                   = {Koriche, Frédéric and Zanuttini, Bruno},
  Journal                  = {Artificial Intelligence},
  Year                     = {2010},
  Number                   = {11},
  Pages                    = {685–703},
  Volume                   = {174},

  Doi                      = {10.1016/j.artint.2010.04.019}
}

@Article{langmenginxia:artint18,
  Title                    = {Voting on Multi-Issue Domains with Conditionally Lexicographic Preferences},
  Author                   = {Lang, Jérome and Mengin, Jérome and Xia, Lirong},
  Journal                  = {Artificial Intelligence},
  Year                     = {2018},
  Pages                    = {18–44},
  Volume                   = {265},

  Doi                      = {10.1016/j.artint.2018.05.004}
}

@Article{LukasiewiczMalizia:art-int19,
  Title                    = {Complexity Results for Preference Aggregation over (m){CP}-Nets: Pareto and Majority Voting},
  Author                   = {Lukasiewicz, Thomas and Malizia, Enrico},
  Journal                  = {Artificial Intelligence},
  Year                     = {2019},
  Pages                    = {101–142},
  Volume                   = {272},

  Abstract                 = {Aggregating preferences over combinatorial domains has many applications in artificial intelligence (AI). Given the inherent exponential nature of preferences over combinatorial domains, compact representation languages are needed to represent them, and (m)CP-nets are among the most studied ones. Sequential and global voting are two different ways of aggregating preferences represented via CP-nets. In sequential voting, agents' preferences are aggregated feature-by-feature. For this reason, sequential voting may exhibit voting paradoxes, i.e., the possibility to select sub-optimal outcomes when preferences have specific feature dependencies. To avoid paradoxes in sequential voting, one has often assumed the (quite) restrictive constraint of O-legality, which imposes a shared common topological order among all the agents' CP-nets. On the contrary, in global voting, CP-nets are considered as a whole during the preference aggregation process. For this reason, global voting is immune from the voting paradoxes of sequential voting, and hence there is no need to impose restrictions over the CP-nets' structure when preferences are aggregated via global voting. Sequential voting over O-legal CP-nets received much attention, and O-legality of CP-nets has often been required in other studies. On the other hand, global voting over non-O-legal CP-nets has not carefully been analyzed, despite it was explicitly stated in the literature that a theoretical comparison between global and sequential voting was highly promising and a precise complexity analysis for global voting has been asked for multiple times. In quite a few works, only very partial results on the complexity of global voting over CP-nets have been given. In this paper, we start to fill this gap by carrying out a thorough computational complexity analysis of global voting tasks, for Pareto and majority voting, over not necessarily O-legal acyclic binary polynomially connected (m)CP-nets. We show that all these problems belong to various levels of the polynomial hierarchy, and some of them are even in P or LOGSPACE. Our results are a notable achievement, given that the previously known upper bound for most of these problems was the complexity class EXPTIME. We provide various exact complexity results showing tight lower bounds and matching upper bounds for problems that (up to now) did not have any explicit non-obvious lower bound.},
  Doi                      = {10.1016/j.artint.2018.12.010}
}

@Article{schmittmartignon:jmlr06,
  Title                    = {On the Complexity of Learning Lexicographic Strategies},
  Author                   = {Schmitt, Michael and Martignon, Laura},
  Journal                  = {Journal of Machine Learning Research},
  Year                     = {2006},
  Pages                    = {55–83},
  Volume                   = {7}
}

@Article{wilson:aij11,
  Title                    = {Computational Techniques for a Simple Theory of Conditional Preferences},
  Author                   = {Wilson, Nic},
  Journal                  = {Artificial Intelligence},
  Year                     = {2011},
  Pages                    = {1053–1091},
  Volume                   = {175},

  Doi                      = {10.1016/j.artint.2010.11.018},
  ISSN                     = {0004-3702}
}

@Proceedings{uai05,
  Title                    = {Proceedings of the 21st Conference on Uncertainty in Artificial Intelligence ({UAI}'05)},
  Year                     = {2005},
  Editor                   = {Bacchus, Fahiem and Jaakkola, Tommi},
  Publisher                = {{AUAI} Press},

  Booktitle                = {Proceedings of the 21st Conference on Uncertainty in Artificial Intelligence ({UAI}'05)},
  ISBN                     = {0-9749039-1-4}
}

@Proceedings{ijcai09,
  Title                    = {Proceedings of the 21st International Joint Conference on Artificial Intelligence (IJCAI'09)},
  Year                     = {2009},
  Editor                   = {Boutilier, Craig},

  Booktitle                = {Proceedings of the 21st International Joint Conference on Artificial Intelligence ({IJCAI}'09)}
}

@Proceedings{ecai06,
  Title                    = {Proceedings of the 17th European Conference on Artificial Intelligence (ECAI 2006)},
  Year                     = {2006},
  Editor                   = {Brewka, Gerhard and Coradeschi, Silvia and Perini, Anna and Traverso, Paolo},
  Publisher                = {{IOS} Press},
  Series                   = {Frontiers in Artificial Intelligence and Applications},

  Booktitle                = {Proceedings of the 17th European Conference on Artificial Intelligence ({ECAI} 2006)}
}

@Proceedings{aaai14,
  Title                    = {Proceedings of the Twenty-Eighth {AAAI} Conference on Artificial Intelligence, July 27 -31, 2014, Québec City, Québec, Canada},
  Year                     = {2014},
  Editor                   = {Brodley, Carla E. and Stone, Peter},
  Publisher                = {{AAAI} Press},

  Booktitle                = {Proceedings of the Twenty-Eighth {AAAI} Conference on Artificial Intelligence, July 27 -31, 2014, Québec City, Québec, Canada}
}

@Proceedings{ecai10,
  Title                    = {Proceedings of the 19th European Conference on Artificial Intelligence ({ECAI} 2010)},
  Year                     = {2010},
  Editor                   = {Coelho, Helder and Studer, Rudi and Wooldridge, Michael},
  Publisher                = {IOS Press},
  Series                   = {Frontiers in Artificial Intelligence and Applications},
  Volume                   = {215},

  Booktitle                = {Proceedings of the 19th European Conference on Artificial Intelligence ({ECAI} 2010)},
  ISBN                     = {978-1-60750-605-8}
}

@Proceedings{ijcai99,
  Title                    = {Proceedings of the Sixteenth International Joint Conference on Artificial Intelligence ({IJCAI} 99)},
  Year                     = {1999},
  Editor                   = {Dean, Thomas},
  Publisher                = {Morgan Kaufmann},

  Bibsource                = {DBLP, http://dblp.uni-trier.de},
  Booktitle                = {Proceedings of the Sixteenth International Joint Conference on Artificial Intelligence ({IJCAI} 99)},
  ISBN                     = {1-55860-613-0}
}

@Proceedings{kr04,
  Title                    = {Proceedings of the Ninth International Conference on the Principles of Knowledge Representation and Reasoning},
  Year                     = {2004},
  Editor                   = {Dubois, Didier and Welty, Christopher A. and Williams, Mary{-}Anne},
  Publisher                = {AAAI Press},

  Booktitle                = {Proceedings of the Ninth International Conference on the Principles of Knowledge Representation and Reasoning},
  ISBN                     = {1-57735-199-1}
}

@Proceedings{kr02,
  Title                    = {Proceedings of the Eights International Conference on Principles of Knowledge Representation and Reasoning ({KR}-02)},
  Year                     = {2002},
  Editor                   = {Fensel, Dieter and Giunchiglia, Fausto and McGuinness, Deborah L. and Williams, Mary-Anne},
  Publisher                = {Morgan Kaufmann},

  Booktitle                = {Proceedings of the Eights International Conference on Principles of Knowledge Representation and Reasoning ({KR}-02)},
  ISBN                     = {1-55860-554-1}
}

@Proceedings{pl12,
  Title                    = {Preference Learning: Problems and Applications in {AI}. Proceedings of the {ECAI} 2012 workshop},
  Year                     = {2012},
  Editor                   = {Fürnkranz, Johannes and Hüllermeyer, Eyke},

  Booktitle                = {Preference Learning: Problems and Applications in {AI}. Proceedings of the {ECAI} 2012 workshop}
}

@Proceedings{uai99,
  Title                    = {Proceedings of the 15th Annual Conference on Uncertainty in Artificial Intelligence ({UAI}-99)},
  Year                     = {1999},
  Editor                   = {Laskey, Kathryn B. and Prade, Henri},
  Publisher                = {Morgan Kaufmann},

  Booktitle                = {Proceedings of the 15th Annual Conference on Uncertainty in Artificial Intelligence ({UAI}-99)},
  ISBN                     = {1-55860-614-9}
}

@Proceedings{kr10,
  Title                    = {Proceedings of the 12th International Conference on Principles of Knowledge Representation and Reasoning ({KR}'10)},
  Year                     = {2010},
  Editor                   = {Lin, Fangzhen and Sattler, Ulrike and Truszczyński, Miros{ła}v},
  Publisher                = {AAAI Press},

  Booktitle                = {Proceedings of the 12th International Conference on Principles of Knowledge Representation and Reasoning ({KR}'10)}
}

@Proceedings{ecai04,
  Title                    = {Proceedings of the 16th Eureopean Conference on Artificial Intelligence ({ECAI} 2004)},
  Year                     = {2004},
  Editor                   = {de Mántaras, Ramón López and Saitta, Lorenza},
  Publisher                = {{IOS} Press},

  Booktitle                = {Proceedings of the 16th Eureopean Conference on Artificial Intelligence ({ECAI} 2004)},
  ISBN                     = {1-58603-452-9}
}

@Proceedings{aaai04,
  Title                    = {Proceedings of the Nineteenth National Conference on Artificial Intelligence ({AAAI}'04)},
  Year                     = {2004},
  Editor                   = {McGuinness, Deborah L. and Ferguson, George},
  Publisher                = {{AAAI} Press / The {MIT} Press},

  Booktitle                = {Proceedings of the Nineteenth National Conference on Artificial Intelligence ({AAAI}'04)},
  ISBN                     = {0-262-51183-5}
}

@Proceedings{aaai18,
  Title                    = {Proceedings of the Thirty-Second {AAAI} Conference on Artificial Intelligence ({AAAI 2018})},
  Year                     = {2018},
  Editor                   = {McIlraith, Sheila A. and Weinberger, Kilian Q.},
  Publisher                = {{AAAI} Press},

  Booktitle                = {Proceedings of the Thirty-Second {AAAI} Conference on Artificial Intelligence ({AAAI 2018})},
  Url                      = {https://www.aaai.org/ocs/index.php/AAAI/AAAI18/schedConf/presentations}
}

@Proceedings{ijcai95,
  Title                    = {Proceedings of the Fourteenth International Joint Conference on Artificial Intelligence ({IJCAI} 95)},
  Year                     = {1995},
  Editor                   = {Mellish, Chris S.},
  Publisher                = {Morgan Kaufmann},

  Url                      = {http://ijcai.org/proceedings/1995-1}
}

@Proceedings{mpref05,
  Title                    = {Proceedings of the {IJCAI} Multidisciplinary Workshop on Advances in Preference Handling ({MPREF}'05)},
  Year                     = {2005},

  Booktitle                = {Proceedings of the {IJCAI} Multidisciplinary Workshop on Advances in Preference Handling ({MPREF}'05)}
}

@Proceedings{stacs20,
  Title                    = {Proceedings of the 37th International Symposium on Theoretical Aspects of Computer Science ({STACS} 2020)},
  Booktitle                = {Proceedings of the 37th International Symposium on Theoretical Aspects of Computer Science ({STACS} 2020)},
  Year                     = {2020},
  Editor                   = {Christophe Paul and Markus Bläser},
  Publisher                = {Schloss Dagstuhl - Leibniz-Zentrum für Informatik},
  Series                   = {LIPIcs},
  Volume                   = {154}
}

@article{FaliszewskiH09,
	author    = {Piotr Faliszewski and
	Lane A. Hemaspaandra},
	title     = {The complexity of power-index comparison},
	journal   = {Theor. Comput. Sci.},
	volume    = {410},
	number    = {1},
	pages     = {101--107},
	year      = {2009},
	url       = {https://doi.org/10.1016/j.tcs.2008.09.034},
	doi       = {10.1016/j.tcs.2008.09.034},
	timestamp = {Thu, 14 Oct 2021 09:20:21 +0200},
	biburl    = {https://dblp.org/rec/journals/tcs/FaliszewskiH09.bib},
	bibsource = {dblp computer science bibliography, https://dblp.org}
}

@book{Papadimitriou94,
  author    = {Christos H. Papadimitriou},
  title     = {Computational complexity},
  publisher = {Addison-Wesley},
  year      = {1994},
  isbn      = {978-0-201-53082-7},
  timestamp = {Fri, 08 Apr 2011 18:21:01 +0200},
  biburl    = {https://dblp.org/rec/books/daglib/0072413.bib},
  bibsource = {dblp computer science bibliography, https://dblp.org}
}

@book{CramaH11,
  author    = {Yves Crama and
               Peter L. Hammer},
  title     = {Boolean Functions - Theory, Algorithms, and Applications},
  series    = {Encyclopedia of mathematics and its applications},
  volume    = {142},
  publisher = {Cambridge University Press},
  year      = {2011},
  url       = {http://www.cambridge.org/gb/knowledge/isbn/item6222210/?site\_locale=en\_GB},
  isbn      = {978-0-521-84751-3},
  timestamp = {Tue, 03 Jan 2012 17:21:04 +0100},
  biburl    = {https://dblp.org/rec/books/daglib/0028067.bib},
  bibsource = {dblp computer science bibliography, https://dblp.org}
}

@inproceedings{rossi2004mcp,
  title={m{CP} nets: Representing and reasoning with preferences of multiple agents},
  author={Rossi, Francesca and Venable, Kristen Brent and Walsh, Toby},
  booktitle={AAAI},
  volume={4},
  pages={729--734},
  year={2004}
}

@inproceedings{BacchusG95,
  author    = {Fahiem Bacchus and
               Adam J. Grove},
 editor    = {Philippe Besnard and
             Steve Hanks},
  title     = {Graphical models for preference and utility},
  booktitle = {{UAI}},
  pages     = {3--10},
  publisher = {Morgan Kaufmann},
  year      = {1995}
}

@inproceedings{GonzalesP04,
  author    = {Christophe Gonzales and
               Patrice Perny},
  editor    = {Didier Dubois and
               Christopher A. Welty and
              Mary{-}Anne Williams},
  title     = {{GAI} Networks for Utility Elicitation},
  booktitle = {{KR}2004)},
  pages     = {224--234},
  year      = {2004}
}

@Book{FurnkranzHullermeier:book11,
  Title                    = {Preference learning},
  Editor                   = {Johannes Fürnkranz and Heike Hüllermeier},
  Publisher                = {Springer},
  Year                     = {2011},

  Booktitle                = {Preference learning}
}

@Proceedings{ijcai16,
  Title                    = {Proceedings of the Twenty-Fifth International Joint Conference on Artificial Intelligence ({IJCAI} 2016)},
  Year                     = {2016},
  Editor                   = {Subbarao Kambhampati},
  Publisher                = {{IJCAI/AAAI} Press},

  Booktitle                = {Proceedings of the Twenty-Fifth International Joint Conference on Artificial Intelligence ({IJCAI} 2016)},
  ISBN                     = {978-1-57735-770-4},
  Url                      = {http://www.ijcai.org/Proceedings/2016}
}

@InProceedings{flairs17,
  Booktitle                = {Proceedings of the Thirtieth International Florida Artificial Intelligence
 Research Society Conference ({FLAIRS} 2017)},
  Year                     = {2017},
  Editor                   = {Vasile Rus and Zdravko Markov},
  Publisher                = {{AAAI} Press}
}

@book{Diestel,
  author    = {Reinhard Diestel},
  title     = {Graph Theory},
  publisher = {Springer},
  year      = {2016},
  edition = {5}
}